\newcommand{\ba}{\bm{a}}
\newcommand{\br}{\bm{r}}
\newcommand{\bs}{\bm{s}}
\newcommand{\bv}{\bm{v}}
\newcommand{\bw}{\bm{w}}
\newcommand{\by}{\bm{y}}
\newcommand{\bA}{\bm{A}}
\newcommand{\bB}{\bm{B}}
\newcommand{\bF}{\bm{F}}
\newcommand{\bL}{\bm{L}}
\newcommand{\bM}{\bm{M}}
\newcommand{\bQ}{\bm{Q}}
\newcommand{\bR}{\bm{R}}
\newcommand{\bS}{\bm{S}}
\newcommand{\bU}{\bm{U}}
\newcommand{\bV}{\bm{V}}
\newcommand{\bX}{\bm{X}}
\newcommand{\bDelta}{\bm{\Delta}}
\newcommand{\bSigma}{\bm{\Sigma}}
\newcommand{\cA}{\mathcal{A}}
\newcommand{\cI}{\mathcal{I}}
\newcommand{\cL}{\mathcal{L}}
\newcommand{\cN}{\mathcal{N}}
\newcommand{\cP}{\mathcal{P}}
\newcommand{\cS}{\mathcal{S}}
\newcommand{\RR}{\mathbb{R}}
\newcommand{\mfk}{\mathfrak} 
\newcommand{\zero}{\bm{0}}
\newcommand{\argmin}{\mathop{\mathrm{argmin}}}
\newcommand{\minimize}{\mathop{\mathrm{minimize}}}
\DeclareMathOperator{\dist}{\mathrm{dist}}
\DeclareMathOperator{\fro}{\mathsf{F}}
\DeclareMathOperator{\GL}{\mathrm{GL}}
\DeclareMathOperator{\op}{\mathsf{op}}
\DeclareMathOperator{\rank}{\mathrm{rank}}
\DeclareMathOperator{\tr}{\mathrm{tr}}
\providecommand{\tabularnewline}{\\}
\let\tilde\widetilde
\theoremstyle{plain}\newtheorem{lemma}{\textbf{Lemma}} 
\newtheorem{proposition}{\textbf{Proposition}}
\newtheorem{theorem}{\textbf{Theorem}}\setcounter{theorem}{0}
\newtheorem{corollary}{\textbf{Corollary}}
\theoremstyle{definition}\newtheorem{definition}{\textbf{Definition}}
\theoremstyle{remark}\newtheorem{remark}{\textbf{Remark}}
\definecolor{tian}{RGB}{0,150,0}
\definecolor{cm}{RGB}{250,0,200}
\definecolor{yc}{RGB}{250,25,20}
\begin{document}

\title{Low-Rank Matrix Recovery with Scaled Subgradient Methods: Fast and Robust Convergence Without the Condition Number  }

\author
{
	Tian Tong\thanks{Department of Electrical and Computer Engineering, Carnegie Mellon University, Pittsburgh, PA 15213, USA; Emails:
		\texttt{\{ttong1,yuejiec\}@andrew.cmu.edu}.} \\
		Carnegie Mellon University \\
		\and
	Cong Ma\thanks{Department of Statistics and Department of Electrical Engineering and Computer Sciences, UC Berkeley, Berkeley, CA 94720, USA; Email:
		\texttt{congm@berkeley.edu}.} \\
		UC Berkeley \\
		\and
	Yuejie Chi\footnotemark[1] \\
	Carnegie Mellon University
}
 \date{October 2020; Revised April 2021}

\maketitle

\begin{abstract}

Many problems in data science can be treated as estimating a low-rank matrix from highly incomplete, sometimes even corrupted, observations. One popular approach is to resort to matrix factorization, where the low-rank matrix factors are optimized via first-order methods over a smooth loss function, such as the residual sum of squares. While tremendous progress has been made in recent years, the natural smooth formulation suffers from two sources of ill-conditioning, where the iteration complexity of gradient descent scales poorly both with the dimension as well as the condition number of the low-rank matrix. Moreover, the smooth formulation is not robust to corruptions. In this paper, we propose {\em scaled} subgradient methods to minimize a family of nonsmooth and nonconvex formulations---in particular, the residual sum of absolute errors---which is guaranteed to converge at a fast rate that is almost dimension-free and independent of the condition number, even in the presence of corruptions.
We illustrate the effectiveness of our approach when the observation operator satisfies certain mixed-norm restricted isometry properties, and derive state-of-the-art performance guarantees for a variety of problems such as robust low-rank matrix sensing and quadratic sampling.

\medskip

\noindent\textbf{Keywords:} low-rank matrix recovery, nonsmooth and nonconvex optimization, residual sum of absolute errors, scaled subgradient methods

\end{abstract}

\section{Introduction}

Many problems in data science can be treated as estimating a low-rank matrix $\bX_\star\in\RR^{n_1\times n_2}$ from highly incomplete, sometimes even corrupted, observations $\by=\{y_i\}_{i=1}^m$ given by
\begin{align} 
y_i \approx \cA_i(\bX_\star), \qquad 1\le i\le m.
\end{align} 
Here, $\cA(\cdot) =\{\cA_i(\cdot)\}_{i=1}^m : \RR^{n_1 \times n_2} \mapsto \RR^{m}$ is the observation operator that models the measurement process. Instead of operating in the full matrix space, i.e.~$\RR^{n_1 \times n_2}$, a memory-efficient approach is to resort to low-rank matrix factorization, by writing $\bX_{\star} = \bL_{\star}\bR_{\star}^{\top}$, if the rank $r$ of $\bX_{\star}$ is known \emph{a priori}, where $\bL_{\star}\in \RR^{n_1\times r}$ and $\bR_{\star}\in \RR^{n_2\times r}$ are of a size that is proportional to the degrees of freedom of the low-rank matrix. Furthermore, the low-rank factors can be found by optimizing a smooth loss function, such as the residual sum of squares
\begin{align}
\minimize_{\bL\in\RR^{n_1 \times r},\bR\in\RR^{n_2 \times r}} \quad \sum_{i=1}^{m} \left(\cA_i(\bL\bR^{\top}) - y_i\right)^2,\label{eq:rss}
\end{align}
using first-order methods (e.g.~gradient descent). While tremendous progress has been made in recent years \cite{chi2019nonconvex}, applying vanilla gradient descent to the above smooth formulation suffers from two sources of ill-conditioning that preclude a desirable computational efficiency from classical optimization principles: 
\begin{itemize}
\item Due to the heavy-tailed nature of certain measurement operators, such as those encountered in phase retrieval \cite{candes2015phase} and quadratic sampling \cite{sanghavi2017local}, the least-squares formulation \eqref{eq:rss} may suffer from a large smoothness parameter (and hence a large condition number of the loss function) that scales at least linearly with respect to the ambient dimension, leading to a conservative choice of stepsizes and a high iteration complexity when the problem dimension is large. 
\item Due to the composite nature of the formulation \eqref{eq:rss}, the iteration complexity of vanilla gradient descent is further exacerbated by the condition number of the underlying low-rank matrix $\bX_{\star}$, which could be large in many applications of interest.
\end{itemize}
While there have been encouraging activities \cite{charisopoulos2019low,ma2017implicit,li2018nonconvex,tong2020accelerating} that try to alleviate these issues regarding ill-conditioning, none of the existing first-order approaches are able to simultaneously remove both sources of ill-conditioning and achieve fast convergence. Therefore, the goal of the current paper is to develop first-order methods that are guaranteed to converge at a fast rate that is almost dimension-free and independent of the condition number, even in the presence of corruptions.

\subsection{Main contributions}

In this paper, we propose to minimize the following {\em nonsmooth and nonconvex} loss function known as the {\em least absolute deviations}, which measures the residual sum of absolute errors 
\begin{align}
\minimize_{\bL\in \RR^{n_1 \times r},\bR \in \RR^{n_2 \times r}} \quad f(\bL\bR^{\top}) \coloneqq \sum_{i=1}^{m} \left|\cA_i(\bL\bR^{\top})-y_i\right|,
\end{align}
via a {\em scaled subgradient method}:
\begin{align}
\begin{split} 
\bL_{t+1} & \coloneqq\bL_{t}-\eta_{t}\bS_{t}\bR_{t}(\bR_{t}^{\top}\bR_{t})^{-1}, \\
\bR_{t+1} & \coloneqq\bR_{t}-\eta_{t}\bS_{t}^{\top}\bL_{t}(\bL_{t}^{\top}\bL_{t})^{-1}.
\end{split}\label{eq:ssm}
\end{align}
Here, $\bS_{t}\in\partial f(\bL_{t}\bR_{t}^{\top})$ is a subgradient of $f(\bX) \coloneqq \sum_{i=1}^{m} \left|\cA_i(\bX)-y_i\right|$ at $\bL_{t}\bR_{t}^{\top}$, and $\eta_t >0$ is a sequence of carefully-chosen stepsizes. Compared with vanilla subgradient methods, our new method \eqref{eq:ssm} scales or preconditions the search directions $\bS_{t}\bR_{t}$ and $\bS_{t}^{\top}\bL_{t}$ by $(\bR_{t}^{\top}\bR_{t})^{-1}$ and $(\bL_{t}^{\top}\bL_{t})^{-1}$, respectively.\footnote{Under appropriate conditions, the inverse matrices always exist; in practice, one can use the pseudo-inverse matrices to avoid numerical instabilities.} As explained in \cite{tong2020accelerating} where a similar preconditioning trick was employed for smooth formulations, the scaled subgradient enables better search directions and therefore larger stepsizes. Our main results can be summarized as follows:
\begin{itemize}
\item Under general geometric assumptions on $f(\cdot)$ such as restricted rank-$r$ Lipschitz continuity and sharpness conditions, we demonstrate that the convergence rate of scaled subgradient methods using both Polyak's and geometrically decaying stepsizes is {\em independent} of the condition number of $\bX_{\star}$. 
\item Instantiating our theory under the mixed-norm restricted isometry property (RIP) of the measurement operator, we demonstrate state-of-the-art computational guarantees for low-rank matrix sensing and quadratic sampling even when the observations are noisy and corrupted by outliers. This leads to improvements over the computational complexity of scaled gradient methods in \cite{tong2020accelerating} for heavy-tailed measurement ensembles, as well as of vanilla subgradient methods in \cite{charisopoulos2019low}. Table~\ref{tab:Performance-guarantees} provides a detailed comparison of the local iteration complexities of the proposed scaled subgradient method in comparison with these prior algorithms, highlighting its robustness to heavy-tailed observations, outliers, as well as a large condition number of the true matrix $\bX_{\star}$.
\end{itemize}
Our work leverages exciting advances in nonsmooth optimization \cite{charisopoulos2019low} and scaled first-order methods \cite{tong2020accelerating} for low-rank matrix recovery. Our arguments are concise, which avoid the need of sophisticated trajectory-dependent analysis as have been used in \cite{ma2017implicit,li2018nonconvex} to achieve rapid and robust convergence guarantees.

\begin{table}[t]
\centering %
\begin{tabular}{c|c|c||c|c}
\hline 
 & \multicolumn{2}{c||}{matrix sensing}    & \multicolumn{2}{c}{quadratic sampling}\tabularnewline 
\hline 
Algorithms & without corruptions & with corruptions & without corruptions & with corruptions \tabularnewline
\hline \hline
\texttt{GD}  & \multirow{2}{*}{$\kappa\log\frac{1}{\epsilon}$} & \multirow{2}{*}{N/A} &  \multirow{2}{*}{$r^2\kappa^2 \log\frac{1}{\epsilon}$} & \multirow{2}{*}{N/A}\tabularnewline
\cite{tu2015low,li2018nonconvex}  &  &  &  & \tabularnewline
\hline 
\texttt{ScaledGD}  & \multirow{2}{*}{$\log\frac{1}{\epsilon}$} & \multirow{2}{*}{N/A} &  \multirow{2}{*}{$\mathrm{poly}(n) \log\frac{1}{\epsilon}$} & \multirow{2}{*}{N/A}\tabularnewline
\cite{tong2020accelerating}  &  &  &  & \tabularnewline
\hline 
\texttt{SM}  & \multirow{2}{*}{$\kappa\log\frac{1}{\epsilon}$} & \multirow{2}{*}{$\frac{\kappa}{(1-2p_s)^2}\log\frac{1}{\epsilon}$}   & \multirow{2}{*}{$r \kappa \log\frac{1}{\epsilon}$} & \multirow{2}{*}{$\frac{r\kappa}{(1-2p_s)^2}\log\frac{1}{\epsilon}$}\tabularnewline
\cite{charisopoulos2019low,li2020nonconvex}  &  &  &  & \tabularnewline
\hline 
\texttt{ScaledSM}  & \multirow{2}{*}{$\log\frac{1}{\epsilon}$} & \multirow{2}{*}{$\frac{1}{(1-2p_s)^2}\log\frac{1}{\epsilon}$}   & \multirow{2}{*}{$r \log\frac{1}{\epsilon}$} & \multirow{2}{*}{$\frac{r}{(1-2p_s)^2}\log\frac{1}{\epsilon}$}\tabularnewline
(this paper)  &  &  &  & \tabularnewline
\hline
\end{tabular}\vspace{0.04in}
\caption{Local iteration complexities of the proposed scaled subgradient method (\texttt{ScaledSM}) in comparison with prior algorithms for matrix sensing and quadratic sampling. \texttt{ScaledSM} outperforms the vanilla subgradient method (\texttt{SM}) by a factor of $\kappa$ in both problems, while outperforms scaled gradient descent (\texttt{ScaledGD}), and \texttt{GD} with additional robustness guarantees. Here, $n=\max\{n_1,n_2\}$, $r$ is the rank, $\kappa$ is the condition number of $\bX_{\star}$, and $0\le p_s<1/2$ is the fraction of outliers. We say that the output $\bX$ of an algorithm reaches $\epsilon$-accuracy, if it satisfies $\|\bX-\bX_{\star}\|_{\fro}\le\epsilon\sigma_{r}(\bX_{\star})$, where $\sigma_{r}(\bX_{\star})$ denotes the $r$-th largest singular value of $\bX_{\star}$.
\label{tab:Performance-guarantees}}
\end{table}

\subsection{Related work}

Low-rank matrix recovery has been a target of intense interest in the last decade; we invite the readers to \cite{davenport2016overview,chen2018harnessing,chi2019nonconvex} for recent overviews, and limit our discussions to the most relevant literature in the sequel.

\paragraph{Nonsmooth formulations for low-rank matrix recovery.} Nonsmooth objective functions, such as the least absolute deviations, have been adopted earlier in both convex and nonconvex formulations of low-rank matrix recovery, including phase retrieval \cite{hand2017phaselift,davis2017nonsmooth,qu2017convolutional,zhang2017reshaped,duchi2019solving}, blind deconvolution \cite{diaz2019nonsmooth}, quadratic sampling \cite{li2017low,chi2016kaczmarz,charisopoulos2019low,bahmani2020low}, low-rank matrix sensing \cite{charisopoulos2019low,li2013compressed,wright2013compressive,li2020nonconvex}, robust synchronization \cite{wang2013exact}, to name a few. 
Our work is most closely related to and generalizes the vanilla subgradient method in \cite{charisopoulos2019low}, by establishing novel performance guarantees of \emph{scaled} subgradient methods for robust low-rank matrix recovery.   

\paragraph{Scaled first-order methods for low-rank matrix recovery.} Variants of the scaled gradient methods are proposed in \cite{mishra2012riemannian,tanner2016low,tong2020accelerating} for minimizing the least-squares formulation \eqref{eq:rss}, where strong statistical and computational complexities are first established in \cite{tong2020accelerating}. To the best of our knowledge, the current paper is the first work that provides rigorous statistical and computational guarantees for scaled subgradient methods for addressing nonsmooth formulations. When it comes to problems with heavy-tail observations such as quadratic sampling, while it is possible to establish faster convergence rates of vanilla gradient descent over the smooth least-squares loss function through a tailored analysis \cite{ma2017implicit,li2018nonconvex} via leave-one-out arguments, it is unclear if similar analyses are viable for scaled gradient methods (\texttt{ScaledGD}) in \cite{tong2020accelerating}. Unfortunately, a direct application of the performance guarantee of \texttt{ScaledGD} on minimizing the smooth least-squares loss function leads to a much slower rate in terms of the problem dimension (see Table~\ref{tab:Performance-guarantees}) for quadratic sampling. In contrast, our analysis for scaled subgradient methods yields strong guarantees in a more straightforward manner since the nonsmooth loss function has much better geometric properties \cite{charisopoulos2019low}.

\paragraph{Robust low-rank matrix recovery via nonconvex optimization.} A pleasant side benefit of nonsmooth formulations is the added robustness to adversarial outliers under a simple algorithm design -- the low-rank factors are updated essentially in the same manner regardless of the presence of outliers. In comparison, other nonconvex methods based on smooth formulations often need to introduce some special treatments to mitigate outliers before updating the low-rank factors, e.g.~truncation or thresholding \cite{zhang2016provable,li2020non,li2020nonconvex}, which can be cumbersome to tune properly.
 
\paragraph{Condition number independent rate of convergence.} It is well-known that first-order methods such as gradient descent exhibit poor scaling with respect to the condition number of the low-rank matrix. Possible remedies include alternating least-squares in the factored space \cite{jain2013low,hardt2014fast}, or spectral methods over the matrix space \cite{jain2010guaranteed}. However, these approaches either require the inversion of a large matrix or a higher memory footprint, compared with the scaled first-order methods adopted herein.

\subsection{Paper organization and notation}

The rest of this paper is organized as follows. Section~\ref{sec:problem_formulation} describes the proposed scaled subgradient method and its connections to existing methods. 
Section~\ref{sec:main-results} provides the theoretical guarantees for the scaled subgradient method in terms of both statistical and computational complexities, which are then instantiated to robust low-rank matrix sensing and quadratic sampling. Section~\ref{sec:numerical} illustrates the superior empirical performance of the proposed method. Finally, we conclude in Section~\ref{sec:discussion}. The proofs are deferred to the appendix. 

\paragraph{Notation.} Throughout the paper, we use boldfaced symbols for vectors and matrices. For a vector $\bv$, we use $\|\bv\|_{p}$ to denote its $\ell_p$ norm. For any matrix $\bA$, we use $\sigma_{i}(\bA)$ to denote its $i$-th largest singular value, and let $\bA_{i,\cdot}$ and $\bA_{\cdot,j}$ denote its $i$-th row and $j$-th column, respectively. In addition, $\|\bA\|_{\op}$ and $\|\bA\|_{\fro}$ denote the spectral norm and the Frobenius norm of a matrix $\bA$, respectively. For matrices $\bA,\bB$ of the same size, we use $\langle\bA,\bB\rangle=\sum_{i,j}\bA_{i,j}\bB_{i,j}=\tr(\bA^{\top}\bB)$ to denote their inner product, where $\tr(\cdot)$ denotes the trace. The set of invertible matrices in $\RR^{r\times r}$ is denoted by $\GL(r)$.

\section{Problem Formulation and Algorithms} \label{sec:problem_formulation}

In this section, we formulate the low-rank matrix recovery problem, followed by a detailed description of the proposed scaled subgradient method.

\subsection{Problem formulation}

Let $\bX_{\star}\in\RR^{n_{1}\times n_{2}}$ be the ground truth rank-$r$ matrix, whose compact singular value decomposition (SVD) is given by
\begin{align}
\bX_{\star} = \bU_{\star}\bSigma_{\star}\bV_{\star}^{\top},
\end{align} 
where $\bU_{\star}\in\RR^{n_{1}\times r}$ and $\bV_{\star}\in\RR^{n_{2}\times r}$ are composed of $r$ left and right singular vectors, respectively, and $\bSigma_{\star}\in\RR^{r\times r}$ is a diagonal matrix consisting of $r$ singular values of $\bX_{\star}$ organized in a non-increasing order, i.e.~$\sigma_{1}(\bX_{\star})\ge\dots\ge\sigma_{r}(\bX_{\star})>0$. The condition number of $\bX_{\star}$ is thus defined as
\begin{align}
\kappa\coloneqq\sigma_{1}(\bX_{\star})/\sigma_{r}(\bX_{\star}).\label{eq:kappa}
\end{align} 
Without loss of generality, we define the ground truth low-rank factors as 
\begin{align}
\bL_{\star}\coloneqq\bU_{\star}\bSigma_{\star}^{1/2},\quad \text{and} \quad \bR_{\star}\coloneqq\bV_{\star}\bSigma_{\star}^{1/2},
\end{align}
so that $\bX_{\star}=\bL_{\star}\bR_{\star}^{\top}$. Moreover, we denote the ground truth stacked factor matrix as 
\begin{align}
\bF_{\star}\coloneqq[\bL_{\star}^{\top},\bR_{\star}^{\top}]^{\top}\in\RR^{(n_{1}+n_{2})\times r}.
\end{align}

Assume that we have access to a number of observations $\by =\{y_i\}_{i=1}^m$ of $\bX_{\star}$, given as
\begin{align}
y_i = \cA_i(\bX_{\star}) + w_i + s_i, \qquad 1\le i\le m,\label{eq:entrywise_observation}
\end{align}
or equivalently,
\begin{align}
\by = \cA(\bX_{\star}) + \bw + \bs,\label{eq:observation_model}
\end{align}
where $\cA(\bX_{\star})=\{\cA_i(\bX_{\star})\}_{i=1}^m$ is the measurement ensemble, $\bw=\{w_i\}_{i=1}^m$ denotes the bounded noise, and $\bs=\{s_i\}_{i=1}^m$ models arbitrary corruptions. The goal of low-rank matrix recovery is to reconstruct $\bX_{\star}$ from the noisy and corrupted observations $\by$ in a statistically and computationally efficient manner.

\subsection{Scaled subgradient method}

Consider the following nonsmooth and nonconvex optimization problem over the factors
\begin{align}
\minimize_{\bL\in\RR^{n_{1}\times r}, \bR\in\RR^{n_{2}\times r}}\quad\cL(\bL,\bR)\coloneqq f(\bL\bR^{\top}),\label{eq:opt}
\end{align}
where $f(\cdot)$ is a nonsmooth surrogate of the observation residuals. Of particular interest is the residual sum of absolute errors, defined as
\begin{align}
f(\bX) \coloneqq \|\cA(\bX)-\by\|_{1}.\label{eq:loss}
\end{align}
Correspondingly, the minimizer is called the least absolute deviations (LAD) solution.

Let us denote the stacked factor matrix in the $t$-th iterate as $\bF_{t}\coloneqq[\bL_{t}^{\top},\bR_{t}^{\top}]^{\top}$. Given an initialization $\bF_{0}=[\bL_{0}^{\top},\bR_{0}^{\top}]^{\top}$, the proposed scaled subgradient method (\texttt{ScaledSM}) proceeds as 
\begin{align}
\begin{split}
\bL_{t+1} & \coloneqq\bL_{t}-\eta_{t}\bS_{t}\bR_{t}(\bR_{t}^{\top}\bR_{t})^{-1}, \\
\bR_{t+1} & \coloneqq\bR_{t}-\eta_{t}\bS_{t}^{\top}\bL_{t}(\bL_{t}^{\top}\bL_{t})^{-1},
\end{split}\label{eq:scaledSM}
\end{align}
where $\bS_{t}\in\partial f(\bL_{t}\bR_{t}^{\top})$ is a subgradient of $f(\cdot)$ at $\bL_{t}\bR_{t}^{\top}$ (and hence $\bS_t\bR_t \in \partial_{\bL}\cL(\bL_t,\bR_t)$ and $\bS_t^{\top}\bL_t \in \partial_{\bR}\cL(\bL_t,\bR_t)$), and $\eta_{t}>0$ is some properly selected stepsize, which we discuss next.

\paragraph{Stepsize schedules.} We consider the following two choices of stepsize schedules:
\begin{itemize}
\item If we know the optimal value $f(\bX_\star)$, we can invoke the following Polyak's stepsize, given by
\begin{align}
\eta_{t}^{\mathsf{P}}\coloneqq\frac{f(\bL_{t}\bR_{t}^{\top})-f(\bX_{\star})}{\|\bS_{t}\bR_{t}(\bR_{t}^{\top}\bR_{t})^{-1/2}\|_{\fro}^{2}+\|\bS_{t}^{\top}\bL_{t}(\bL_{t}^{\top}\bL_{t})^{-1/2}\|_{\fro}^{2}},\label{eq:Polyak_stepsize}
\end{align}
where the denominator is the squared norm of the subgradient under a scaled metric concerted with the preconditioners. This schedule is implementable, for example, when the observations are noise-free, leading to $f(\bX_\star)=0$. However, when the observations are noisy and corrupted, it is not viable to know $f(\bX_\star)$ beforehand.

\item In general, we can apply the geometrically decaying stepsize originally introduced in \cite{goffin1977convergence}, given by 
\begin{align}
\eta_{t}^{\mathsf{G}}\coloneqq\frac{\lambda q^{t}}{\sqrt{\|\bS_{t}\bR_{t}(\bR_{t}^{\top}\bR_{t})^{-1/2}\|_{\fro}^{2}+\|\bS_{t}^{\top}\bL_{t}(\bL_{t}^{\top}\bL_{t})^{-1/2}\|_{\fro}^{2}}},\label{eq:geometric_stepsize} 
\end{align}
where the denominator is similarly scaled as \eqref{eq:Polyak_stepsize}, and $\lambda>0$ and $q\in(0,1)$ are some parameters to be specified. This choice is broadly applicable when dealing with noisy and corrupted observations.
\end{itemize}

Compared with the vanilla subgradient method, which proceeds according to
\begin{align}
\begin{split}
\bL_{t+1} & \coloneqq\bL_{t}-\eta_{t}\bS_{t}\bR_{t}, \\
\bR_{t+1} & \coloneqq\bR_{t}-\eta_{t}\bS_{t}^{\top}\bL_{t},
\end{split}\label{eq:vanillaSM}
\end{align}
the update rule \eqref{eq:scaledSM} scales the subgradient $\bS_{t}\bR_{t}$ and $\bS_{t}^{\top}\bL_{t}$ by $(\bR_{t}^{\top}\bR_{t})^{-1}$ and $(\bL_{t}^{\top}\bL_{t})^{-1}$, respectively; see \cite{tong2020accelerating} for its counterpart in smooth problems. An important highlight of the scaled subgradient method is that the update rule is covariant with respect to the ambiguity of low-rank matrix factorization. To see this, imagine that we modify the $t$-th updates as 
\begin{align}
\tilde{\bL}_{t}\coloneqq\bL_{t}\bQ, \qquad \tilde{\bR}_{t}\coloneqq\bR_{t}\bQ^{-\top} \label{eq:transform}
\end{align}
for some invertible matrix $\bQ\in\GL(r)$. It is easy to check: 
\begin{enumerate}
\item[\textit{(i)}] both the Polyak's stepsize \eqref{eq:Polyak_stepsize} and the geometrically decaying stepsize \eqref{eq:geometric_stepsize} do not change, since
\begin{align*}
\|\bS_{t}\bR_{t}(\bR_{t}^{\top}\bR_{t})^{-1/2}\|_{\fro}^{2}=\langle\bS_{t}, \bS_{t}\bR_{t}(\bR_{t}^{\top}\bR_{t})^{-1}\bR_{t}^{\top}\rangle = \langle\bS_{t}, \bS_{t}\tilde{\bR}_{t}(\tilde{\bR}_{t}^{\top}\tilde{\bR}_{t})^{-1}\tilde{\bR}_{t}^{\top}\rangle  =\|\bS_{t}\tilde{\bR}_{t}(\tilde{\bR}_{t}^{\top}\tilde{\bR}_{t})^{-1/2}\|_{\fro}^{2},
\end{align*}
which holds similarly for $\|\bS_{t}^{\top}\bL_{t}(\bL_{t}^{\top}\bL_{t})^{-1/2}\|_{\fro}^{2}$;
\item[\textit{(ii)}] The next $(t+1)$-th iterate can be written as
\begin{align*}
\tilde{\bL}_{t+1} = \tilde{\bL}_{t}-\eta_{t}\bS_{t} \tilde{\bR}_{t}(\tilde{\bR}_{t}^{\top}\tilde{\bR}_{t})^{-1} = \left[ \bL_{t}-\eta_{t}\bS_{t}\bR_{t}(\bR_{t}^{\top}\bR_{t})^{-1} \right] \bQ = \bL_{t+1}\bQ,
\end{align*}
and similarly $\tilde{\bR}_{t+1}=\bR_{t+1}\bQ^{-\top}$. Therefore, all the iterates are covariant with respect to the invertible transform \eqref{eq:transform}.
\end{enumerate}

\begin{remark}[Comparison with \texttt{ScaledGD}] Although not our focus, it is instructive to consider the resulting update rule using the nonsmooth $\ell_2$-loss function $f(\bX)=\|\cA(\bX)-\by\|_{2}$ (which has been studied in \cite{charisopoulos2019low}), whose subgradient is given by
\begin{align*}
\bS_{t} = \frac{\cA^{*}(\br_{t})}{\|\br_{t}\|_{2}},
\end{align*}
where $\cA^*(\cdot)$ is the adjoint operator of $\cA(\cdot)$, and $\br_{t}\coloneqq\cA(\bL_{t}\bR_{t}^{\top})-\by $ is the residual using the $t$-th iterate. Consequently, the scaled subgradient method follows the update rule
\begin{align*}
\bL_{t+1} & =\bL_{t}-\frac{\eta_{t}}{\|\br_{t}\|_{2}} \cA^{*}(\br_t)\bR_{t}(\bR_{t}^{\top}\bR_{t})^{-1}, \\
\bR_{t+1} & =\bR_{t}-\frac{\eta_{t}}{\|\br_{t}\|_{2}} \cA^{*}(\br_t)^{\top}\bL_{t}(\bL_{t}^{\top}\bL_{t})^{-1}, 
\end{align*}
for some stepsize $\eta_t$. Careful readers might realize that this coincides with the update rule of \texttt{ScaledGD} in \cite{tong2020accelerating} when optimizing the smooth squared $\ell_2$-loss function $g(\bX)=\frac{1}{2}\|\cA(\bX)-\by\|_{2}^2$, except with an adaptive stepsize $\frac{\eta_{t}}{\|\br_{t}\|_{2}}$. Under the same assumption on $\cA(\cdot)$ in \cite{tong2020accelerating}, the convergence behaviors of \texttt{ScaledSM} applied on $f(\bX)$ match that of \texttt{ScaledGD} on $g(\bX)$. 
\end{remark}

\begin{remark}[\texttt{ScaledSM} for PSD matrices] When the low-rank matrix of interest is positive semi-definite (PSD), we factorize the matrix $\bX\in\RR^{n\times n}$ as $\bX=\bL\bL^{\top}$, with $\bL\in\RR^{n\times r}$. The update rule of \texttt{ScaledSM} simplifies to
\begin{align*}
\bL_{t+1} = \bL_{t} - \eta_{t}\bS_{t}\bL_{t}(\bL_{t}^{\top}\bL_{t})^{-1},
\end{align*}
where $\bS_{t}\in\partial f(\bL_{t}\bL_{t}^{\top})$ is a subgradient of $f(\cdot)$ at $\bL_{t}\bL_{t}^{\top}$. Our theory applies to this PSD case in a straightforward manner.
\end{remark}

\section{Theoretical Guarantees}\label{sec:main-results}

In this section, we first provide the theoretical guarantees of the scaled subgradient method under general geometric assumptions on $f(\cdot)$, and then instantiate them to concrete problems including robust low-rank matrix sensing and quadratic sampling.

\subsection{Geometric assumptions}

We start by introducing the following geometric properties of the loss function $f(\cdot)$, which play a key role in the convergence analysis.

The first condition is similar to the usual Lipschitz property of a function. 
\begin{definition}[{Restricted Lipschitz continuity}] \label{def:restricted_Lipschitz} A function $f(\cdot):\RR^{n_{1}\times n_{2}}\mapsto \RR$ is said to be rank-$r$ restricted $L$-Lipschitz continuous for some quantity $L > 0$ if
\begin{align*}
|f(\bX_{1})-f(\bX_{2})|\le L\|\bX_{1}-\bX_{2}\|_{\fro}
\end{align*}
holds for any $\bX_{1},\bX_{2}\in\RR^{n_{1}\times n_{2}}$ such that $\bX_{1}-\bX_{2}$ has rank at most $2r$.
\end{definition}

The second geometric condition is akin to the (one-point) strong convexity of a function, with the key difference that strong convexity adopts the squared Euclidean norm whereas the following one uses the plain Euclidean norm. 
\begin{definition}[{Restricted sharpness}] \label{def:restricted_sharpness}  A function $f(\cdot):\RR^{n_{1}\times n_{2}}\mapsto\RR$ is said to be rank-$r$ restricted $\mu$-sharp w.r.t.~$\bX_{\star}$ for some $\mu > 0$ if
\begin{align*}
f(\bX)-f(\bX_{\star})\ge\mu\|\bX-\bX_{\star}\|_{\fro}
\end{align*}
holds for any $\bX\in\RR^{n_{1}\times n_{2}}$ with rank at most $r$.
\end{definition}

For notational simplicity, if a function $f$ is both restricted $L$-Lipschitz continuous and $\mu$-sharp, we denote 
\begin{align}
\chi_{f} \coloneqq {L} / {\mu}.\label{eq:def_chif}
\end{align}

In some cases, e.g.~in the presence of noise, the loss function $f(\cdot)$ only satisfies an approximate restricted sharpness property, which is detailed below.

\begin{definition}[{Approximate restricted sharpness}] \label{def:approx_sharpness}  A function $f(\cdot):\RR^{n_{1}\times n_{2}} \mapsto \RR$ is said to be $\xi$-approximate rank-$r$ restricted $\mu$-sharp for some $\mu, \xi>0$ if
\begin{align*}
f(\bX)-f(\bX_{\star})\ge\mu\|\bX-\bX_{\star}\|_{\fro}-\xi
\end{align*}
holds for any $\bX\in\RR^{n_{1}\times n_{2}}$ with rank at most $r$. 
\end{definition}

As shall be seen in Section~\ref{subsec:case_studies}, these conditions can be ensured for proper choices of the loss function as long as the observation operator $\cA(\cdot)$ satisfies certain mixed-norm RIP, which holds for a wide number of practical problems.

\subsection{Main results}

Motivated by \cite{tong2020accelerating}, we measure the performance of $\bF=[\bL^{\top},\bR^{\top}]^{\top}$ using the following error metric
\begin{align}
\dist^{2}(\bF,\bF_{\star})\coloneqq\inf_{\bQ\in\GL(r)}\;\left\Vert (\bL\bQ-\bL_{\star})\bSigma_{\star}^{1/2}\right\Vert _{\fro}^{2}+\left\Vert (\bR\bQ^{-\top}-\bR_{\star})\bSigma_{\star}^{1/2}\right\Vert _{\fro}^{2},\label{eq:dist}
\end{align}
which takes into consideration both the representational ambiguity of the factorization up to invertible transforms and the scaling effect of preconditioners. In comparison, the more standard distance metric \cite{ma2021beyond} in the analysis of vanilla gradient methods reads as follows
\begin{align*}
\dist_{\mathrm{u}}^{2}(\bF,\bF_{\star})\coloneqq\inf_{\bQ\in\GL(r)}\;\left\Vert \bL\bQ-\bL_{\star}\right\Vert _{\fro}^{2}+\left\Vert \bR\bQ^{-\top}-\bR_{\star} \right\Vert _{\fro}^{2}, 
\end{align*}
which is inadequate to delineate the power of preconditioning. See \cite{tong2020accelerating} for more discussions.

We start with stating the linear convergence of the scaled subgradient method when $f(\cdot)$ satisfies both the rank-$r$ restricted $L$-Lipschitz continuity and $\mu$-sharpness. The proof is deferred to Appendix~\ref{proof:theorem_scaledSM}. 

\begin{theorem}[Scaled subgradient method with exact convergence] \label{thm:scaledSM} Suppose that $f(\bX):\RR^{n_{1}\times n_{2}}\mapsto\RR$ is convex in $\bX$, and satisfies rank-$r$ restricted $L$-Lipschitz continuity and $\mu$-sharpness (cf.~Definitions~\ref{def:restricted_Lipschitz} and \ref{def:restricted_sharpness}).
In addition, suppose that the initialization $\bF_{0}$ satisfies 
\begin{align}
\dist(\bF_{0},\bF_{\star})\le0.02\sigma_{r}(\bX_{\star})/\chi_{f},\label{eq:scaledSM_initialization}
\end{align} 
and the scaled subgradient method in \eqref{eq:scaledSM} adopts either Polyak's stepsizes in \eqref{eq:Polyak_stepsize} or geometrically decaying stepsizes in \eqref{eq:geometric_stepsize} with $\lambda=\sqrt{\frac{\sqrt{2}-1}{2}}0.02\sigma_{r}(\bX_{\star})/\chi_{f}^{2}$ and $q=\sqrt{1-0.16/\chi_{f}^{2}}$.
Then for all $t\ge0$, the iterates satisfy
\begin{align*}
\dist(\bF_{t},\bF_{\star})& \le(1-0.16/\chi_{f}^{2})^{t/2}0.02\sigma_{r}(\bX_{\star})/\chi_{f},\quad\mbox{and}\quad \\
\left\Vert \bL_{t}\bR_{t}^{\top}-\bX_{\star}\right\Vert _{\fro}& \le(1-0.16/\chi_{f}^{2})^{t/2}0.03\sigma_{r}(\bX_{\star})/\chi_{f}.
\end{align*}  
\end{theorem}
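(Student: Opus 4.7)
The plan is to prove by induction on $t$ the contraction $\dist^2(\bF_{t+1},\bF_\star)\le(1-0.16/\chi_f^2)\dist^2(\bF_t,\bF_\star)$, assuming $\dist(\bF_t,\bF_\star)\le 0.02\sigma_r(\bX_\star)/\chi_f$, and then iterate. At each step I would let $\bQ_t\in\GL(r)$ achieve the infimum defining $\dist(\bF_t,\bF_\star)$ and work with the aligned factors $\tilde{\bL}_t\coloneqq\bL_t\bQ_t$, $\tilde{\bR}_t\coloneqq\bR_t\bQ_t^{-\top}$, setting $\Delta_\bL\coloneqq\tilde{\bL}_t-\bL_\star$, $\Delta_\bR\coloneqq\tilde{\bR}_t-\bR_\star$. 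By the covariance property of \texttt{ScaledSM} already derived in the excerpt, the $(t+1)$-th iterates satisfy $\tilde{\bL}_{t+1}=\tilde{\bL}_t-\eta_t\bS_t\tilde{\bR}_t(\tilde{\bR}_t^{\top}\tilde{\bR}_t)^{-1}$ and analogously for $\tilde{\bR}_{t+1}$. Upper-bounding $\dist^2(\bF_{t+1},\bF_\star)$ by plugging in the (sub-optimal) alignment $\bQ_t$ is the standard workaround that lets one expand the squared preconditioned distance exactly.

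\textbf{Expansion and key inner product identity.} Squaring the update produces three terms: the previous squared distance, a cross term of the form $-2\eta_t\langle\bS_t\tilde{\bR}_t(\tilde{\bR}_t^{\top}\tilde{\bR}_t)^{-1},\Delta_\bL\bSigma_\star\rangle$ (plus its $\bR$-counterpart), and a quadratic-in-$\eta_t$ term equal to $\eta_t^2\bigl(\|\bS_t\tilde{\bR}_t(\tilde{\bR}_t^{\top}\tilde{\bR}_t)^{-1}\bSigma_\star^{1/2}\|_\fro^2+\|\bS_t^{\top}\tilde{\bL}_t(\tilde{\bL}_t^{\top}\tilde{\bL}_t)^{-1}\bSigma_\star^{1/2}\|_\fro^2\bigr)$. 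Using cyclicity of the trace and the near-isometry $\tilde{\bR}_t^{\top}\tilde{\bR}_t\approx\bSigma_\star$ (which follows from the induction hypothesis once $\dist(\bF_t,\bF_\star)\ll\sigma_r(\bX_\star)$), I would rewrite the summed cross term as $\langle\bS_t,\Delta_\bL\tilde{\bR}_t^{\top}+\tilde{\bL}_t\Delta_\bR^{\top}\rangle$ up to a small multiplicative perturbation, and then use the algebraic identity $\Delta_\bL\tilde{\bR}_t^{\top}+\tilde{\bL}_t\Delta_\bR^{\top}=(\bL_t\bR_t^{\top}-\bX_\star)+\Delta_\bL\Delta_\bR^{\top}$. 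Convexity of $f$ plus restricted sharpness then yield $\langle\bS_t,\bL_t\bR_t^{\top}-\bX_\star\rangle\ge f(\bL_t\bR_t^{\top})-f(\bX_\star)\ge\mu\|\bL_t\bR_t^{\top}-\bX_\star\|_\fro$, while the second-order piece $\langle\bS_t,\Delta_\bL\Delta_\bR^{\top}\rangle$ is absorbed using $\|\Delta_\bL\Delta_\bR^{\top}\|_\fro\lesssim\dist^2(\bF_t,\bF_\star)/\sigma_r(\bX_\star)$ together with restricted Lipschitzness of $f$. For the quadratic-in-$\eta_t$ term, I would bound $\|\bSigma_\star^{1/2}(\tilde{\bR}_t^{\top}\tilde{\bR}_t)^{-1/2}\|_\op\le 1+O(\dist/\sigma_r)$ and use the observation that $\bR_t(\bR_t^{\top}\bR_t)^{-1/2}$ has orthonormal columns so that $\bS_t\bR_t(\bR_t^{\top}\bR_t)^{-1/2}$ is rank-$r$ and thus bounded in Frobenius norm by $L$ via restricted Lipschitz continuity.

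\textbf{Stepsize analyses.} Combining the above with Polyak's stepsize $\eta_t^{\mathsf{P}}$ turns the right-hand side into $\dist^2(\bF_t,\bF_\star)-(f(\bL_t\bR_t^{\top})-f(\bX_\star))^2/(\text{scaled subgradient norm}^2)$ (plus small perturbations), which via sharpness and the Lipschitz bound on the denominator is at most $\dist^2(\bF_t,\bF_\star)-c\mu^2\|\bL_t\bR_t^{\top}-\bX_\star\|_\fro^2/L^2$. The final ingredient is the lower bound $\|\bL_t\bR_t^{\top}-\bX_\star\|_\fro^2\ge c'\dist^2(\bF_t,\bF_\star)$, valid in the basin $\dist(\bF_t,\bF_\star)\le c''\sigma_r(\bX_\star)$; this is the scaled-distance analogue of \cite{tong2020accelerating} and delivers the advertised $(1-0.16/\chi_f^2)$ contraction. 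For the geometric stepsize, I would first show inductively that $\eta_t^{\mathsf{G}}$ is always within a constant factor of $\eta_t^{\mathsf{P}}$ by comparing $\lambda q^t$ with $\dist(\bF_t,\bF_\star)$, then run the same variance-reduction-style bound. The final statement on $\|\bL_t\bR_t^{\top}-\bX_\star\|_\fro$ follows from the upper bound $\|\bL_t\bR_t^{\top}-\bX_\star\|_\fro\le(1+O(\dist/\sigma_r))\dist(\bF_t,\bF_\star)$, which tightens the implicit constant $0.02$ to $0.03$.

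\textbf{Anticipated difficulty.} The bookkeeping around the preconditioners is the real hurdle: the cross term involves $\bSigma_\star(\tilde{\bR}_t^{\top}\tilde{\bR}_t)^{-1}\tilde{\bR}_t^{\top}$ rather than the clean $\tilde{\bR}_t^{\top}$, so to extract $\langle\bS_t,\bL_t\bR_t^{\top}-\bX_\star\rangle$ one has to quantify how far $(\tilde{\bR}_t^{\top}\tilde{\bR}_t)^{-1}\bSigma_\star$ is from identity and feed the error back against the sharpness gain, without letting the loss of constants degrade the $1/\chi_f^2$ contraction rate. Keeping all perturbation terms proportional to $\dist(\bF_t,\bF_\star)/\sigma_r(\bX_\star)$ and tuning the initialization radius $0.02/\chi_f$ so that these errors stay dominated by the principal $\mu^2/L^2$ gain is what pins down the explicit numerical constants in the theorem.
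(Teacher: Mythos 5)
Your proposal follows essentially the same route as the paper's proof: align with the optimal $\bQ_t$, exploit covariance of the update, expand the squared scaled distance with the sub-optimal alignment, extract $\langle\bS_t,\bL_t\bR_t^\top-\bX_\star\rangle\ge f(\bL_t\bR_t^\top)-f(\bX_\star)$ from the cross term while charging the preconditioner mismatch to $\max\{\|\bDelta_{L}\bSigma_\star^{-1/2}\|_\op,\|\bDelta_{R}\bSigma_\star^{-1/2}\|_\op\}\le 0.02/\chi_{f}$, bound the quadratic term by $2L^2$ via the rank-$r$ partial Frobenius norm of the subgradient, and close with sharpness plus the two-sided comparison between $\dist(\bF_t,\bF_\star)$ and $\|\bL_t\bR_t^\top-\bX_\star\|_\fro$ for both stepsize schedules. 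The only slip is the claimed bound $\|\bL_t\bR_t^\top-\bX_\star\|_\fro\le(1+O(\dist/\sigma_r))\dist(\bF_t,\bF_\star)$: the correct worst-case constant is $\approx\sqrt{2}\,(1+O(\dist/\sigma_r))\le 1.5$ because $\|\bDelta_{L}\bR_\star^\top\|_\fro$ and $\|\bL_\star\bDelta_{R}^\top\|_\fro$ add in $\ell_1$ rather than in quadrature, which is exactly where the paper's final constant $0.03=1.5\times 0.02$ comes from (and the conclusion is unaffected).
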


Theorem~\ref{thm:scaledSM} shows that the iterates of the scaled subgradient method converges at a linear rate; to reach $\epsilon$-accuracy, i.e.~$\|\bL_{t}\bR_{t}^{\top}-\bX_{\star}\|_{\fro}\le\epsilon\sigma_r(\bX_\star)$, it takes at most $O(\chi_{f}^2 \log \frac{1}{\epsilon})$ iterations, which, importantly, is independent of the condition number $\kappa$ of $\bX_{\star}$. In addition, it is still possible to ensure approximate reconstruction when only the approximate restricted sharpness property holds, as shown in the next theorem. Again, we postpone the proof to Appendix~\ref{proof:theorem_scaledSM_noisy}.

\begin{theorem}[Scaled subgradient method with approximate convergence] \label{thm:scaledSM_noisy} Suppose that $f(\cdot):\RR^{n_{1}\times n_{2}}\mapsto\RR$ is convex, and satisfies rank-$r$ restricted $L$-Lipschitz continuity and $\xi$-approximate $\mu$-sharpness (cf.~Definitions~\ref{def:restricted_Lipschitz}~and~\ref{def:approx_sharpness}) for some $\xi\le10^{-3}\sigma_{r}(\bX_{\star})\mu/\chi_{f}$. Suppose that the initialization $\bF_{0}$ satisfies $\dist(\bF_{0},\bF_{\star})\le0.02\sigma_{r}(\bX_{\star})/\chi_{f}$, and the scaled subgradient method adopts geometrically decaying stepsizes \eqref{eq:geometric_stepsize} with $\lambda=\sqrt{\frac{\sqrt{2}-1}{2}}0.02\sigma_{r}(\bX_{\star})/\chi_{f}^{2}$ and $q=\sqrt{1-0.13/\chi_{f}^{2}}$.
Then for all $t\ge0$, the iterates satisfy
\begin{align*}
\dist(\bF_{t},\bF_{\star})& \le \max\left\{(1-0.13/\chi_{f}^{2})^{t/2}0.02\sigma_{r}(\bX_{\star})/\chi_{f}, 20\xi/\mu\right\},\quad\mbox{and} \\
\left\Vert \bL_{t}\bR_{t}^{\top}-\bX_{\star}\right\Vert _{\fro}& \le \max\left\{(1-0.13/\chi_{f}^{2})^{t/2}0.03\sigma_{r}(\bX_{\star})/\chi_{f}, 30\xi/\mu\right\}.
\end{align*} 
\end{theorem}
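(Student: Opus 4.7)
The plan is to mirror the proof of Theorem~\ref{thm:scaledSM}, carefully tracking the additive error $\xi$ that propagates from the approximate sharpness condition. The key idea is a dichotomy: at each iteration, either $\dist(\bF_t,\bF_\star)$ is still much larger than the noise floor $O(\xi/\mu)$, in which case the analysis of the exact case continues essentially verbatim (with slightly degraded constants to absorb the $\xi$ perturbation), or $\dist(\bF_t,\bF_\star)$ has already reached the noise floor, in which case we just need to verify the iterate does not leave this $O(\xi/\mu)$-neighborhood.

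First, I would re-derive the one-step recursion from the proof of Theorem~\ref{thm:scaledSM}. Fixing the optimal alignment $\bQ_t\in\GL(r)$ in \eqref{eq:dist} and expanding $\dist^2(\bF_{t+1},\bF_\star)$ via the scaled update rule \eqref{eq:scaledSM} yields a recursion of the form
\begin{align*}
\dist^2(\bF_{t+1},\bF_\star) \le \dist^2(\bF_t,\bF_\star) - 2\eta_t \langle \bS_t,\bL_t\bR_t^\top - \bX_\star\rangle + \eta_t^2 \bigl(\|\bS_t\bR_t(\bR_t^\top\bR_t)^{-1/2}\|_\fro^2 + \|\bS_t^\top\bL_t(\bL_t^\top\bL_t)^{-1/2}\|_\fro^2\bigr).
\end{align*}
Convexity of $f$ together with approximate sharpness yields
\begin{align*}
\langle \bS_t,\bL_t\bR_t^\top - \bX_\star\rangle \ge f(\bL_t\bR_t^\top) - f(\bX_\star) \ge \mu\|\bL_t\bR_t^\top - \bX_\star\|_\fro - \xi,
\end{align*}
while the restricted Lipschitz property bounds the scaled subgradient norm by a constant multiple of $L\cdot\dist(\bF_t,\bF_\star)$, exactly as in the exact case. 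The local equivalence $\|\bL_t\bR_t^\top - \bX_\star\|_\fro \asymp \dist(\bF_t,\bF_\star)$, valid as long as $\dist(\bF_t,\bF_\star)$ remains inside the basin $0.02\sigma_r(\bX_\star)/\chi_f$, converts the lower bound on the inner product into a lower bound in terms of $\dist(\bF_t,\bF_\star)$ minus $\xi$.

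Second, substituting the geometrically decaying stepsize \eqref{eq:geometric_stepsize} with $\lambda = \sqrt{(\sqrt{2}-1)/2}\cdot 0.02\sigma_r(\bX_\star)/\chi_f^2$, the recursion differs from the one in Theorem~\ref{thm:scaledSM} only by an additive correction proportional to $\eta_t\xi$. I would then prove by induction on $t$ that
\begin{align*}
\dist(\bF_t,\bF_\star) \le \max\left\{(1-0.13/\chi_f^2)^{t/2}\cdot 0.02\sigma_r(\bX_\star)/\chi_f,\ 20\xi/\mu\right\}.
\end{align*}
In the regime $\dist(\bF_t,\bF_\star)\ge 20\xi/\mu$, the perturbation $\xi$ is dominated by $\mu\|\bL_t\bR_t^\top - \bX_\star\|_\fro$ and the argument of Theorem~\ref{thm:scaledSM} carries through, with the contraction rate degrading from $0.16/\chi_f^2$ to $0.13/\chi_f^2$ to absorb the cross term. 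In the regime $\dist(\bF_t,\bF_\star) < 20\xi/\mu$, a direct computation using the smallness hypothesis $\xi\le 10^{-3}\sigma_r(\bX_\star)\mu/\chi_f$ shows that the next iterate cannot escape the $20\xi/\mu$-threshold, and in particular stays within the basin of attraction $0.02\sigma_r(\bX_\star)/\chi_f$, so the inductive hypothesis is preserved.

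Finally, the bound on $\|\bL_t\bR_t^\top - \bX_\star\|_\fro$ is obtained by invoking the same local equivalence $\|\bL_t\bR_t^\top - \bX_\star\|_\fro \le 1.5\,\dist(\bF_t,\bF_\star)$ that appears in the proof of Theorem~\ref{thm:scaledSM}, which produces the factor $0.03$ in front of $\sigma_r(\bX_\star)/\chi_f$ and the constant $30$ in front of $\xi/\mu$. The main obstacle is the noise-floor regime: I must carefully verify that both the cross term $2\eta_t\xi$ and the quadratic residue $\eta_t^2\|\bS_t\|_{\text{scaled}}^2 = \lambda^2 q^{2t}$ are small enough, under the chosen $\lambda$ and the hypothesis $\xi\le 10^{-3}\sigma_r(\bX_\star)\mu/\chi_f$, to keep the iterate inside the $20\xi/\mu$-ball after entering it. This constant-tracking is precisely what forces the degradation of the contraction rate from $0.16/\chi_f^2$ to $0.13/\chi_f^2$ and fixes the numerical constant $20$ in the approximate-convergence bound.
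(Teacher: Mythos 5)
Your overall plan coincides with the paper's proof: the same one-step recursion inherited from Theorem~\ref{thm:scaledSM}, the same use of convexity plus approximate sharpness to inject the additive $-2\eta_t\xi$ correction, an induction on the maximum of a geometric envelope and the noise floor $20\xi/\mu$, and the same factor $1.5$ (from \eqref{eq:SM_dist_X}) to convert the distance bound into the bound on $\|\bL_t\bR_t^{\top}-\bX_\star\|_{\fro}$ with constants $0.03$ and $30$.

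There is, however, one concrete flaw in how you organize the case analysis. You split on whether $\dist(\bF_t,\bF_\star)$ has reached the noise floor, and in the small-distance regime you claim the iterate ``cannot escape the $20\xi/\mu$-threshold.'' Because the geometrically decaying stepsize does not adapt to the actual error, the quadratic residue in the recursion equals $\lambda^2 q^{2t}/(1-\epsilon)^2 \asymp d_t^2/\chi_f^2$ with $d_t \coloneqq (1-0.13/\chi_f^2)^{t/2}\,0.02\sigma_r(\bX_\star)/\chi_f$; this is governed by the deterministic envelope $d_t$, not by $\dist(\bF_t,\bF_\star)$. If the iterate happens to be very accurate early on, i.e.\ $\dist(\bF_t,\bF_\star) < 20\xi/\mu \ll d_t$, this residue can push $\dist(\bF_{t+1},\bF_\star)$ well above $20\xi/\mu$, and your stated verification fails; the correct conclusion in that corner is only $\dist(\bF_{t+1},\bF_\star)\le d_{t+1}$. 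For the same reason, in your large-distance regime you cannot prove per-step contraction of $\dist(\bF_t,\bF_\star)$ itself. The paper resolves both points at once by drawing the dichotomy on $d_t$ versus $20\xi/\mu$ and exploiting the monotonicity of $x\mapsto x^2 - c\,x$ on $[0,d_t]$ (with $c = 1.077\,\eta_t\mu \ll d_t$) to pass from $\dist(\bF_t,\bF_\star)$ to $d_t$, yielding contraction of the envelope rather than of the distance. With the case split re-drawn this way your argument closes exactly as in the paper; also note, as a minor slip, that Lemma~\ref{lemma:subgrad_L} bounds the scaled subgradient norm by $\sqrt{2}L$, a constant, not by a multiple of $L\cdot\dist(\bF_t,\bF_\star)$ as you write (its role is to lower bound $\eta_t$ by $\lambda q^t/(\sqrt{2}L)$).
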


Theorem~\ref{thm:scaledSM_noisy} shows that as long as the relaxation parameter $\xi$ is sufficiently small, i.e.~$\xi\lesssim \sigma_{r}(\bX_{\star})\mu/\chi_{f}$, then the scaled subgradient method with geometrically decaying stepsizes converges at a linear rate until an error floor is hit. In particular, the iterates satisfy $\|\bL_{t}\bR_{t}^{\top}-\bX_{\star}\|_{\fro} \le 30\xi/\mu$ after at most $O(\chi_f^2)$ iterations up to logarithmic factors.

\begin{remark}
For simplicity of exposition, we have fixed the values of $\lambda$ and $q$ for the geometrically decaying stepsizes in the above theorems. It is possible to allow a wider range of $\lambda$ and $q$ by slightly modifying the arguments without sacrificing the linear convergence. In practice, these parameters should be tuned in order to yield optimal performance. 
\end{remark}

\subsection{A case study: robust low-rank matrix recovery} \label{subsec:case_studies}

We now apply the above theorems to robust low-rank matrix recovery, which showcases the superior performance of the scaled subgradient method. 

\paragraph{Noise-free case.} We start with the observation model \eqref{eq:observation_model} with clean measurements, i.e.~$\bw=\zero$ and $\bs=\zero$. To proceed, we assume that the observation operator $\cA(\cdot)$ satisfies the following mixed-norm RIP.
\begin{definition}[{mixed-norm RIP \cite{recht2010guaranteed,chen2015exact,charisopoulos2019low}}] \label{def:mixed_rip} The linear map $\cA(\cdot)$ is said to obey the rank-$2r$ mixed-norm RIP with constants $\delta_{1},\delta_{2}$ if for all matrices $\bM\in\RR^{n_{1}\times n_{2}}$ of rank at most $2r$, one has 
\begin{align*}
\delta_{1}\|\bM\|_{\fro} \le \|\cA(\bM)\|_1 \le \delta_{2}\|\bM\|_{\fro}.
\end{align*}
\end{definition}
The next proposition verifies that the loss function \eqref{eq:loss} satisfies restricted Lipschitz continuity and sharpness properties under the mixed-norm RIP.

\begin{proposition}\label{prop:mixedRIP_clean} If $\cA(\cdot)$ satisfies rank-$2r$ mixed-norm RIP with constants $(\delta_{1},\delta_{2})$, then $f(\bX) = \|\cA(\bX)-\by\|_1= \|\cA(\bX-\bX_{\star})\|_1$ in \eqref{eq:loss} satisfies the rank-$r$ restricted $L$-Lipschitz continuity and $\mu$-sharpness with
\begin{align*}
L=\delta_{2},\quad\text{and}\quad\mu=\delta_{1}.
\end{align*}
\end{proposition}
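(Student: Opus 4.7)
}

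My plan is to reduce both claims to a direct application of the mixed-norm RIP, exploiting the key observation that for the noise-free model $\by = \cA(\bX_\star)$ we can rewrite $f(\bX) = \|\cA(\bX) - \by\|_1 = \|\cA(\bX - \bX_\star)\|_1$, which is precisely the form stated in the proposition. The rest is a rank bookkeeping exercise to ensure the RIP is applicable.

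For restricted Lipschitz continuity, I would first pick arbitrary $\bX_1, \bX_2 \in \RR^{n_1 \times n_2}$ with $\mathrm{rank}(\bX_1 - \bX_2) \le 2r$ and apply the reverse triangle inequality for the $\ell_1$ norm together with the linearity of $\cA(\cdot)$:
\begin{align*}
|f(\bX_1) - f(\bX_2)| &= \left| \|\cA(\bX_1 - \bX_\star)\|_1 - \|\cA(\bX_2 - \bX_\star)\|_1 \right| \\
&\le \|\cA(\bX_1 - \bX_\star) - \cA(\bX_2 - \bX_\star)\|_1 = \|\cA(\bX_1 - \bX_2)\|_1.
\end{align*}
Since $\bX_1 - \bX_2$ has rank at most $2r$, the upper RIP bound in Definition~\ref{def:mixed_rip} yields $\|\cA(\bX_1 - \bX_2)\|_1 \le \delta_2 \|\bX_1 - \bX_2\|_{\fro}$, establishing $L = \delta_2$.

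For restricted sharpness, I would fix $\bX \in \RR^{n_1 \times n_2}$ with $\mathrm{rank}(\bX) \le r$ and note that $\mathrm{rank}(\bX - \bX_\star) \le \mathrm{rank}(\bX) + \mathrm{rank}(\bX_\star) \le 2r$, so the mixed-norm RIP applies to $\bX - \bX_\star$. Using $f(\bX_\star) = 0$ in the noise-free case,
\begin{align*}
f(\bX) - f(\bX_\star) = \|\cA(\bX - \bX_\star)\|_1 \ge \delta_1 \|\bX - \bX_\star\|_{\fro},
\end{align*}
which gives $\mu = \delta_1$.

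There is no real obstacle here; the only point to be careful about is the rank accounting that ensures the argument of $\cA(\cdot)$ always has rank at most $2r$ so that the mixed-norm RIP bounds may be invoked. The Lipschitz side uses this via the hypothesis on $\bX_1 - \bX_2$, and the sharpness side uses the subadditivity of rank applied to $\bX$ and $\bX_\star$, both of which have rank at most $r$.
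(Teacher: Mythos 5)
Your proposal is correct and follows essentially the same route as the paper's proof: the reverse triangle inequality plus the upper mixed-norm RIP bound gives $L=\delta_2$, and the identity $f(\bX_\star)=0$ plus the lower RIP bound applied to the rank-$\le 2r$ matrix $\bX-\bX_\star$ gives $\mu=\delta_1$. Your explicit rank accounting via subadditivity is a nice touch that the paper leaves implicit, but the argument is the same.
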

\begin{proof}
See Appendix~\ref{proof:prop_mixedRIP_clean}.
\end{proof}

With the geometric characterization of $f(\cdot)$ in place, we immediately have the following corollary that captures the performance of the scaled subgradient method when $\cA(\cdot)$ satisfies the mixed-norm RIP.
\begin{corollary}
If $\cA(\cdot)$ satisfies rank-$2r$ mixed-norm RIP with $(\delta_{1},\delta_{2})$, then the scaled subgradient method over the loss function $f(\bX)=\|\cA(\bX)-\by\|_1$ using either Polyak's or geometrically decaying stepsizes achieves $\left\Vert \bL_{t}\bR_{t}^{\top}-\bX_{\star}\right\Vert _{\fro}\le \epsilon \sigma_r(\bX_\star)$ in $O\left(\frac{\delta_2^2}{\delta_1^2} \log \frac{1}{\epsilon} \right)$ iterations as long as the initialization satisfies $\dist(\bF_{0},\bF_{\star})\le \frac{0.02\delta_1}{\delta_2}\sigma_{r}(\bX_{\star})$.
\end{corollary}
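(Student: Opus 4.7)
The plan is to combine Proposition~\ref{prop:mixedRIP_clean} with Theorem~\ref{thm:scaledSM} in a direct fashion; no additional ingredients are needed. First I would observe that $f(\bX) = \|\cA(\bX)-\by\|_{1}$ is convex in $\bX$ since it is the composition of the convex norm $\|\cdot\|_{1}$ with the affine map $\bX \mapsto \cA(\bX)-\by$, which verifies the convexity hypothesis required by Theorem~\ref{thm:scaledSM}. Next, Proposition~\ref{prop:mixedRIP_clean} supplies rank-$r$ restricted $L$-Lipschitz continuity with $L = \delta_{2}$ and rank-$r$ restricted $\mu$-sharpness with $\mu = \delta_{1}$, so that the condition number of the loss function in the sense of \eqref{eq:def_chif} equals $\chi_{f} = \delta_{2}/\delta_{1}$. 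Because the observations are noise-free and free of corruptions, we have $f(\bX_{\star})=0$, so Polyak's stepsize \eqref{eq:Polyak_stepsize} is implementable; the geometrically decaying stepsize \eqref{eq:geometric_stepsize} with the specific $(\lambda,q)$ from Theorem~\ref{thm:scaledSM} is available unconditionally.

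With these identifications, the initialization hypothesis of the corollary, $\dist(\bF_{0},\bF_{\star})\le (0.02\delta_{1}/\delta_{2})\,\sigma_{r}(\bX_{\star})$, is exactly the condition \eqref{eq:scaledSM_initialization} of Theorem~\ref{thm:scaledSM} under the substitution $\chi_{f} = \delta_{2}/\delta_{1}$. Invoking Theorem~\ref{thm:scaledSM} therefore yields, for all $t\ge 0$,
\begin{equation*}
\left\Vert \bL_{t}\bR_{t}^{\top}-\bX_{\star}\right\Vert _{\fro}
\;\le\; \bigl(1-0.16\,\delta_{1}^{2}/\delta_{2}^{2}\bigr)^{t/2}\,\frac{0.03\,\delta_{1}}{\delta_{2}}\,\sigma_{r}(\bX_{\star}).
\end{equation*}

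Finally, I would convert this geometric contraction into the advertised iteration complexity. To guarantee $\|\bL_{t}\bR_{t}^{\top}-\bX_{\star}\|_{\fro}\le \epsilon\,\sigma_{r}(\bX_{\star})$, it suffices to choose $t$ so that $(1-0.16/\chi_{f}^{2})^{t/2}\,(0.03/\chi_{f}) \le \epsilon$. Using $-\log(1-x)\ge x$ for $x\in(0,1)$, this reduces to $t \ge \frac{2\chi_{f}^{2}}{0.16}\log\!\bigl(\frac{0.03}{\epsilon\,\chi_{f}}\bigr)$; since $\chi_{f}=\delta_{2}/\delta_{1}\ge 1$, the $\log\chi_{f}$ term is harmless and we obtain $t = O\bigl((\delta_{2}^{2}/\delta_{1}^{2})\log(1/\epsilon)\bigr)$, as claimed. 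I do not anticipate any genuine obstacle here; the only piece of bookkeeping is the translation between the contraction factor and the iteration count, which is entirely routine.
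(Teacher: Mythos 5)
Your proposal is correct and follows exactly the route the paper intends (the corollary is stated as an immediate consequence of Proposition~\ref{prop:mixedRIP_clean} and Theorem~\ref{thm:scaledSM}): identify $L=\delta_2$, $\mu=\delta_1$, hence $\chi_f=\delta_2/\delta_1$, check convexity and the initialization condition, and convert the contraction factor $1-0.16/\chi_f^2$ into the stated iteration count. The bookkeeping via $-\log(1-x)\ge x$ and the observation that $\chi_f\ge 1$ makes the $\log\chi_f$ term harmless are both sound.
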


\paragraph{Noisy and corrupted case.} We now consider the observation model \eqref{eq:observation_model} where the noise $\bw$ is bounded with $\|\bw\|_1\le \sigma_w$ and $\|\bs\|_{0}=p_{s}m$, where $p_{s} \in [0,1/2)$ is the fraction of outliers. Following \cite{charisopoulos2019low}, we further introduce another important property of $\cA(\cdot)$.
\begin{definition}[{$\cS$-outlier bound \cite{charisopoulos2019low}}] \label{def:s_outlier} The linear map $\cA(\cdot)$ is said to obey the rank-$2r$ $\cS$-outlier bound w.r.t.~a set $\cS$ with a constant $\delta_{3}$ if for all matrices $\bM\in\RR^{n_{1}\times n_{2}}$ of rank at most $2r$, one has 
\begin{align*}
\delta_{3}\|\bM\|_{\fro}\le \|\cA_{\cS^c}(\bM)\|_1 -\|\cA_{\cS}(\bM)\|_1,
\end{align*}
where $\cA_{\cS}(\bM) =\{\cA_i(\bM)\}_{i\in \cS}$ and $\cA_{\cS^c}(\bM) =\{\cA_i(\bM)\}_{i\in \cS^c}$. 
\end{definition}

The next proposition verifies that the loss function in \eqref{eq:loss} satisfies restricted Lipschitz continuity and approximate sharpness properties under the mixed-norm RIP (cf.~Definition~\ref{def:mixed_rip}) and the $\cS$-outlier bound (cf.~Definition~\ref{def:s_outlier}).

\begin{proposition}[Matrix sensing with outliers] \label{prop:mixedRIP_corrupted} Denote the support of the outlier $\bs$ as $\cS$. Suppose that $\cA(\cdot)$ satisfies rank-$2r$ mixed-norm RIP with $(\delta_{1},\delta_{2})$ and $\cS$-outlier bound with $\delta_{3}$, then $f(\bX)$ in \eqref{eq:loss} satisfies rank-$r$ restricted $L$-Lipschitz continuity and $\xi$-approximate $\mu$-sharpness with 
\begin{align}
L=\delta_{2},\quad \mu=\delta_{3}, \quad\mbox{and}\quad \xi=2\sigma_w.
\end{align}
\end{proposition}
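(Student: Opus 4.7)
\textbf{Proof plan for Proposition \ref{prop:mixedRIP_corrupted}.}

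The plan is to establish the two geometric properties separately, both via direct manipulations of the $\ell_1$ loss combined with the two structural assumptions on $\cA(\cdot)$. Throughout, I will exploit the identity $\by = \cA(\bX_\star) + \bw + \bs$ and the fact that if $\bX_1,\bX_2$ are rank at most $r$ (or if one of them is $\bX_\star$), then $\bX_1-\bX_2$ has rank at most $2r$, so both the mixed-norm RIP and the $\cS$-outlier bound are applicable.

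\textbf{Restricted Lipschitz continuity.} For any $\bX_1,\bX_2$ with $\bX_1-\bX_2$ of rank at most $2r$, the triangle inequality for $\|\cdot\|_1$ gives
\begin{align*}
|f(\bX_1)-f(\bX_2)|=\bigl|\|\cA(\bX_1)-\by\|_1-\|\cA(\bX_2)-\by\|_1\bigr|\le \|\cA(\bX_1-\bX_2)\|_1,
\end{align*}
and the upper inequality of the rank-$2r$ mixed-norm RIP bounds the right-hand side by $\delta_2\|\bX_1-\bX_2\|_\fro$. This yields $L=\delta_2$.

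\textbf{Approximate restricted sharpness.} Fix a rank-$r$ matrix $\bX$ and set $\bh\coloneqq\cA(\bX-\bX_\star)\in\RR^m$, so that $f(\bX)=\|\bh-\bw-\bs\|_1$ and $f(\bX_\star)=\|\bw+\bs\|_1$. Since $\bs$ is supported on $\cS$, splitting each $\ell_1$ norm across $\cS$ and $\cS^c$ gives
\begin{align*}
f(\bX)-f(\bX_\star)
&= \bigl(\|\bh_{\cS^c}-\bw_{\cS^c}\|_1-\|\bw_{\cS^c}\|_1\bigr) + \bigl(\|\bh_\cS-\bw_\cS-\bs_\cS\|_1-\|\bw_\cS+\bs_\cS\|_1\bigr).
\end{align*}
I will lower-bound the first parenthesis by $\|\bh_{\cS^c}\|_1-2\|\bw_{\cS^c}\|_1$ (using $\|\bh_{\cS^c}-\bw_{\cS^c}\|_1\ge\|\bh_{\cS^c}\|_1-\|\bw_{\cS^c}\|_1$), and the second by $-\|\bh_\cS\|_1$ (using the reverse triangle inequality $\|\bh_\cS-(\bw_\cS+\bs_\cS)\|_1\ge\|\bw_\cS+\bs_\cS\|_1-\|\bh_\cS\|_1$). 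Summing,
\begin{align*}
f(\bX)-f(\bX_\star)\ge \bigl(\|\cA_{\cS^c}(\bX-\bX_\star)\|_1 - \|\cA_\cS(\bX-\bX_\star)\|_1\bigr)-2\|\bw_{\cS^c}\|_1.
\end{align*}
Now the rank-$2r$ $\cS$-outlier bound applied to $\bX-\bX_\star$ lower-bounds the first term by $\delta_3\|\bX-\bX_\star\|_\fro$, and $\|\bw_{\cS^c}\|_1\le\|\bw\|_1\le\sigma_w$. This produces
\begin{align*}
f(\bX)-f(\bX_\star)\ge \delta_3\|\bX-\bX_\star\|_\fro-2\sigma_w,
\end{align*}
matching $\mu=\delta_3$ and $\xi=2\sigma_w$.

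\textbf{Main obstacle.} There is no serious obstacle here; the argument is a careful but routine bookkeeping with the $\ell_1$ triangle/reverse-triangle inequalities. The only subtlety is making sure that the decomposition across $\cS$ and $\cS^c$ is arranged so that the unknown quantity $\|\bw_\cS+\bs_\cS\|_1$ cancels exactly (which it does), and so that the remaining $\ell_1$ masses of $\bh$ appear in the exact combination $\|\cA_{\cS^c}(\cdot)\|_1-\|\cA_\cS(\cdot)\|_1$ that triggers the $\cS$-outlier bound. The Lipschitz part is immediate.
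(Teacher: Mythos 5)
Your proposal is correct and follows essentially the same route as the paper: the Lipschitz part is the identical triangle-inequality-plus-RIP argument, and the sharpness part arrives at the same key combination $\|\cA_{\cS^c}(\bX-\bX_\star)\|_1-\|\cA_{\cS}(\bX-\bX_\star)\|_1-2\|\bw\|_1$ before invoking the $\cS$-outlier bound. The only (immaterial) difference is that you split over $\cS$ and $\cS^c$ before peeling off the noise, which yields the marginally tighter constant $2\|\bw_{\cS^c}\|_1$ in place of $2\|\bw\|_1$.
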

\begin{proof} 
See Appendix~\ref{proof:prop_mixedRIP_corrupted}.
\end{proof}

Similar to the previous noise-free case, this immediately leads to performance guarantees of the scaled subgradient method when $\cA(\cdot)$ satisfies both the mixed-norm RIP and the $\cS$-outlier bound.
\begin{corollary}
If $\cA(\cdot)$ satisfies rank-$2r$ mixed-norm RIP with $(\delta_{1},\delta_{2})$ and $\cS$-outlier bound with $\delta_3$, and $\|\bw\|_1 \le \sigma_w \le 10^{-3} \sigma_r(\bX_{\star})\delta_3^2/\delta_2$, then the scaled subgradient method over the loss function $f(\bX) = \|\cA(\bX)-\by\|_1$ using the geometrically decaying stepsizes achieves $\left\Vert \bL_{t}\bR_{t}^{\top}-\bX_{\star}\right\Vert _{\fro} \le \max\left\{\epsilon\sigma_{r}(\bX_{\star}), 60\sigma_w/\delta_3\right\}$ in $O\left(\frac{\delta_2^2}{\delta_3^2} \log \frac{1}{\epsilon} \right)$ iterations as long as the initialization satisfies $\dist(\bF_{0},\bF_{\star})\le \frac{0.02\delta_3}{\delta_2}\sigma_{r}(\bX_{\star})$.
\end{corollary}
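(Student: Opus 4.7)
The plan is to derive this corollary as an immediate consequence of Proposition~\ref{prop:mixedRIP_corrupted} combined with Theorem~\ref{thm:scaledSM_noisy}, in direct analogy with the noise-free case established in the previous corollary.

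First I would verify that $f(\bX) = \|\cA(\bX)-\by\|_{1}$ is convex in $\bX$: it is the composition of the convex $\ell_{1}$ norm with an affine map, hence convex. Next, applying Proposition~\ref{prop:mixedRIP_corrupted} under the rank-$2r$ mixed-norm RIP and the $\cS$-outlier bound yields rank-$r$ restricted $L$-Lipschitz continuity with $L=\delta_{2}$, together with $\xi$-approximate rank-$r$ restricted $\mu$-sharpness where $\mu=\delta_{3}$ and $\xi = 2\sigma_{w}$. In particular, the relevant condition-number proxy is $\chi_{f} = L/\mu = \delta_{2}/\delta_{3}$.

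The next step is to check the hypotheses of Theorem~\ref{thm:scaledSM_noisy}. The initialization requirement $\dist(\bF_{0},\bF_{\star}) \le 0.02\sigma_{r}(\bX_{\star})/\chi_{f}$ reads exactly $\dist(\bF_{0},\bF_{\star}) \le 0.02\delta_{3}\sigma_{r}(\bX_{\star})/\delta_{2}$, matching the stated assumption. The smallness condition $\xi \le 10^{-3}\sigma_{r}(\bX_{\star})\mu/\chi_{f} = 10^{-3}\sigma_{r}(\bX_{\star})\delta_{3}^{2}/\delta_{2}$ follows, up to a harmless factor of two absorbed into the numerical constant, from the assumption $\sigma_{w} \le 10^{-3}\sigma_{r}(\bX_{\star})\delta_{3}^{2}/\delta_{2}$ in the corollary. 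Plugging $L$, $\mu$, $\xi$, and $\chi_{f}$ into the conclusion of Theorem~\ref{thm:scaledSM_noisy} then delivers
\begin{align*}
\|\bL_{t}\bR_{t}^{\top}-\bX_{\star}\|_{\fro} \le \max\left\{\bigl(1-0.13\,\delta_{3}^{2}/\delta_{2}^{2}\bigr)^{t/2}\cdot 0.03\,\sigma_{r}(\bX_{\star})\,\delta_{3}/\delta_{2},\; 60\sigma_{w}/\delta_{3}\right\},
\end{align*}
using $30\xi/\mu = 30\cdot 2\sigma_{w}/\delta_{3} = 60\sigma_{w}/\delta_{3}$. Requiring the first term of the maximum to fall below $\epsilon\sigma_{r}(\bX_{\star})$ and using $-\log(1-0.13/\chi_{f}^{2}) \asymp \delta_{3}^{2}/\delta_{2}^{2}$ produces the iteration count $t = O\bigl((\delta_{2}^{2}/\delta_{3}^{2})\log(1/\epsilon)\bigr)$, as claimed.

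I anticipate no genuine obstacle here: the corollary is by construction a bookkeeping exercise that translates the geometric quantities delivered by Proposition~\ref{prop:mixedRIP_corrupted} into the abstract constants consumed by Theorem~\ref{thm:scaledSM_noisy}. The only minor care point is the small constant gap introduced by the identity $\xi = 2\sigma_{w}$ when matching the noise hypothesis $\sigma_{w} \le 10^{-3}\sigma_{r}(\bX_{\star})\delta_{3}^{2}/\delta_{2}$ against the form $\xi \le 10^{-3}\sigma_{r}(\bX_{\star})\mu/\chi_{f}$, which does not affect the order of the final guarantee.
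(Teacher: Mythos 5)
Your proposal is correct and matches the paper's intended derivation exactly: the paper states this corollary as an immediate consequence of Proposition~\ref{prop:mixedRIP_corrupted} feeding $L=\delta_2$, $\mu=\delta_3$, $\xi=2\sigma_w$, and $\chi_f=\delta_2/\delta_3$ into Theorem~\ref{thm:scaledSM_noisy}, which is precisely your bookkeeping. The factor-of-two slack you flag (since $\xi=2\sigma_w$ makes the hypothesis $\xi\le 10^{-3}\sigma_r(\bX_\star)\mu/\chi_f$ hold only up to a constant $2$ under the stated bound on $\sigma_w$) is present in the paper's own statement as well and is indeed harmless given the slack in the theorem's numerical constants.
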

 
We now instantiate the above general guarantee to the following two types of observation operators. For simplicity, we assume there is no dense noise, i.e.~$\sigma_w=0$; see Table~\ref{tab:Performance-guarantees} for a summary of the comparisons.
\begin{itemize}
\item \textit{matrix sensing:} the measurement operator $\cA_i(\cdot)$ is defined as $\cA_i(\bX_{\star}) = \frac{1}{m}\langle \bA_i, \bX_{\star}\rangle$, where the matrix $\bA_i$ is composed of i.i.d.~Gaussian entries $\cN(0,1)$.\footnote{The same guarantee also holds for sub-Gaussian measurements.} It is shown in \cite{charisopoulos2019low} (see also \cite{li2020nonconvex}) that $\cA(\cdot)$ satisfies the mixed-norm RIP and $\cS$-outlier bound with
\begin{align*}
\delta_1 \gtrsim 1, \quad\delta_2 \lesssim 1, \quad\delta_3 \gtrsim 1-2p_s,
\end{align*}
as long as $m\gtrsim \frac{(n_1+n_2)r}{(1-2p_s)^2}\log(\frac{1}{1-2p_s})$. Hence, the scaled subgradient method converges linearly to $\epsilon$-accuracy in $O\left(\frac{1}{(1-2p_s)^2}\log\frac{1}{\epsilon} \right)$ iterations provided that it is initialized properly, making it robust simultaneously to ill-conditioning of the matrix $\bX_{\star}$ and the presence of the outliers.

\item \textit{quadratic sampling:} the measurement operator $\cA_i(\cdot)$ is defined as $\cA_i(\bX_{\star}) = \frac{1}{m}\langle \ba_i\ba_i^{\top}, \bX_{\star}\rangle$, where $\bX_{\star}\in\RR^{n\times n}$ is PSD and the vector $\ba_i$ is composed of i.i.d.~Gaussian entries $\cN(0,1)$. It is shown in \cite{charisopoulos2019low} that $\cA(\cdot)$ satisfies the mixed-norm RIP and $\cS$-outlier bound with
\begin{align*}
\delta_1 \gtrsim 1, \quad\delta_2 \lesssim \sqrt{r}, \quad\delta_3 \gtrsim 1-2p_s,
\end{align*}
as long as $m\gtrsim \frac{nr^2}{(1-2p_s)^2}\log(\frac{\sqrt{r}}{1-2p_s})$. Hence, the scaled subgradient method converges linearly to $\epsilon$-accuracy in $O\left(\frac{r}{(1-2p_s)^2}\log\frac{1}{\epsilon} \right)$ iterations, as long as it is seeded with a good initialization. In comparison, the iteration complexity of the scaled gradient descent method over the least-squares loss function depends polynomially with respect to $n$, due to the heavy-tailed nature of the observation operator, let alone its sensitivity to the outliers.
\end{itemize}

\begin{remark}[Initialization]The above discussions are limited to the local iteration complexity, assuming a good initialization satisfying \eqref{eq:scaledSM_initialization} is available. In the absence of outliers, a standard spectral method can be used, as shown in \cite{tong2020accelerating}. In the presence of outliers, a truncated spectral method could be used; see e.g.~\cite{zhang2016provable,li2020non}.
\end{remark}

\section{Numerical Experiments} \label{sec:numerical}

In this section, we conduct numerical experiments to corroborate our theory. 

\paragraph{Comparisons of \texttt{ScaledSM} and $\texttt{VanillaSM}$.}
Since the vanilla subgradient method (\texttt{VanillaSM}) has been extensively benchmarked against other methods and established as state-of-the-art in \cite{charisopoulos2019low}, we focus on comparing the proposed scaled subgradient method (\texttt{ScaledSM}) to \texttt{VanillaSM} in the sequel. 
In general, the geometrically decaying stepsize \eqref{eq:geometric_stepsize} is a more practical choice than the Polyak's stepsize \eqref{eq:Polyak_stepsize}, especially in the presence of noise and outliers. Nonetheless, using properly tuned geometrically decaying stepsizes essentially matches the performance of using Polyak's stepsizes, for both \texttt{VanillaSM} \cite{li2020nonconvex} and \texttt{ScaledSM}, the latter of which we shall illustrate in the ensuing experiments. As such, we adopt Polyak's stepsizes in the comparisons below, to emulate the scenario where both methods are tuned to operate under its largest allowable stepsizes and achieve the fastest convergence. In addition, both algorithms start from the same initialization.

We consider two low-rank matrix estimation tasks discussed in Section~\ref{subsec:case_studies}. Recall the observation model in \eqref{eq:observation_model} and its entrywise version in \eqref{eq:entrywise_observation}, which we repeat below for convenience:
\begin{align*}
y_i = \cA_i(\bX_{\star}) + w_i + s_i, \qquad 1\le i\le m.
\end{align*}
In both tasks, the noise entry $w_i$ is composed of i.i.d.~entries uniformly drawn from $[-\frac{\sigma_{w}}{m},\frac{\sigma_{w}}{m}]$. The outlier $s_i = \bar{s}_i \Omega_i$ is a sparse vector where $\Omega_i$ is a Bernoulli random variable with probability $p_s \in [0,1/2)$, and $\bar{s}_i$ is drawn uniformly at random from $[-10\|\cA(\bX_{\star})\|_{\infty}, 10\|\cA(\bX_{\star})\|_{\infty}]$.
For ease of presentation, we assume that $\bX_{\star}\in\RR^{n\times n}$ is a square matrix with rank as $r$. We collect $m=8nr$ measurements using the following respective measurement models. The signal-to-noise ratio is defined as $\mathrm{SNR}\coloneqq20\log_{10}\frac{\|\cA(\bX_{\star})\|_{1}}{\sigma_{w}}$ in $\mathrm{dB}$.
\begin{itemize}
\item {\em Matrix sensing.} Here, the measurement operator $\cA_i(\cdot)$ is defined as $\cA_i(\bX_{\star}) = \frac{1}{m}\langle \bA_i, \bX_{\star}\rangle$, where the matrix $\bA_i$ is composed of i.i.d.~Gaussian entries $\cN(0,1)$. The ground truth matrix $\bX_{\star}$ is generated via its compact SVD $\bX_{\star} = \bU_{\star}\bSigma_{\star}\bV_{\star}^{\top}$, where $\bU_{\star}\in\RR^{n\times r}$ is generated as the orthonormal basis vectors of an $n\times r$ matrix with i.i.d.~Rademacher entries, $\bSigma_{\star}$ is a diagonal matrix with the diagonal entries linearly distributed from $1$ to $\kappa$, and $\bV_{\star}\in\RR^{n\times r}$ is generated in a similar fashion to $\bU_{\star}$. 

\item {\em Quadratic sampling.} Here, the measurement operator $\cA_i(\cdot)$ is defined as $\cA_i(\bX_{\star}) = \frac{1}{m}\langle \ba_i\ba_i^{\top}, \bX_{\star} \rangle$, where $\ba_{i}$ is composed of i.i.d.~Gaussian entries $\cN(0,1)$. The ground truth matrix $\bX_{\star}$ is positive semi-definite, and is generated via its compact SVD $\bX_{\star}=\bU_{\star}\bSigma_{\star}\bU_{\star}^{\top}$, where $\bU_{\star}$ and $\bSigma_{\star}$ are generated in the same manner described above. 
\end{itemize}

\begin{figure}[!ht]
\centering
\begin{tabular}{cc}
\includegraphics[width=0.48\linewidth]{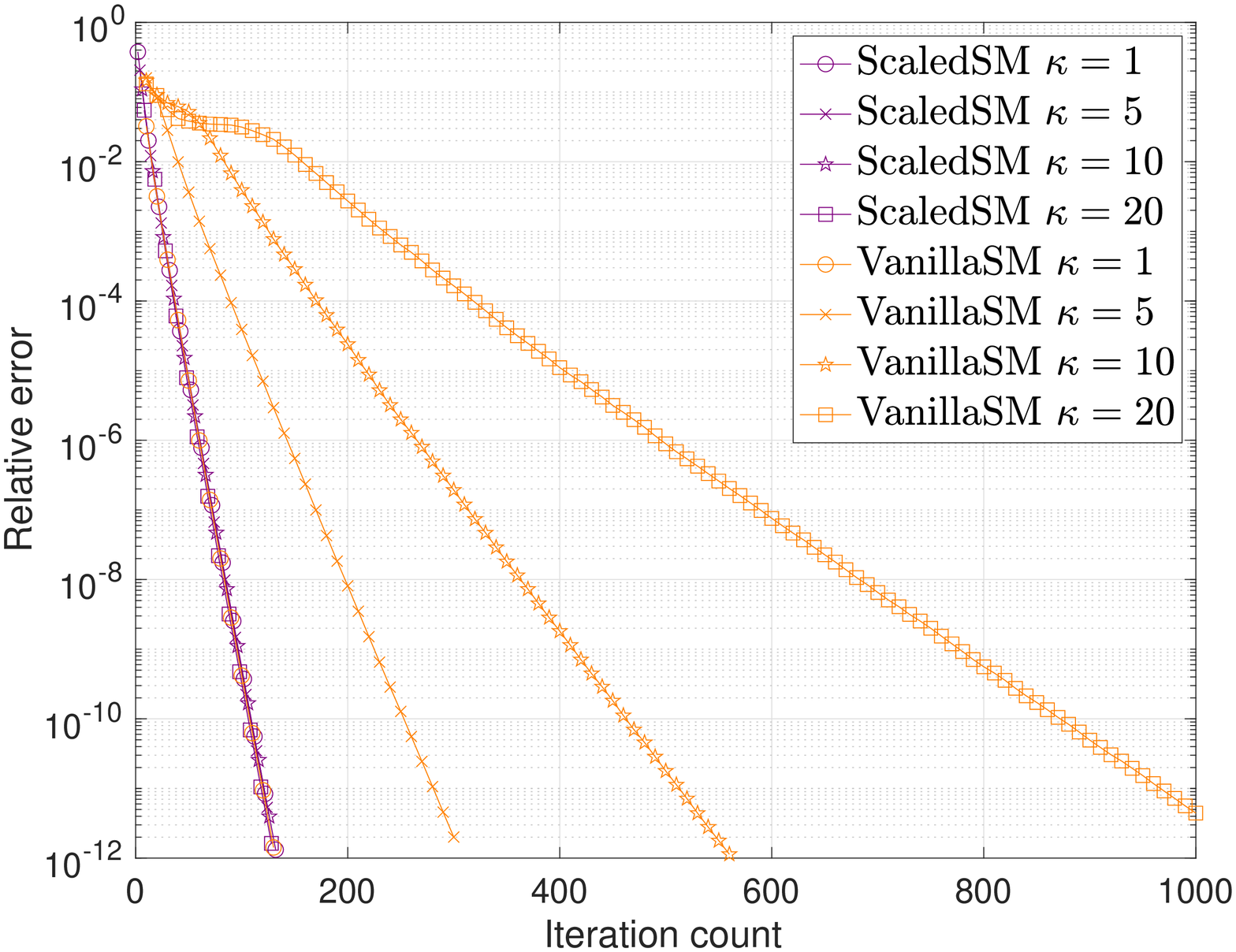} & \includegraphics[width=0.48\linewidth]{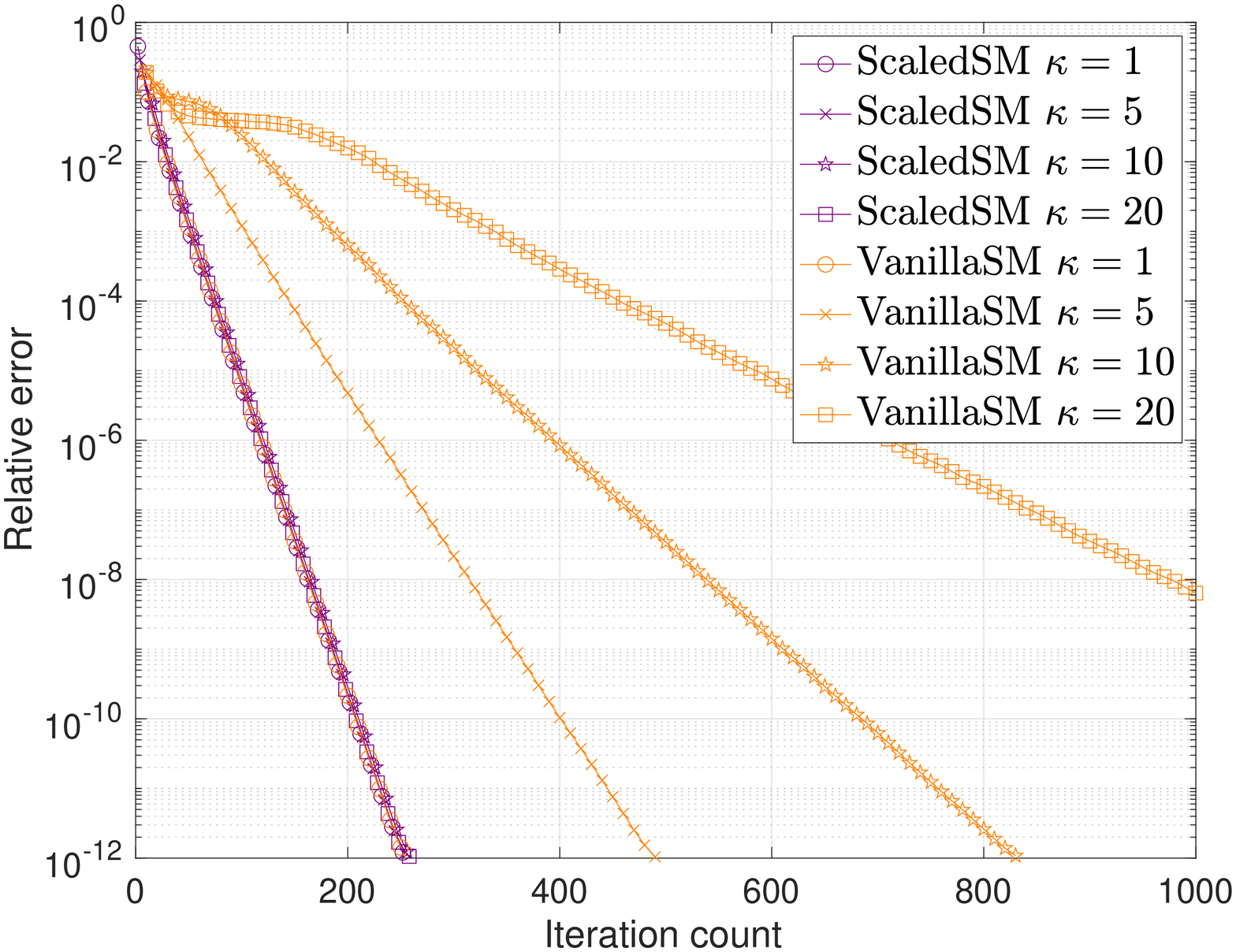} \\
{\small (a) without outliers} & {\small (b) with $20\%$ outliers }
\end{tabular}
\caption{Performance comparisons of  \texttt{ScaledSM}  and \texttt{VanillaSM} for matrix sensing without or with outliers under different condition numbers $\kappa = 1,5,10,20$, where $n=100$, $r=10$, and $m=8nr$.}\label{fig:MS}
\end{figure}

\begin{figure}[!ht]
\centering
\begin{tabular}{cc}
\includegraphics[width=0.48\linewidth]{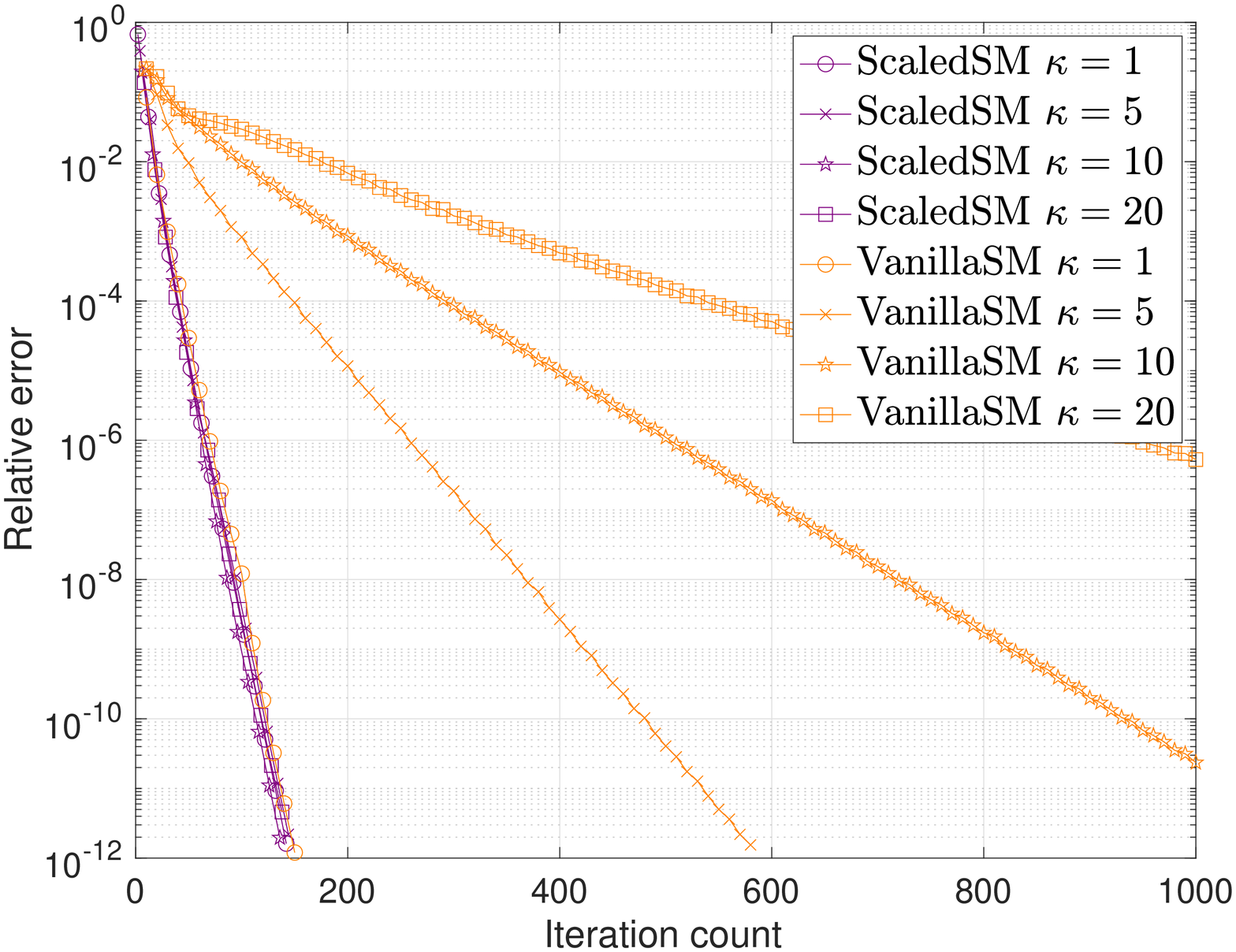} & \includegraphics[width=0.48\linewidth]{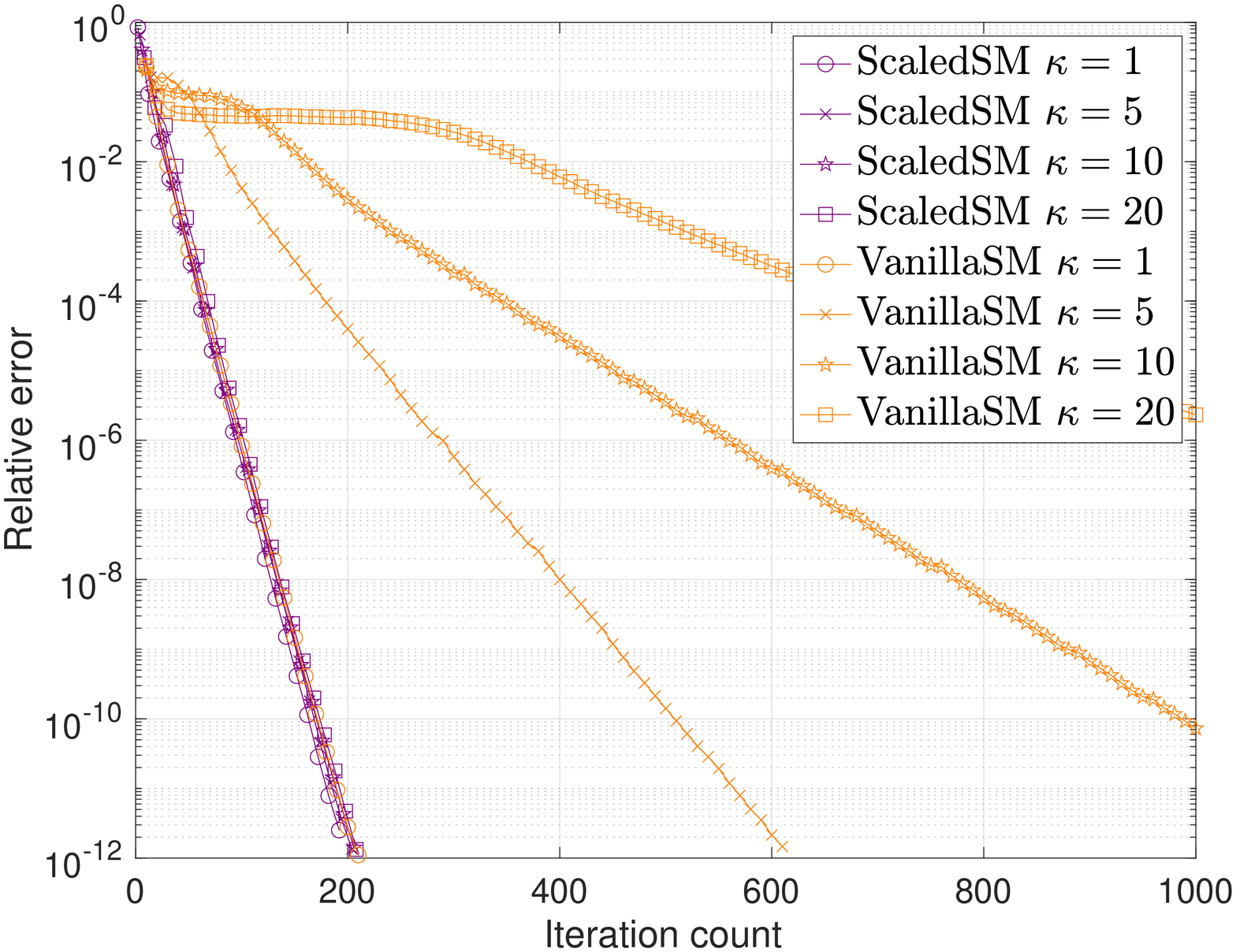} \\
{\small (a) without outliers} & {\small (b) with $20\%$ outliers} 
\end{tabular}
\caption{Performance comparisons of \texttt{ScaledSM} and \texttt{VanillaSM} for quadratic sampling without or with outliers under different condition numbers $\kappa = 1,5,10,20$, where $n=100$, $r=5$, and $m=8nr$.}\label{fig:QS}
\end{figure}

Denote the index set of the remaining measurements after discarding $p_{s}$ fraction with largest amplitudes as $\cI=\{i: |y_i| \le |\by|_{(\lceil p_s m \rceil)}\}$, where $|\by|_{(k)}$ denotes the $k$th largest amplitude of $\by$. The truncated spectral method in \cite{zhang2016provable,li2020non} is used for initialization, where we apply the standard spectral method only on the subset $\cI$ of the measurements. For matrix sensing, it follows the prescription in \cite{li2020non}, and for quadratic sampling, it follows \cite{li2018nonconvex}.

Fig.~\ref{fig:MS} shows the relative reconstruction error $\|\bX_{t}-\bX_{\star}\|_{\fro}/\|\bX_{\star}\|_{\fro}$ for matrix sensing without outliers (in (a)) and with $20\%$ outliers (i.e.~$p_s=0.2$ in (b)) under different condition numbers $\kappa$, where $\bX_{t}$ is the estimated low-rank matrix at the $t$-th iteration. Fig.~\ref{fig:QS} shows the relative reconstruction error for quadratic sampling under the same setting. It can be seen that \texttt{ScaledSM} is insensitive to $\kappa$ and converges as a fast rate that is independent with $\kappa$, while the convergence of \texttt{VanillaSM} slows down dramatically with the increase of $\kappa$. In addition, both algorithms still converge linearly in the presence of outliers, thanks to the robustness of the least absolute deviations.

\begin{figure}[!ht]
\centering
\begin{tabular}{cc}
\includegraphics[width=0.48\linewidth]{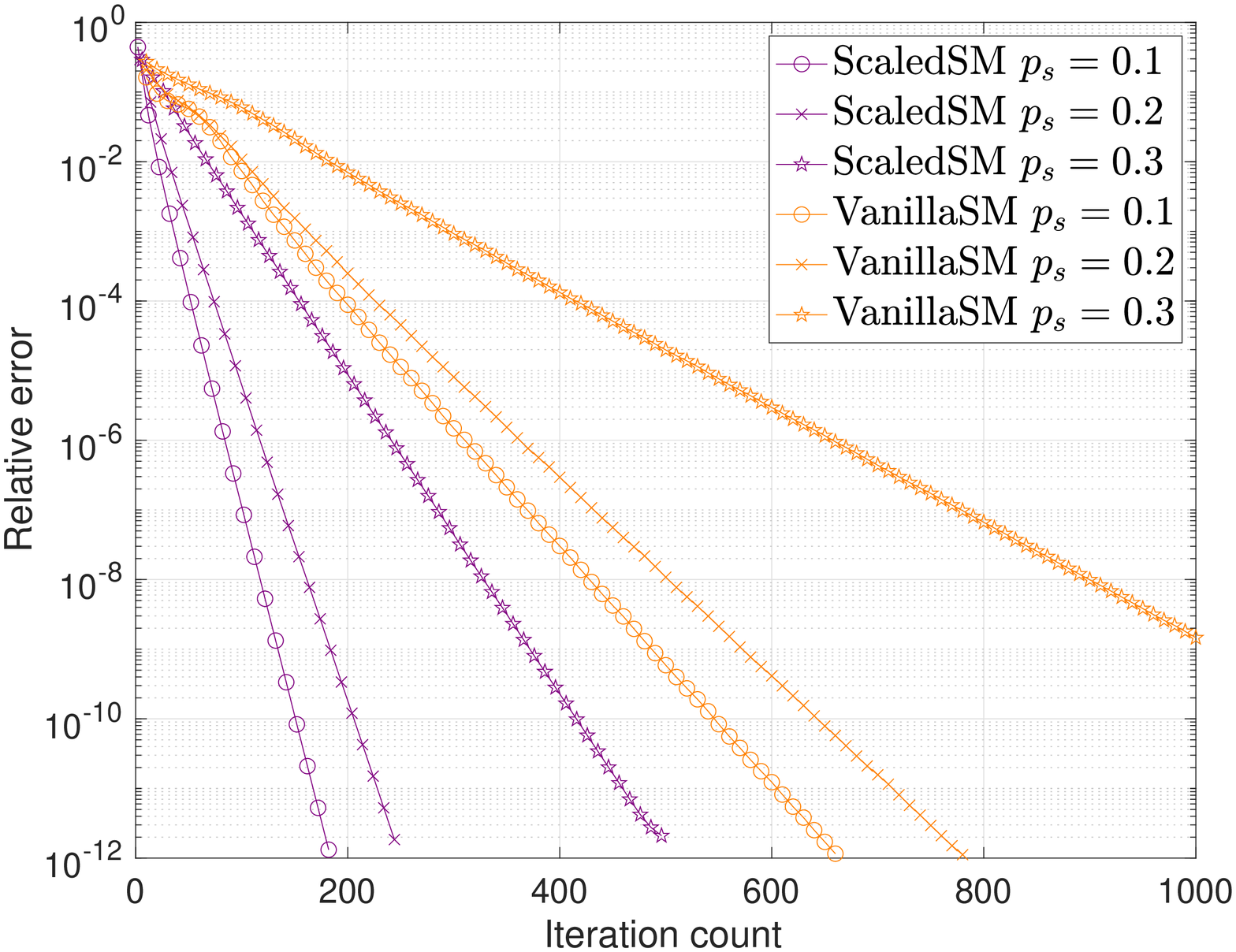}  & \includegraphics[width=0.48\linewidth]{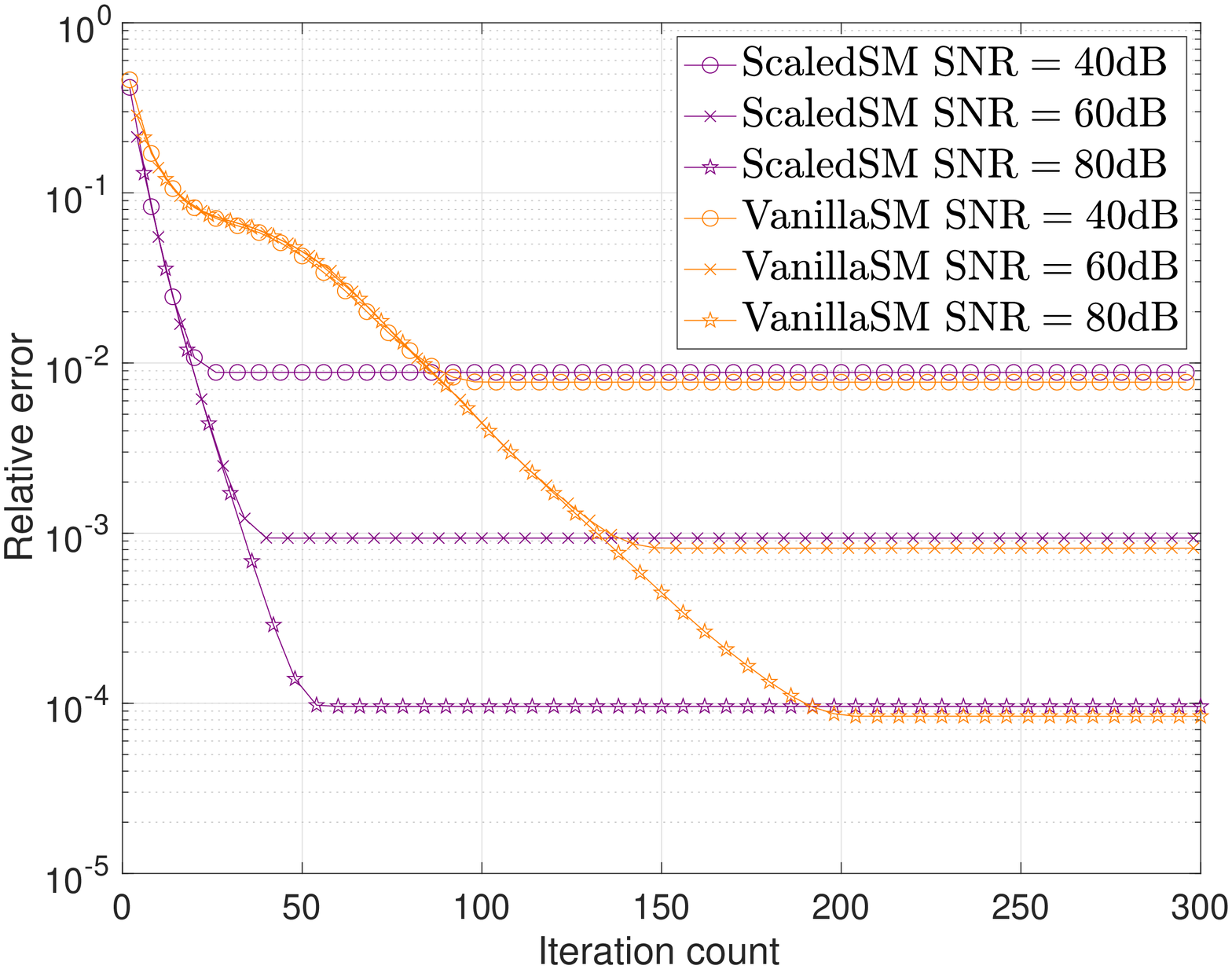}\\
{\small (a) with varying outliers} & {\small (b) with $10\%$ outliers and noise}
\end{tabular}
\caption{Performance comparisons of \texttt{ScaledSM} and \texttt{VanillaSM} for matrix sensing under different noise and outlier models, where $n=100$, $r=10$, $m=8nr$, and $\kappa=10$.}\label{fig:MS_outliers}
\end{figure}

Fig.~\ref{fig:MS_outliers} further examines the impact of the amount of outliers and noise on the convergence speed in matrix sensing with a fixed condition number $\kappa =10$, where Fig.~\ref{fig:MS_outliers}~(a) illustrates the convergence speed at varying amounts of outliers $p_s =0.1, 0.2, 0.3$ respectively, and Fig.~\ref{fig:MS_outliers}~(b) illustrates the convergence with $p_s =0.1$ and additional bounded noise with varying $\mathrm{SNR}=40,60,80\mathrm{dB}$. Similarly, Fig.~\ref{fig:QS_outliers} shows the same plots for quadratic sampling under the same setting. 
It can be seen that the convergence rate of \texttt{ScaledSM} slows down with the increase of outliers, which is again, consistent with the theory. Furthermore, the reconstruction is robust in the presence of additional bounded noise, where both \texttt{ScaledSM} and \texttt{VanillaSM} converge to the same accuracy that is proportional to the noise level, with \texttt{ScaledSM} converging at a faster speed.

\begin{figure}[!ht]
\centering
\begin{tabular}{cc}
\includegraphics[width=0.48\linewidth]{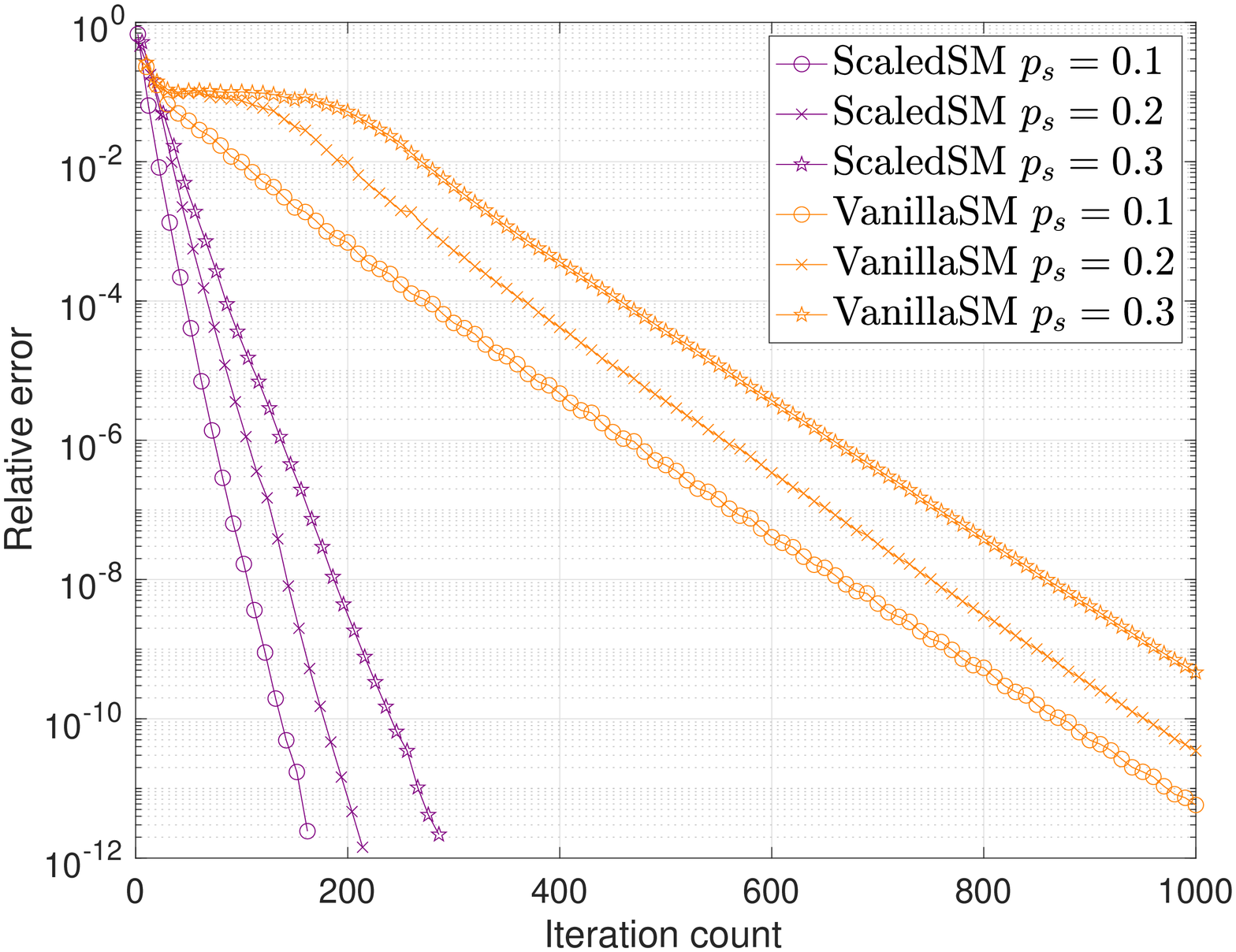}  & \includegraphics[width=0.48\linewidth]{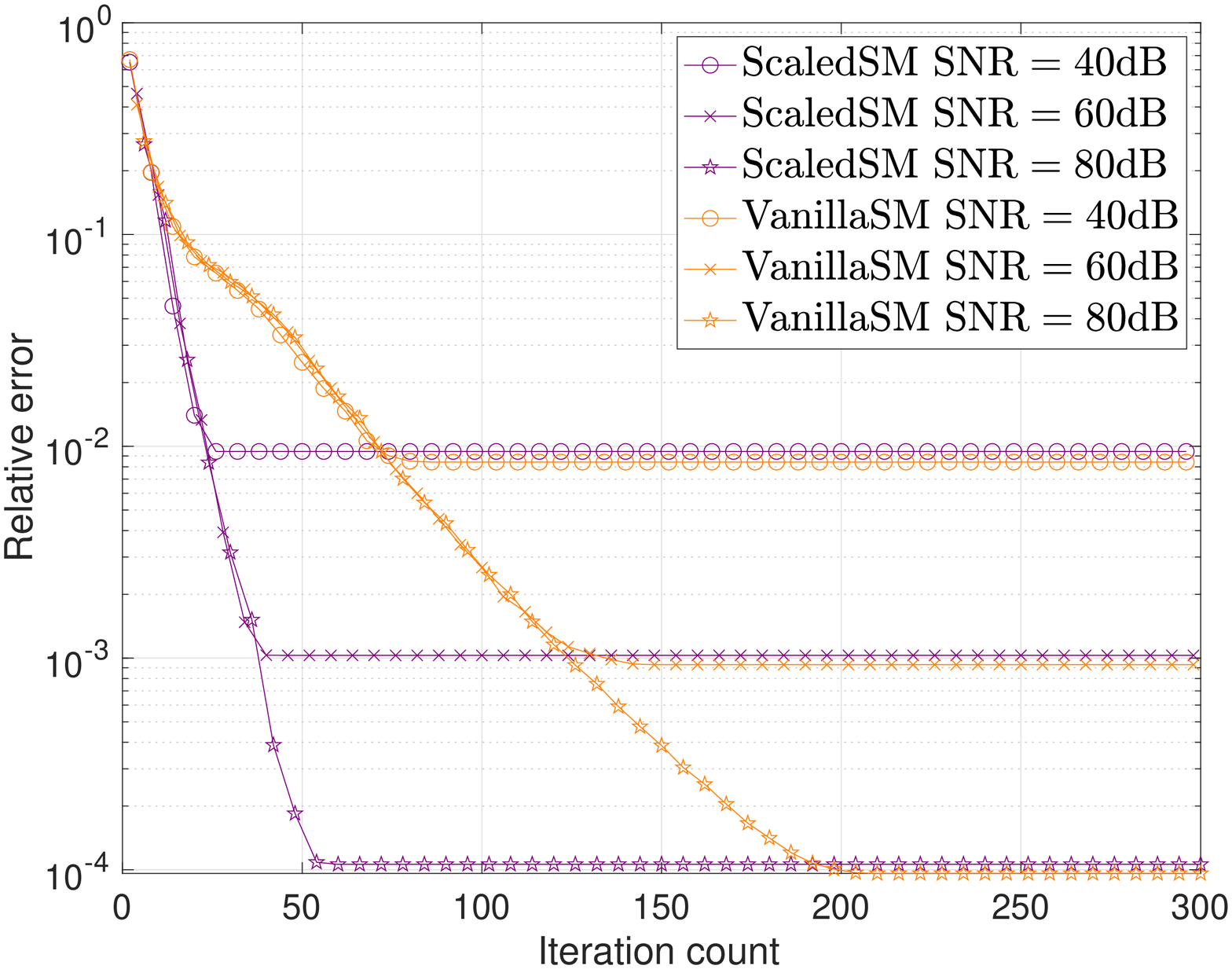}\\
{\small (a) with varying outliers} & {\small (b) with $10\%$ outliers and noises}
\end{tabular}
\caption{Performance comparisons of \texttt{ScaledSM} and \texttt{VanillaSM} for quadratic sampling under different noise and outlier models, where $n=100$, $r=5$, $m=8nr$, and $\kappa=10$. }\label{fig:QS_outliers}
\end{figure}

\paragraph{Comparisons of stepsize schedules.} We now compare the geometrically decaying stepsize with the Polyak's stepsize for \texttt{ScaledSM}, which essentially mirrors similar experiments conducted in \cite{li2020nonconvex} for \texttt{VanillaSM}. We run \texttt{ScaledSM} for at most $T=1000$ iterations, and stop early if the relative error achieves $10^{-12}$. Fig.~\ref{fig:MS_stepsizes} and Fig.~\ref{fig:QS_stepsizes} show the performance comparisons of \texttt{ScaledSM} under various stepsize schedules for matrix sensing and quadratic sampling, respectively. For both figures, (a) shows the final relative error of \texttt{ScaledSM} using geometrically decaying stepsizes under various $(\lambda, q)$, where we see that \texttt{ScaledSM} converges as long as $\lambda$ is not too large and $q$ is not too small. We further plot the relative error versus the iteration count for \texttt{ScaledSM} using geometrically decaying stepsizes with a fixed $q$ and various $\lambda$ in (b), and with a fixed $\lambda$ and various $q$ in (c), where the performance using Polyak's stepsizes is plotted for comparison. It can be seen that using Polyak's stepsizes yields the fastest convergence. Indeed, if properly tuned, geometrically decaying stepsizes match Polyak's stepsizes, as shown in (d). In general, we find that there is a wide range of parameters for geometrically decaying stepsizes where \texttt{ScaledSM} converges in a fast speed comparable to that of using Polyak's stepsizes, as long as $\lambda$ is not too large and $q$ is not too small.

\begin{figure*}[!ht]
\centering
\begin{tabular}{cc}
\includegraphics[width=0.48\textwidth]{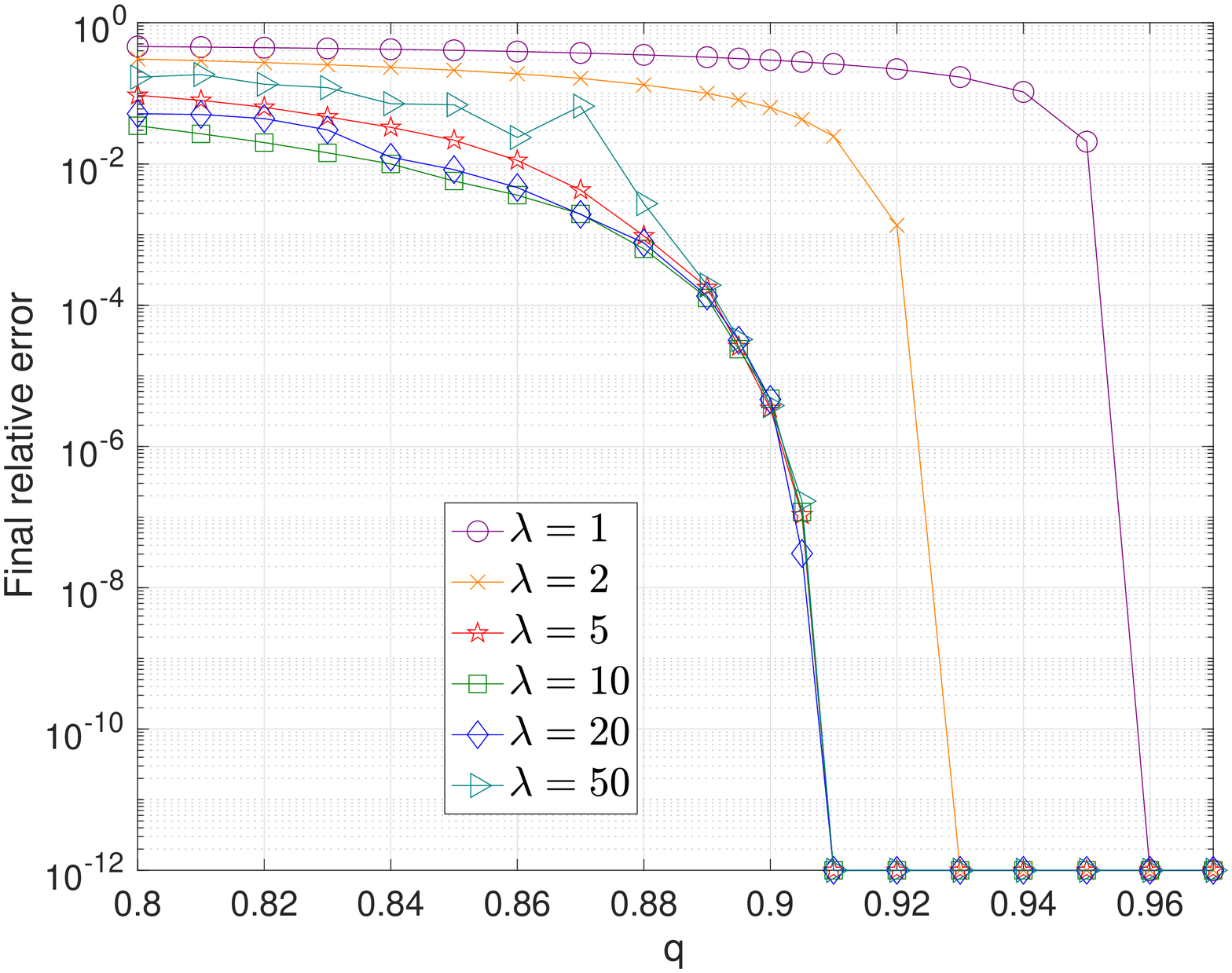} &
\includegraphics[width=0.48\textwidth]{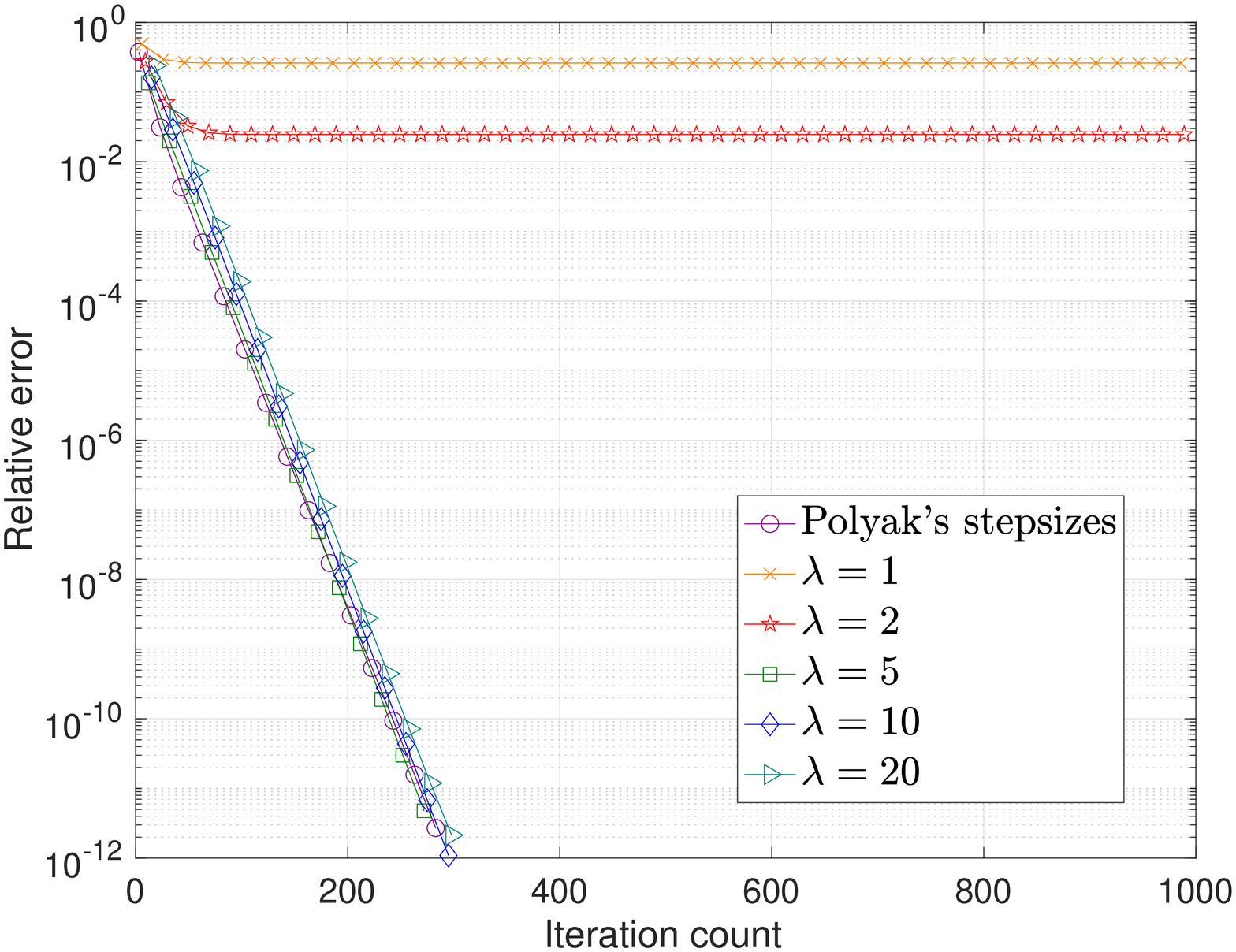} \\
{\small (a)} & {\small (b)} \\
\includegraphics[width=0.48\textwidth]{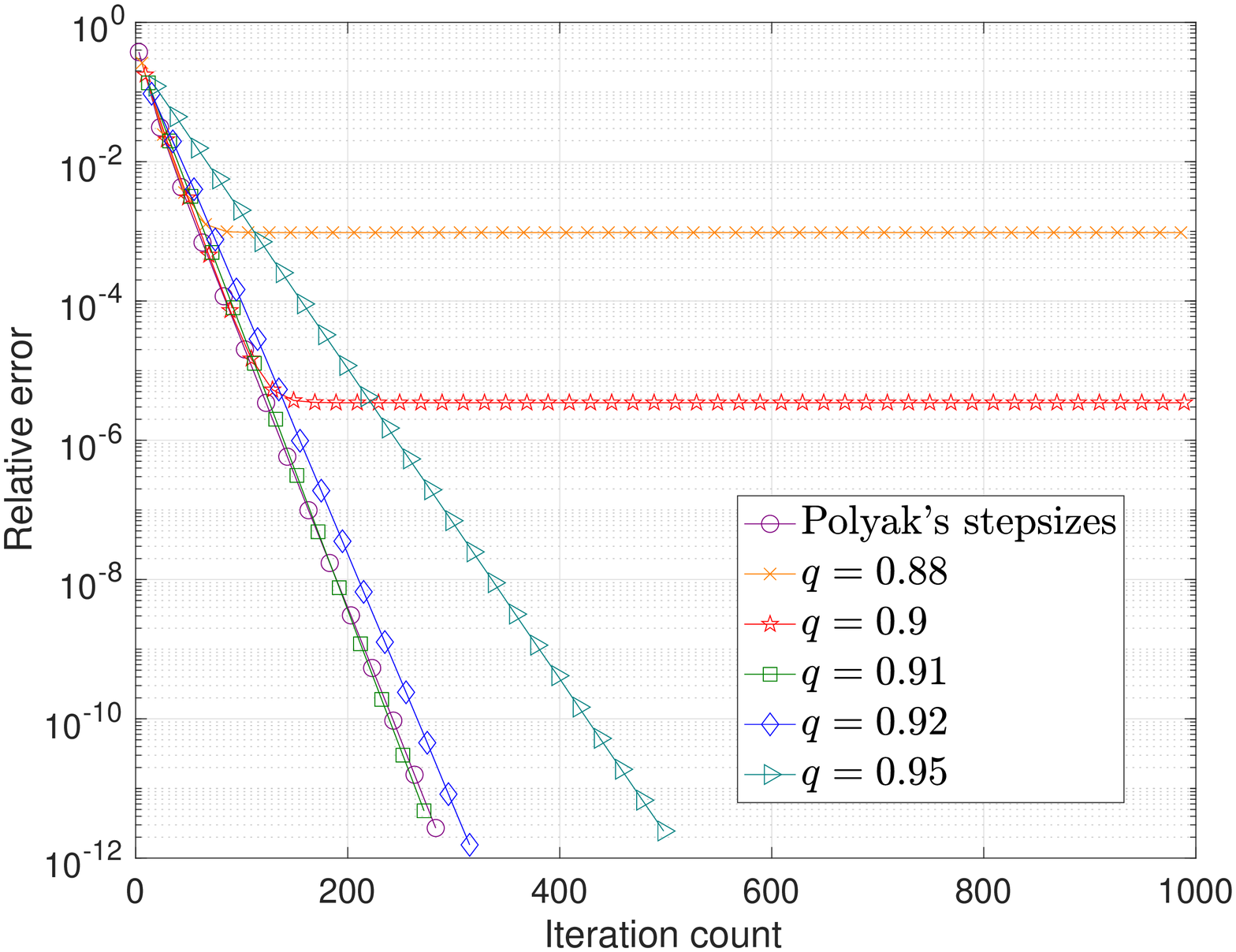} &
\includegraphics[width=0.48\textwidth]{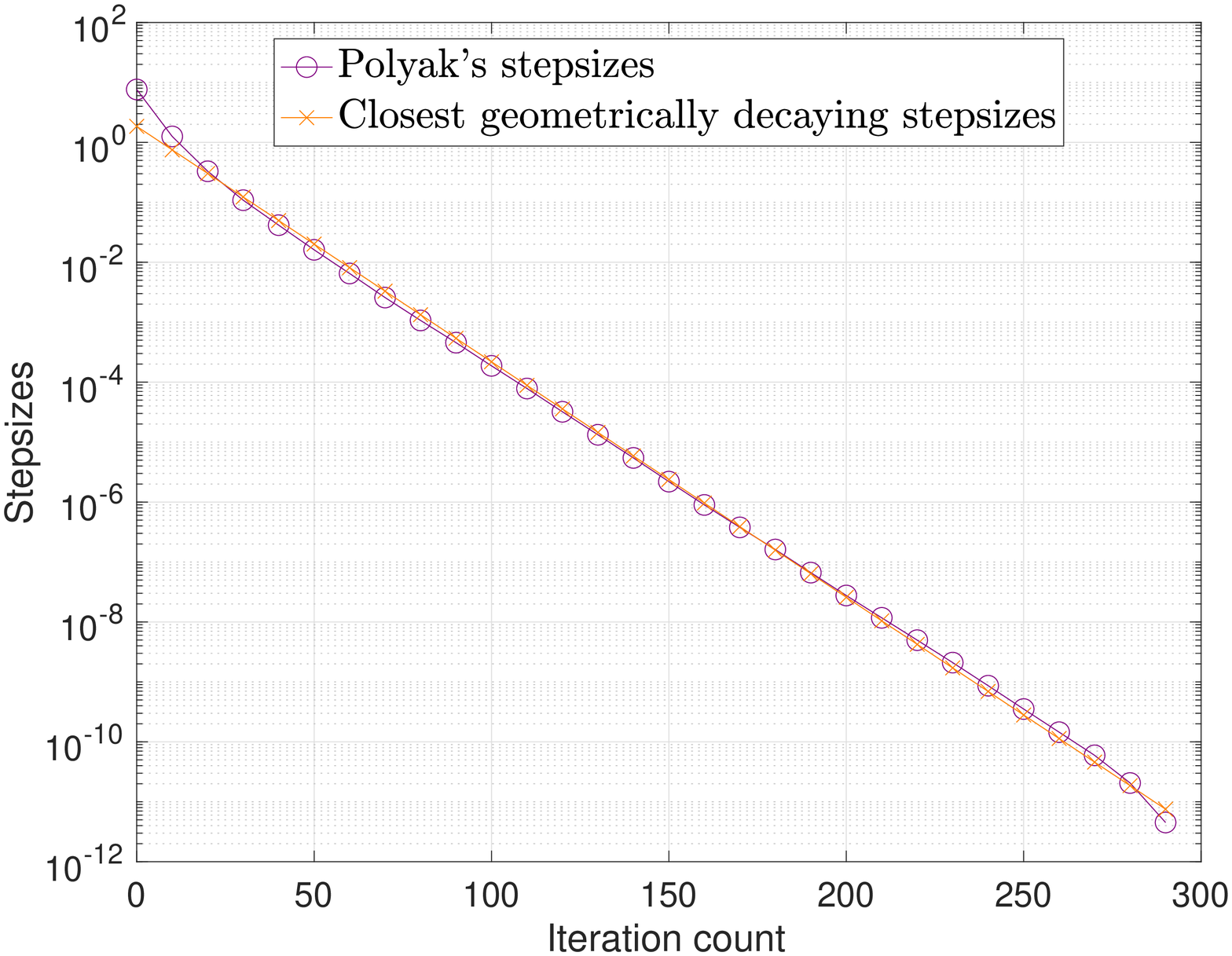} \\
{\small (c)} & {\small (d)}
\end{tabular}
\caption{Performance comparisons of \texttt{ScaledSM} for matrix sensing using  geometrically decaying stepsizes with parameters $(\lambda, q)$ and Polyak's stepsizes, where we fix $n=100$, $r=10$, $m=8nr$, $\kappa=10$, and $p_s=0.2$: (a) the final relative error for various combinations of $(\lambda, q)$, (b) the relative error versus iteration count for fixed $q=0.91$ and varying $\lambda$, (c) the relative error versus iteration count for fixed $\lambda=5$ and varying $q$, and (d) shows properly tuned geometrically decaying stepsizes with $\lambda=1.85$ and $q=0.91$ essentially match Polyak's stepsizes.}\label{fig:MS_stepsizes}
\end{figure*}

\begin{figure*}[!ht]
\centering
\begin{tabular}{cc}
\includegraphics[width=0.48\textwidth]{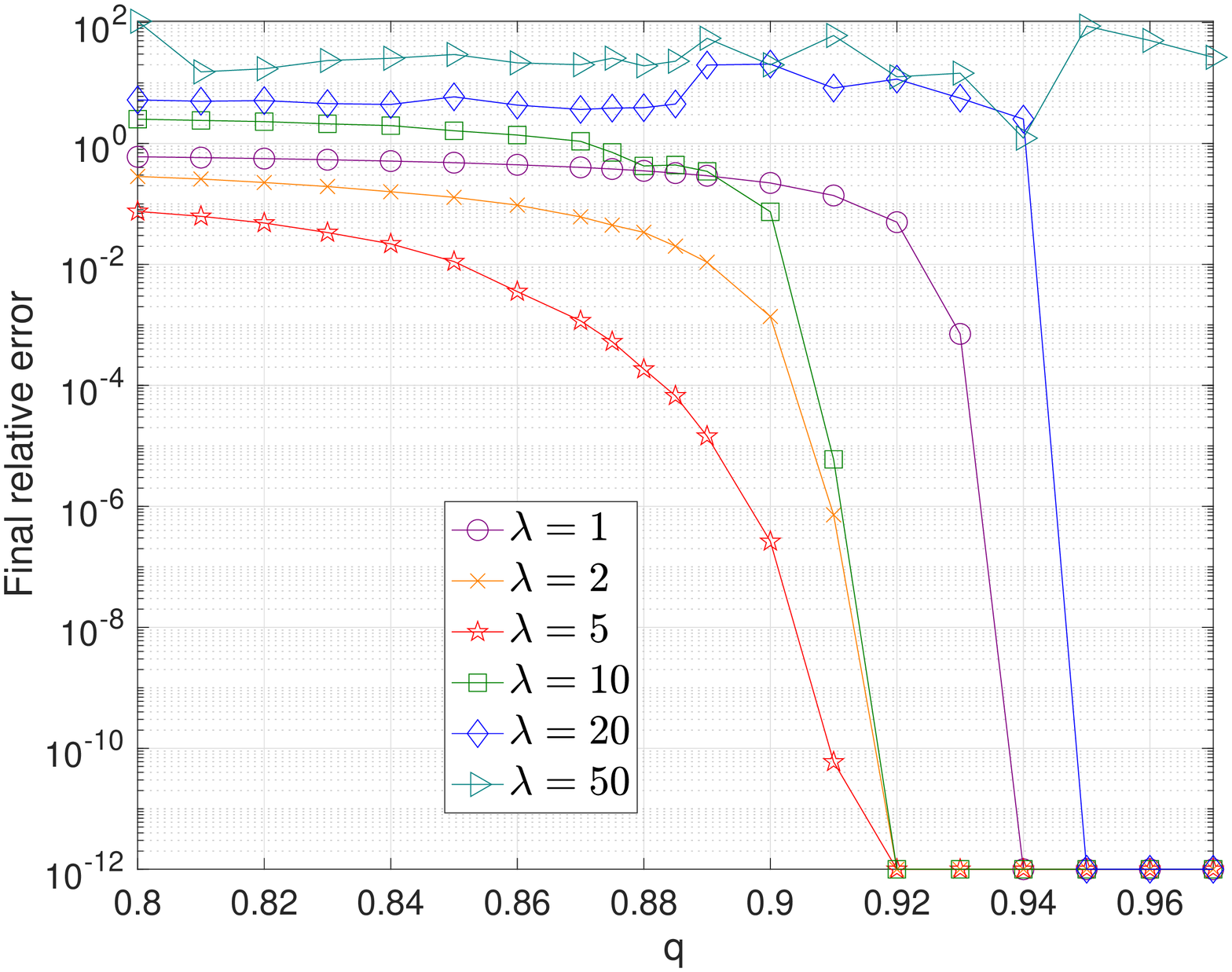} & 
\includegraphics[width=0.48\textwidth]{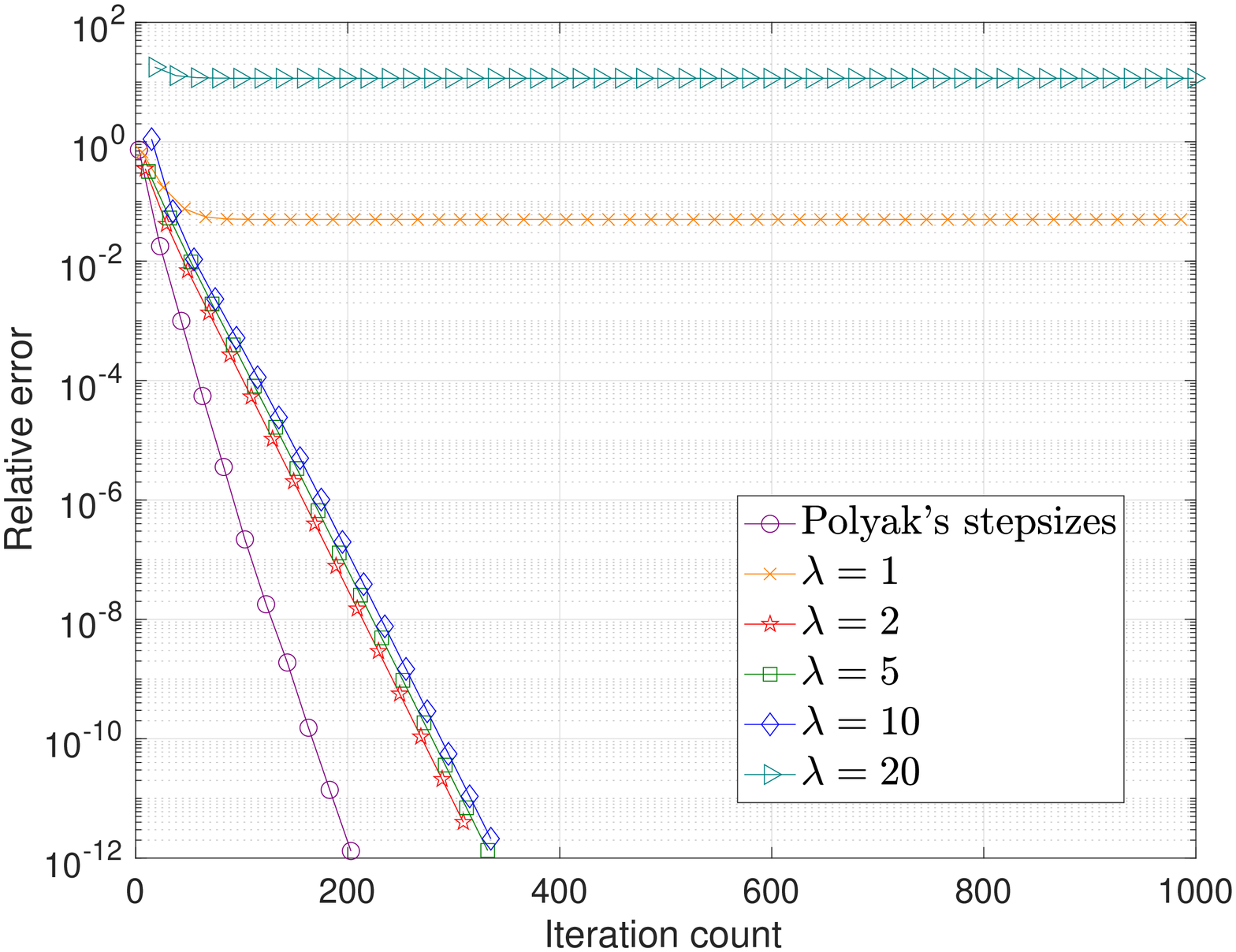} \\
{\small (a)} & {\small (b)} \\
\includegraphics[width=0.48\textwidth]{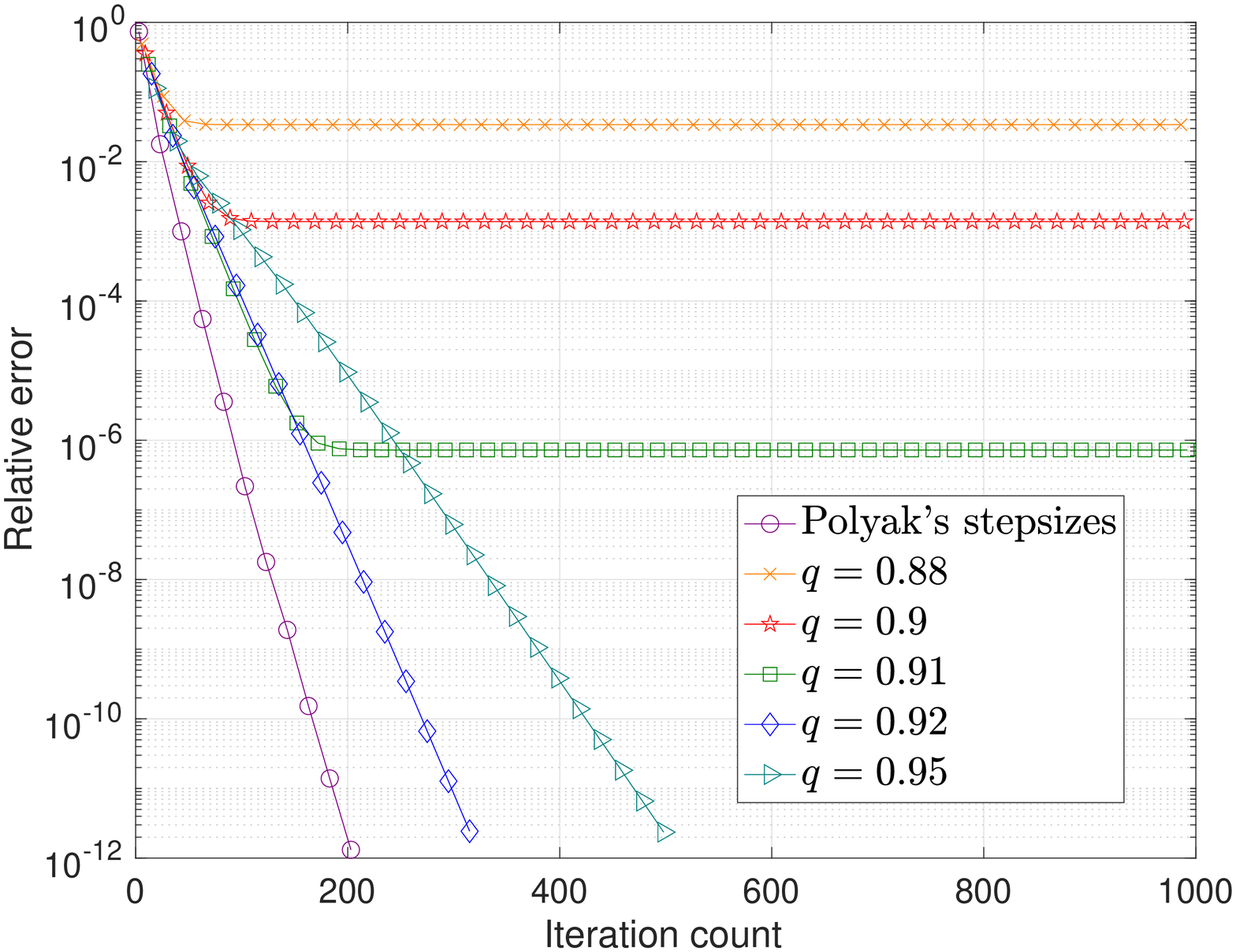} & 
\includegraphics[width=0.48\textwidth]{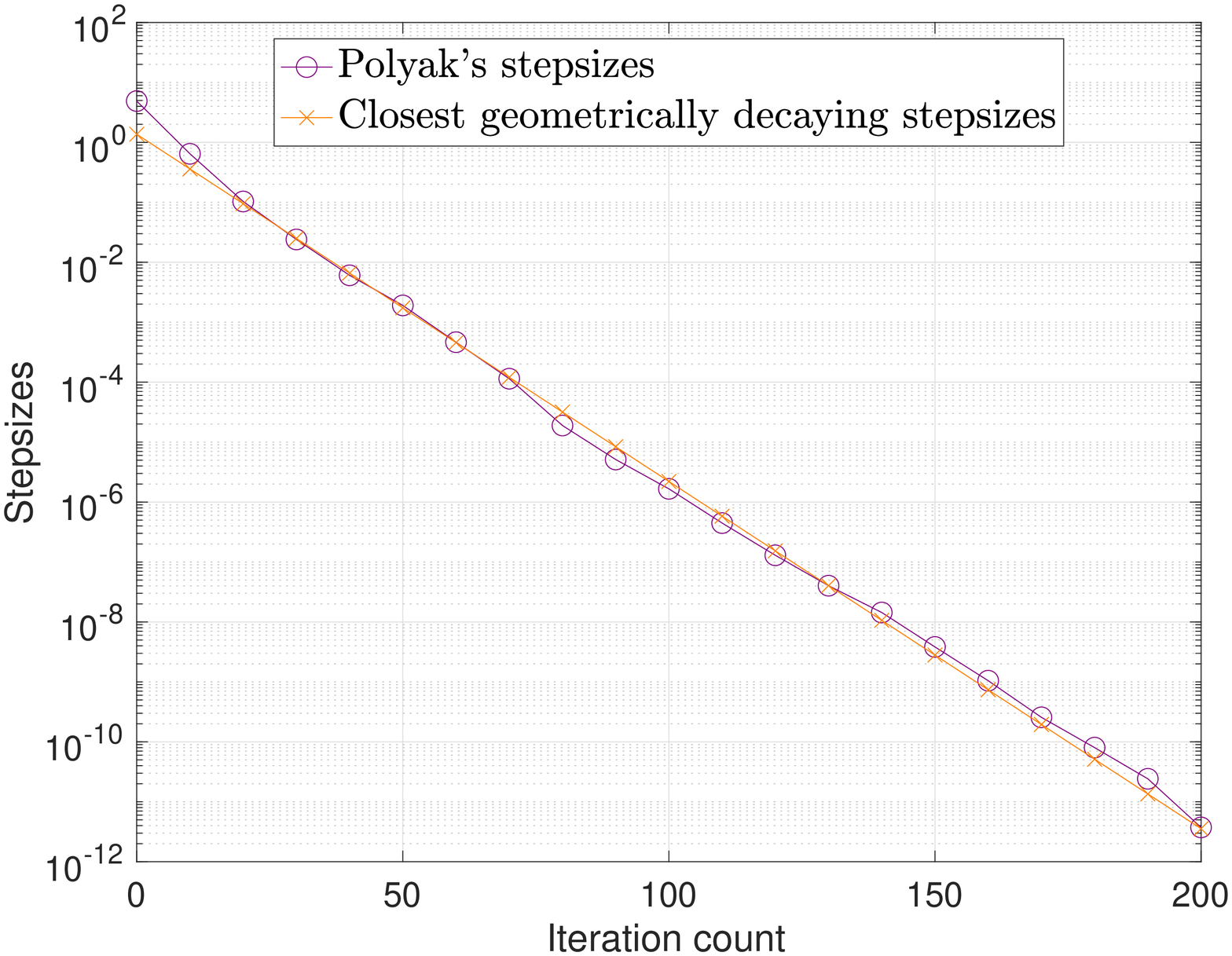} \\
{\small (c)} & {\small (d)}
\end{tabular}
\caption{Performance comparisons of \texttt{ScaledSM} for quadratic sampling using  geometrically decaying stepsizes with parameters $(\lambda, q)$ and Polyak's stepsizes, where we fix $n=100$, $r=5$, $m=8nr$, $\kappa=10$, and $p_s=0.2$: (a) the final relative error for various combinations of $(\lambda, q)$, (b) the relative error versus iteration count for fixed $q=0.92$ and varying $\lambda$, (c) the relative error versus iteration count for fixed $\lambda=2$ and varying $q$, and (d) shows properly tuned geometrically decaying stepsizes with $\lambda=1.36$ and $q=0.88$ essentially match Polyak's stepsizes.}\label{fig:QS_stepsizes} 
\end{figure*}

\section{Discussions} \label{sec:discussion}

This paper proposes scaled subgradient methods to minimize a family of nonsmooth and nonconvex formulations for low-rank matrix recovery---in particular, the residual sum of absolute errors---and guarantees its convergence at a rate that is almost dimension-free and independent of the condition number, even in the presence of corruptions. We illustrate the effectiveness of our approach by providing state-of-the-art performance guarantees for robust low-rank matrix sensing and quadratic sampling. In the future, it is of interest to study the performance of scaled subgradient methods for other signal estimation and statistical inference tasks, such as training student-teacher neural networks \cite{davis2020stochastic}, as well as using random initializations \cite{chen2019gradient}.

\section*{Acknowledgements}

The work of T.~Tong and Y.~Chi is supported in part by ONR under the grants N00014-18-1-2142 and N00014-19-1-2404, by ARO under the grant W911NF-18-1-0303, and by NSF under the grants CAREER ECCS-1818571, CCF-1806154 and CCF-1901199.

\bibliographystyle{alphaabbr}
\bibliography{bibfileNonconvex_TSP}

\appendix

\section{Technical Lemmas}

This section gathers several useful lemmas that will be used in the proof. Throughout the appendix, we use $\bX_{\star}$ to denote the ground truth, with its compact SVD as $\bX_{\star}=\bU_{\star}\bSigma_{\star}\bV_{\star}^{\top}$, and $\bF_{\star}=\begin{bmatrix}\bL_{\star}\\ \bR_{\star}\end{bmatrix}=\begin{bmatrix}\bU_{\star}\bSigma_{\star}^{1/2}\\ \bV_{\star}\bSigma_{\star}^{1/2}\end{bmatrix}$.
 For any factor matrix $\bF\coloneqq\begin{bmatrix}\bL\\ \bR\end{bmatrix}\in\RR^{(n_{1}+n_2)\times r}$, we define the optimal alignment matrix $\bQ$ between $\bF$ and $\bF_{\star}$ as 
\begin{align}
\bQ\coloneqq\argmin_{\bQ\in\GL(r)}\;\left\Vert (\bL\bQ-\bL_{\star})\bSigma_{\star}^{1/2}\right\Vert _{\fro}^{2}+\left\Vert (\bR\bQ^{-\top}-\bR_{\star})\bSigma_{\star}^{1/2}\right\Vert _{\fro}^{2},\label{eq:Q_def}
\end{align}
whenever the minimum is achieved.\footnote{If there exist multiple minimizers, we arbitrarily choose one as $\bQ$.}

\begin{lemma}[\cite{tong2020accelerating}]\label{lemma:Q_existence} Fix any factor matrix $\bF\coloneqq\begin{bmatrix}\bL\\ \bR\end{bmatrix}\in\RR^{(n_{1}+n_2)\times r}$. Suppose that
\begin{align*}
\dist(\bF,\bF_{\star})=\sqrt{\inf_{\bQ\in\GL(r)}\left\Vert \left(\bL\bQ-\bL_{\star}\right)\bSigma_{\star}^{1/2}\right\Vert _{\fro}^{2}+\left\Vert \left(\bR\bQ^{-\top}-\bR_{\star}\right)\bSigma_{\star}^{1/2}\right\Vert _{\fro}^{2}}<\sigma_{r}(\bX_{\star}),
\end{align*}
then the minimizer of the above minimization problem is attained at some $\bQ\in\GL(r)$, i.e.~the optimal alignment matrix $\bQ$ between $\bF$ and $\bF_{\star}$ exists. 
\end{lemma}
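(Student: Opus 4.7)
\textbf{Proof proposal for Lemma~\ref{lemma:Q_existence}.} The plan is to take a minimizing sequence in $\GL(r)$ and show it must stay in a compact subset of $\GL(r)$, after which continuity of the objective closes the argument. Concretely, write
\[
g(\bQ) \coloneqq \left\Vert(\bL\bQ-\bL_\star)\bSigma_\star^{1/2}\right\Vert_\fro^{2}+\left\Vert(\bR\bQ^{-\top}-\bR_\star)\bSigma_\star^{1/2}\right\Vert_\fro^{2},
\]
so $\dist^2(\bF,\bF_\star)=\inf_{\bQ\in\GL(r)}g(\bQ)=:c<\sigma_r^2(\bX_\star)$. Pick any sequence $\bQ_k\in\GL(r)$ with $g(\bQ_k)\to c$. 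The objective $g$ is continuous on $\GL(r)$, so it suffices to exhibit positive constants $a,b>0$ (depending on $\bF,\bF_\star$) such that for all sufficiently large $k$,
\[
a \;\le\; \sigma_r(\bQ_k) \;\le\; \sigma_1(\bQ_k) \;\le\; b.
\]
Once this is proved, Bolzano--Weierstrass gives a subsequence converging to some $\bQ_\star$ with $a\le\sigma_r(\bQ_\star)\le\sigma_1(\bQ_\star)\le b$, hence $\bQ_\star\in\GL(r)$, and $g(\bQ_\star)=c$ by continuity.

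The core step is to control these singular values using the gap $\sigma_r^2(\bX_\star)-c>0$. Since $\bL_\star\bSigma_\star^{1/2}=\bU_\star\bSigma_\star$ has smallest singular value exactly $\sigma_r(\bX_\star)$, Weyl's inequality yields
\[
\sigma_r\!\left(\bL\bQ_k\bSigma_\star^{1/2}\right) \;\ge\; \sigma_r(\bX_\star)-\left\Vert(\bL\bQ_k-\bL_\star)\bSigma_\star^{1/2}\right\Vert_\op \;\ge\; \sigma_r(\bX_\star)-\sqrt{g(\bQ_k)},
\]
which is bounded below by a positive constant $\rho>0$ for all large $k$. The standard submultiplicative bounds $\sigma_r(\bA\bB)\le\sigma_r(\bA)\,\sigma_1(\bB)$ and $\sigma_r(\bA\bB)\le\sigma_1(\bA)\,\sigma_r(\bB)$ then give
\[
\sigma_r(\bQ_k) \;\ge\; \frac{\sigma_r(\bL\bQ_k)}{\sigma_1(\bL)} \;\ge\; \frac{\sigma_r(\bL\bQ_k\bSigma_\star^{1/2})}{\sigma_1(\bL)\,\sigma_1^{1/2}(\bX_\star)} \;\ge\; \frac{\rho}{\sigma_1(\bL)\,\sigma_1^{1/2}(\bX_\star)},
\]
which is the desired lower bound $a$. (The same chain also forces $\sigma_1(\bL)>0$, since otherwise $\bL\bQ_k=0$ would contradict $\sigma_r(\bL\bQ_k\bSigma_\star^{1/2})>0$.) Running the identical argument on the second term of $g$ gives $\sigma_r(\bR\bQ_k^{-\top}\bSigma_\star^{1/2})\ge\rho$, hence
\[
\sigma_r(\bQ_k^{-\top}) \;\ge\; \frac{\rho}{\sigma_1(\bR)\,\sigma_1^{1/2}(\bX_\star)},
\]
and since $\sigma_1(\bQ_k)=1/\sigma_r(\bQ_k^{-\top})$, this produces the upper bound $b$.

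The main obstacle I anticipate is precisely this joint control of $\sigma_r(\bQ_k)$ and $\sigma_1(\bQ_k)$: neither factor of the product $\bL\bQ\bSigma_\star^{1/2}$ is a priori full rank, and $\bQ$ could in principle degenerate either by a singular value collapsing to zero (losing invertibility) or by blowing up to infinity. The symmetry between the $\bL$- and $\bR$-terms is the essential ingredient that rules out both failures simultaneously---the first term prevents $\bQ_k$ from becoming singular, while the second prevents $\bQ_k^{-\top}$ from becoming singular---and the strict inequality $\dist(\bF,\bF_\star)<\sigma_r(\bX_\star)$ is exactly what Weyl's inequality consumes to turn these into quantitative bounds. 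All remaining steps (continuity of $g$, extracting a limit, verifying it achieves the infimum) are then routine.
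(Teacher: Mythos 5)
Your proof is correct. The paper itself does not prove this lemma---it is imported verbatim from \cite{tong2020accelerating}---and your argument (Weyl's inequality applied to both terms of the objective to pin $\sigma_r(\bQ_k)$ away from $0$ and $\sigma_1(\bQ_k)$ away from $\infty$ along a minimizing sequence, then compactness and continuity) is essentially the same compactness argument used in that reference, with the strict gap $\dist(\bF,\bF_\star)<\sigma_r(\bX_\star)$ consumed exactly where it should be.
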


\begin{lemma}[\cite{tong2020accelerating}]\label{lemma:Procrustes} For any factor matrix $\bF\coloneqq \begin{bmatrix}\bL\\ \bR \end{bmatrix}\in\RR^{(n_{1}+n_{2})\times r}$, the following relation holds
\begin{align*}
\dist(\bF,\bF_{\star})\le\sqrt{\sqrt{2}+1}\,\|\bL\bR^{\top}-\bX_{\star}\|_{\fro}.
\end{align*}
\end{lemma}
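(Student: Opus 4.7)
My plan is to exhibit an explicit $\bQ\in\GL(r)$ satisfying
\begin{align*}
\|(\bL\bQ - \bL_\star)\bSigma_\star^{1/2}\|_\fro^2 + \|(\bR\bQ^{-\top} - \bR_\star)\bSigma_\star^{1/2}\|_\fro^2 \le (\sqrt 2 + 1)\,\|\bL\bR^\top - \bX_\star\|_\fro^2,
\end{align*}
whereupon the definition of $\dist(\bF,\bF_\star)$ as an infimum, together with a square root, yields the claim. To streamline the search for $\bQ$, I would first reduce to an orthogonal alignment within the balanced gauge: writing the compact SVD $\bL\bR^\top = \bU\bSigma\bV^\top$ (assume $\rank(\bL\bR^\top) = r$; otherwise perturb $\bL,\bR$ and pass to the limit), every rank-$r$ factorization of $\bL\bR^\top$ has the form $\tilde\bL = \bU\bSigma^{1/2}\bC$, $\tilde\bR = \bV\bSigma^{1/2}\bC^{-\top}$ for some $\bC\in\GL(r)$, each realized as $(\bL\bQ,\bR\bQ^{-\top})$ for a suitable $\bQ$. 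Restricting $\bC = \bO\in O(r)$ yields the balanced sub-family $\tilde\bL = \bU\bSigma^{1/2}\bO$, $\tilde\bR = \bV\bSigma^{1/2}\bO$, and the resulting objective is an upper bound on $\dist(\bF,\bF_\star)^2$.

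With $\bA = \tilde\bL - \bL_\star$ and $\bB = \tilde\bR - \bR_\star$, the identity
\begin{align*}
\bL\bR^\top - \bX_\star = \bA\bR_\star^\top + \bL_\star\bB^\top + \bA\bB^\top
\end{align*}
splits the matrix error into a linear piece and a quadratic remainder. Orthonormality of $\bU_\star$ and $\bV_\star$ gives $\|\bA\bR_\star^\top\|_\fro = \|\bA\bSigma_\star^{1/2}\|_\fro$ and $\|\bL_\star\bB^\top\|_\fro = \|\bB\bSigma_\star^{1/2}\|_\fro$. I would choose $\bO$ as the minimizer of the scaled Procrustes objective $\|\bA\bSigma_\star^{1/2}\|_\fro^2 + \|\bB\bSigma_\star^{1/2}\|_\fro^2$; its stationarity condition on $O(r)$ identifies the cross term $\langle \bA\bR_\star^\top, \bL_\star\bB^\top\rangle$ with the trace of a positive semidefinite matrix, hence non-negative, giving
\begin{align*}
\|\bA\bR_\star^\top + \bL_\star\bB^\top\|_\fro^2 \ge \|\bA\bSigma_\star^{1/2}\|_\fro^2 + \|\bB\bSigma_\star^{1/2}\|_\fro^2.
\end{align*}

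Finally, I would close the loop by absorbing the quadratic remainder via Young's inequality with a free $\tau\in(0,1)$:
\begin{align*}
\|\bL\bR^\top - \bX_\star\|_\fro^2 \ge (1-\tau)\|\bA\bR_\star^\top + \bL_\star\bB^\top\|_\fro^2 - \tfrac{1-\tau}{\tau}\|\bA\bB^\top\|_\fro^2.
\end{align*}
Bounding $\|\bA\bB^\top\|_\fro$ in terms of the scaled factor norms via the balanced-gauge identity $\tilde\bL^\top\tilde\bL = \tilde\bR^\top\tilde\bR$ and an elementary operator-norm estimate produces a quadratic inequality in the ratio $\dist(\bF,\bF_\star)^2/\|\bL\bR^\top - \bX_\star\|_\fro^2$; optimizing $\tau$ yields the sharp constant $\sqrt 2 - 1 = 1/(\sqrt 2 + 1)$ and hence the claim.

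The main obstacle is the quadratic remainder $\bA\bB^\top$, which is not naturally controlled by the scaled norms since the $\bSigma_\star^{1/2}$ weighting sits outside of $\bA$ and $\bB$ rather than between them. Only the joint use of the Procrustes cross-term cancellation and the balanced-gauge constraint absorbs this remainder cleanly into the linear piece, and the $\tau$-optimization produces exactly the constant $\sqrt 2+1$. A rank-one computation with orthogonal singular vectors shows this constant is sharp: the ratio $\dist(\bF,\bF_\star)^2/\|\bL\bR^\top - \bX_\star\|_\fro^2$ attains $\sqrt 2+1$ at an explicit ratio of the singular values, confirming that the strategy loses nothing.
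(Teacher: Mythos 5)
The paper does not prove this lemma itself---it is imported verbatim from \cite{tong2020accelerating}---so your argument must stand on its own, and it has two genuine gaps. The first is the cross-term step. After restricting to the balanced family $\tilde\bL=\bU\bSigma^{1/2}\bO$, $\tilde\bR=\bV\bSigma^{1/2}\bO$, the objective $\|\bA\bSigma_\star^{1/2}\|_{\fro}^2+\|\bB\bSigma_\star^{1/2}\|_{\fro}^2$ contains the term $2\tr(\bO^{\top}\bSigma\bO\bSigma_\star)$, which genuinely depends on $\bO$ because $\bSigma_\star^{1/2}$ sits to the \emph{right} of $\bO$ and does not commute with it. This is a weighted orthogonal Procrustes problem, not the standard one; its stationarity condition does not identify $\langle\bA\bR_\star^{\top},\bL_\star\bB^{\top}\rangle$ with the trace of a PSD matrix, and the inequality $\|\bA\bR_\star^{\top}+\bL_\star\bB^{\top}\|_{\fro}^2\ge\|\bA\bSigma_\star^{1/2}\|_{\fro}^2+\|\bB\bSigma_\star^{1/2}\|_{\fro}^2$ you derive from it is false. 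Concretely, take $r=1$, $\bSigma_\star=1$, $\bX_\star=\bu_\star\bv_\star^{\top}$, $\bX=4\,\bu\bv^{\top}$ with $\langle\bu,\bu_\star\rangle=0.9$ and $\langle\bv,\bv_\star\rangle=-0.1$: the optimal sign is $\bO=+1$, giving $\bA^{\top}\bu_\star=0.8$ and $\bB^{\top}\bv_\star=-1.2$, so the cross term equals $-0.96<0$ and your inequality would read $4.88\ge 6.8$.

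The second gap is that the Young's-inequality absorption cannot close, because the lemma is \emph{global}: there is no assumption that $\bF$ is near $\bF_\star$, so $\bA\bB^{\top}$ is not a small remainder---when $\bX$ is much larger than $\bX_\star$ it is the dominant contribution to $\bX-\bX_\star$, entering with a helpful sign that a one-sided estimate discards. Quantitatively, the best your chain can yield is $\|\bX-\bX_\star\|_{\fro}\ge\|\bA\bR_\star^{\top}+\bL_\star\bB^{\top}\|_{\fro}-\|\bA\bB^{\top}\|_{\fro}$, and in your own extremal configuration ($r=1$, orthogonal singular vectors, $\sigma_\star=1$, $\sigma=\sqrt2-1$) one computes $\|\bA\bB^{\top}\|_{\fro}=\sigma+1$ and $\|\bA\bR_\star^{\top}+\bL_\star\bB^{\top}\|_{\fro}^2=2\sigma+4$, so the chain gives only $\|\bX-\bX_\star\|_{\fro}^2\ge(\sqrt{2\sigma+4}-\sigma-1)^2\approx0.61$, whereas the claim requires $\|\bX-\bX_\star\|_{\fro}^2\ge(2\sigma+2)/(\sqrt2+1)\approx1.17$ (and indeed $\sigma^2+1\approx1.17$ with equality). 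So precisely where the constant is attained---your sharpness computation there is correct---the linear-plus-remainder decomposition provably loses a factor of about two. A smaller issue: the perturbation argument for rank-deficient $\bL\bR^{\top}$ needs lower semicontinuity of $\dist(\cdot,\bF_\star)$, but an infimum of continuous functions is only upper semicontinuous, so the limit runs the wrong way. A correct proof must treat the linear and quadratic pieces of $\bX-\bX_\star$ jointly (e.g., by diagonalizing and reducing to scalar inequalities along singular directions), as in \cite{tong2020accelerating}, rather than bounding $\bA\bB^{\top}$ as an error term.
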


\begin{lemma}[\cite{tong2020accelerating}]\label{lemma:Weyl} For any $\bL\in\RR^{n_{1}\times r},\bR\in\RR^{n_{2}\times r}$, denote $\bDelta_{L}\coloneqq\bL-\bL_{\star}$ and $\bDelta_{R}\coloneqq\bR-\bR_{\star}$. Suppose that $\max\{ \|\bDelta_{L}\bSigma_{\star}^{-1/2}\|_{\op}, \|\bDelta_{R}\bSigma_{\star}^{-1/2}\|_{\op} \}<1$, then one has 
\begin{subequations}
\begin{align}
\left\Vert \bL(\bL^{\top}\bL)^{-1}\bSigma_{\star}^{1/2}\right\Vert _{\op} & \le \frac{1}{1-\|\bDelta_{L}\bSigma_{\star}^{-1/2}\|_{\op}}; \label{eq:Weyl-1L} \\
\left\Vert \bR(\bR^{\top}\bR)^{-1}\bSigma_{\star}^{1/2}\right\Vert _{\op} & \le \frac{1}{1-\|\bDelta_{R}\bSigma_{\star}^{-1/2}\|_{\op}}; \label{eq:Weyl-1R} \\
\left\Vert \bL(\bL^{\top}\bL)^{-1}\bSigma_{\star}^{1/2}-\bU_{\star}\right\Vert _{\op}& \le\frac{\sqrt{2}\|\bDelta_{L}\bSigma_{\star}^{-1/2}\|_{\op}}{1-\|\bDelta_{L}\bSigma_{\star}^{-1/2}\|_{\op}}; \label{eq:Weyl-2L} \\
\left\Vert \bR(\bR^{\top}\bR)^{-1}\bSigma_{\star}^{1/2}-\bV_{\star}\right\Vert _{\op}& \le\frac{\sqrt{2}\|\bDelta_{R}\bSigma_{\star}^{-1/2}\|_{\op}}{1-\|\bDelta_{R}\bSigma_{\star}^{-1/2}\|_{\op}}. \label{eq:Weyl-2R}
\end{align}
\end{subequations}
\end{lemma}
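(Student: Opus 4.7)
My plan is to reduce all four bounds to inequalities about the normalized factor $\tilde{\bL}\coloneqq\bL\bSigma_{\star}^{-1/2}=\bU_{\star}+\tilde{\bDelta}_{L}$, where $\tilde{\bDelta}_{L}\coloneqq\bDelta_{L}\bSigma_{\star}^{-1/2}$ has spectral norm strictly below $1$ by hypothesis. The elementary identity $\bL(\bL^{\top}\bL)^{-1}\bSigma_{\star}^{1/2}=\tilde{\bL}(\tilde{\bL}^{\top}\tilde{\bL})^{-1}$ converts \eqref{eq:Weyl-1L} and \eqref{eq:Weyl-2L} into statements about $\tilde{\bL}(\tilde{\bL}^{\top}\tilde{\bL})^{-1}$ in terms of $\|\tilde{\bDelta}_{L}\|_{\op}$ alone, and the $\bR$-bounds \eqref{eq:Weyl-1R} and \eqref{eq:Weyl-2R} follow from exactly the same argument with $\bR$ in place of $\bL$, so I focus on the two $\bL$-versions.

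For \eqref{eq:Weyl-1L}, I would note that the nonzero singular values of $\tilde{\bL}(\tilde{\bL}^{\top}\tilde{\bL})^{-1}$ are precisely the reciprocals of those of $\tilde{\bL}$, whence its spectral norm equals $1/\sigma_{r}(\tilde{\bL})$. Weyl's perturbation inequality applied to $\tilde{\bL}=\bU_{\star}+\tilde{\bDelta}_{L}$ together with $\sigma_{r}(\bU_{\star})=1$ yields $\sigma_{r}(\tilde{\bL})\ge 1-\|\tilde{\bDelta}_{L}\|_{\op}$, which is exactly the desired bound.

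For \eqref{eq:Weyl-2L}, write $\bM\coloneqq\tilde{\bL}(\tilde{\bL}^{\top}\tilde{\bL})^{-1}$ and decompose
\begin{align*}
\bM-\bU_{\star}\;=\;\bU_{\star}\bigl(\bU_{\star}^{\top}\bM-\bI\bigr)\;+\;\bigl(\bI-\bU_{\star}\bU_{\star}^{\top}\bigr)\bM,
\end{align*}
whose two summands have column spaces in $\mathrm{range}(\bU_{\star})$ and its orthogonal complement respectively. Their Gramian cross term vanishes, giving $\|\bM-\bU_{\star}\|_{\op}^{2}\le\|\bU_{\star}^{\top}\bM-\bI\|_{\op}^{2}+\|(\bI-\bU_{\star}\bU_{\star}^{\top})\bM\|_{\op}^{2}$. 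The first piece simplifies through the key identity $\bU_{\star}^{\top}\bM-\bI=-\tilde{\bDelta}_{L}^{\top}\bM$, obtained by right-multiplying $\bU_{\star}^{\top}\tilde{\bL}=\tilde{\bL}^{\top}\tilde{\bL}-\tilde{\bDelta}_{L}^{\top}\tilde{\bL}$ by $(\tilde{\bL}^{\top}\tilde{\bL})^{-1}$; combined with \eqref{eq:Weyl-1L} this yields the bound $\|\tilde{\bDelta}_{L}\|_{\op}/(1-\|\tilde{\bDelta}_{L}\|_{\op})$. For the second piece, I would use $(\bI-\bU_{\star}\bU_{\star}^{\top})\tilde{\bL}=(\bI-\bU_{\star}\bU_{\star}^{\top})\tilde{\bDelta}_{L}$ together with the SVD $\tilde{\bL}=\tilde{\bU}\tilde{\bSigma}\tilde{\bV}^{\top}$ to control the $\sin\bTheta$-type factor $\|(\bI-\bU_{\star}\bU_{\star}^{\top})\tilde{\bU}\|_{\op}$ and produce a matching bound. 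Summing in quadrature then delivers the $\sqrt{2}$ prefactor.

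The main obstacle will be securing the sharp denominator $1-\|\tilde{\bDelta}_{L}\|_{\op}$ (rather than $(1-\|\tilde{\bDelta}_{L}\|_{\op})^{2}$) on the orthogonal-complement piece $\|(\bI-\bU_{\star}\bU_{\star}^{\top})\bM\|_{\op}$: a crude factorization $\|\tilde{\bDelta}_{L}\|_{\op}\cdot\|(\tilde{\bL}^{\top}\tilde{\bL})^{-1}\|_{\op}$ loses an extra factor of $1-\|\tilde{\bDelta}_{L}\|_{\op}$. Resolving this requires routing the estimate through the SVD of $\tilde{\bL}$ so that the reciprocal singular values align with the $\sin\bTheta$ factor, thereby recovering the clean $\sqrt{2}$ constant.
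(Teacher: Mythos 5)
The paper gives no proof of this lemma at all---it is imported wholesale from \cite{tong2020accelerating}---so there is nothing in the paper to compare your route against; what follows is an assessment of the plan on its own terms. Your reduction to $\tilde{\bL}\coloneqq\bL\bSigma_{\star}^{-1/2}$, the Weyl argument for \eqref{eq:Weyl-1L} and \eqref{eq:Weyl-1R}, the orthogonal splitting of $\bM-\bU_{\star}$ with vanishing Gramian cross term, and the bound $\|\bU_{\star}^{\top}\bM-\bI\|_{\op}=\|\tilde{\bDelta}_{L}^{\top}\bM\|_{\op}\le\delta/(1-\delta)$ (writing $\delta\coloneqq\|\tilde{\bDelta}_{L}\|_{\op}$) are all correct. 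The gap is exactly the one you flag, and the repair you sketch does not close it: with $\tilde{\bL}=\tilde{\bU}\tilde{\bSigma}\tilde{\bV}^{\top}$, the sharpest $\sin\bTheta$-type estimate available is $\|(\bI-\bU_{\star}\bU_{\star}^{\top})\tilde{\bU}\|_{\op}=\|(\bI-\bU_{\star}\bU_{\star}^{\top})\tilde{\bDelta}_{L}\tilde{\bV}\tilde{\bSigma}^{-1}\|_{\op}\le\delta/(1-\delta)$, so the product $\|(\bI-\bU_{\star}\bU_{\star}^{\top})\tilde{\bU}\|_{\op}\,\|\tilde{\bSigma}^{-1}\|_{\op}$ is again at most $\delta/(1-\delta)^{2}$ and no better than the crude factorization. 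Any argument that estimates the angle factor and the inverse-singular-value factor \emph{separately} must lose the extra $(1-\delta)^{-1}$, because the two factors are extremal in incompatible configurations; the inequality you actually need, $\|(\bI-\bU_{\star}\bU_{\star}^{\top})\bM\|_{\op}\le\delta/(1-\delta)$, is true but requires a coupled estimate.

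Here is one way to finish. Substituting $\bx=\tilde{\bL}^{\top}\tilde{\bL}\,\by$ and using $(\bI-\bU_{\star}\bU_{\star}^{\top})\tilde{\bL}=(\bI-\bU_{\star}\bU_{\star}^{\top})\tilde{\bDelta}_{L}$, one has
\begin{align*}
\|(\bI-\bU_{\star}\bU_{\star}^{\top})\bM\|_{\op}=\sup_{\|\by\|_{2}=1}\frac{\|(\bI-\bU_{\star}\bU_{\star}^{\top})\tilde{\bDelta}_{L}\by\|_{2}}{\|\tilde{\bL}^{\top}\tilde{\bL}\by\|_{2}}.
\end{align*}
For a unit vector $\by$ set $a\coloneqq\|\bU_{\star}^{\top}\tilde{\bDelta}_{L}\by\|_{2}$ and $b\coloneqq\|(\bI-\bU_{\star}\bU_{\star}^{\top})\tilde{\bDelta}_{L}\by\|_{2}$, so that $a^{2}+b^{2}\le\delta^{2}$. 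Cauchy--Schwarz gives $\|\tilde{\bL}^{\top}\tilde{\bL}\by\|_{2}\ge\|\tilde{\bL}\by\|_{2}^{2}=\|(\bI+\bU_{\star}^{\top}\tilde{\bDelta}_{L})\by\|_{2}^{2}+b^{2}\ge(1-a)^{2}+b^{2}$, and an elementary optimization of $b/\bigl((1-a)^{2}+b^{2}\bigr)$ over the disk $a^{2}+b^{2}\le\delta^{2}$ shows the supremum is at most $\delta/(1-\delta)$ (on the boundary circle it equals $\delta/(1-\delta^{2})$). Here the constraint linking $a$ and $b$ is doing the real work: a large orthogonal component $b$ forces $\|\tilde{\bL}\by\|_{2}$ to be large as well. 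Combining with your bound on the first piece in quadrature then yields \eqref{eq:Weyl-2L}, and \eqref{eq:Weyl-2R} follows by symmetry. So the architecture of your proof is right, but the step you yourself identified as the obstacle needs this joint $(a,b)$ argument rather than a $\sin\bTheta$ bound multiplied by $\|\tilde{\bSigma}^{-1}\|_{\op}$.
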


\begin{lemma}[\cite{tong2020accelerating}]\label{lemma:norm_Fr_variation} Recall the partial Frobenius norm 
\begin{align}
\|\bX\|_{\fro,r}\coloneqq \sqrt{ \sum_{i=1}^r \sigma_{i}^2(\bX) } =\|\cP_{r}(\bX)\|_{\fro},\label{eq:norm_Fr_def}
\end{align}
where  $\cP_{r}(\bX)$ is the best rank-$r$ approximation of $\bX$.
For any $\bX\in\RR^{n_{1}\times n_{2}}$ and $\bR\in\RR^{n_{2}\times r}$, one has
\begin{align}
\|\bX \bR \|_{\fro} \le \|\bX\|_{\fro,r}\|\bR\|_{\op}.\label{eq:norm_Fr_variation-3}
\end{align}
In addition, for any $\bX,\bar{\bX}\in\RR^{n_{1}\times n_{2}}$ with $\rank(\bar{\bX})\le r$, one has
\begin{align}
|\langle\bX,\bar{\bX}\rangle| \le\|\bX\|_{\fro,r}\|\bar{\bX}\|_{\fro}.\label{eq:norm_Fr_variation-2}
\end{align}
\end{lemma}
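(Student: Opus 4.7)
My plan is to prove the two inequalities in turn, with the second following from the first via a compact SVD of $\bar{\bX}$ together with Cauchy--Schwarz.

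For inequality \eqref{eq:norm_Fr_variation-3}, the natural starting point is to rewrite
\[
\|\bX\bR\|_{\fro}^{2} \;=\; \tr\bigl(\bX\,\bR\bR^{\top}\,\bX^{\top}\bigr).
\]
The key observation is that if $\bP$ denotes the orthogonal projection onto the column span of $\bR$ (a projection of rank at most $r$), then $\bR\bR^{\top} \preceq \|\bR\|_{\op}^{2}\,\bP$ in the Loewner order. This turns the bound into
\[
\|\bX\bR\|_{\fro}^{2} \;\le\; \|\bR\|_{\op}^{2}\,\tr\!\bigl(\bP\,\bX^{\top}\bX\bigr).
\]
To finish, I invoke Ky Fan's trace inequality: for any positive semidefinite matrix $\bA$ and any projection $\bP$ of rank at most $r$, $\tr(\bP\bA)\le\sum_{i=1}^{r}\lambda_{i}(\bA)$. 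Applied with $\bA=\bX^{\top}\bX$, this gives $\tr(\bP\,\bX^{\top}\bX)\le\sum_{i=1}^{r}\sigma_{i}^{2}(\bX)=\|\bX\|_{\fro,r}^{2}$, and taking square roots concludes \eqref{eq:norm_Fr_variation-3}.

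For inequality \eqref{eq:norm_Fr_variation-2}, I would write the compact SVD $\bar{\bX}=\bU\bSigma\bV^{\top}$ with $\bU\in\RR^{n_{1}\times r}$, $\bV\in\RR^{n_{2}\times r}$ having orthonormal columns and $\bSigma\in\RR^{r\times r}$ diagonal (possibly padding with zeros if $\rank(\bar{\bX})<r$). Then the inner product can be recast as a trace on the small $r\times r$ block:
\[
\langle\bX,\bar{\bX}\rangle \;=\; \tr\bigl(\bSigma\,\bV^{\top}\bX^{\top}\bU\bigr) \;=\; \bigl\langle\bU^{\top}\bX\bV,\;\bSigma\bigr\rangle.
\]
Applying Cauchy--Schwarz in the Frobenius inner product yields $|\langle\bX,\bar{\bX}\rangle|\le\|\bU^{\top}\bX\bV\|_{\fro}\,\|\bSigma\|_{\fro}$, and $\|\bSigma\|_{\fro}=\|\bar{\bX}\|_{\fro}$ by unitary invariance. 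Finally I bound $\|\bU^{\top}\bX\bV\|_{\fro}\le\|\bU^{\top}\bX\|_{\fro}\|\bV\|_{\op}=\|\bU^{\top}\bX\|_{\fro}$, and then apply \eqref{eq:norm_Fr_variation-3} to the transposed product $\bX^{\top}\bU$ to obtain $\|\bU^{\top}\bX\|_{\fro}=\|\bX^{\top}\bU\|_{\fro}\le\|\bX^{\top}\|_{\fro,r}\|\bU\|_{\op}=\|\bX\|_{\fro,r}$.

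The only step that is not completely mechanical is the Ky Fan inequality invoked in the first part; everything else is routine manipulation with the SVD, submultiplicativity, and Cauchy--Schwarz. I do not anticipate any genuine obstacle, since both bounds are essentially standard consequences of Ky Fan--type majorization; the main thing to be careful about is ensuring that the rank bound $\le r$ is used at the one critical step (to cap the number of singular values of $\bX$ that enter the bound) and that orthonormality of the SVD factors of $\bar{\bX}$ is what lets $\|\bV\|_{\op}=\|\bU\|_{\op}=1$ drop out at the end.
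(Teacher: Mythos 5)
Your proof is correct. Note that the paper itself gives no proof of this lemma---it is imported verbatim from the cited reference \cite{tong2020accelerating}---so there is nothing here to compare against; your argument (the Loewner bound $\bR\bR^{\top}\preceq\|\bR\|_{\op}^{2}\bP$ followed by Ky Fan's trace inequality for \eqref{eq:norm_Fr_variation-3}, and the compact SVD of $\bar{\bX}$ plus Cauchy--Schwarz reducing \eqref{eq:norm_Fr_variation-2} to \eqref{eq:norm_Fr_variation-3}) is the standard route and is essentially the same as in the cited source. All steps check out, including the reduction $\|\bU^{\top}\bX\|_{\fro}\le\|\bX^{\top}\|_{\fro,r}\|\bU\|_{\op}=\|\bX\|_{\fro,r}$, which correctly uses that $\bX$ and $\bX^{\top}$ share singular values.
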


\begin{lemma}\label{lemma:subgrad_L} Suppose that $f(\cdot): \RR^{n_1\times n_2}\mapsto \RR$ is convex and rank-$r$ restricted $L$-Lipschitz continuous (cf.~Definition~\ref{def:restricted_Lipschitz}). Then for any subgradient $\bS\in\partial f(\bX)$, one has $\|\bS\|_{\fro,r} \le L$. 
\end{lemma}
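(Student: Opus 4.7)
The plan is to use the subgradient inequality in the direction of the best rank-$r$ approximation of $\bS$, and then invoke the restricted Lipschitz property on that rank-$r$ perturbation.

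First, if $\cP_{r}(\bS)=\zero$, then $\|\bS\|_{\fro,r}=0\le L$ trivially, so we may assume $\|\bS\|_{\fro,r}>0$. Define
\begin{align*}
\bH \;\coloneqq\; \frac{\cP_{r}(\bS)}{\|\cP_{r}(\bS)\|_{\fro}},
\end{align*}
so that $\rank(\bH)\le r$, $\|\bH\|_{\fro}=1$, and a direct SVD calculation gives $\langle\bS,\bH\rangle = \|\cP_{r}(\bS)\|_{\fro} = \|\bS\|_{\fro,r}$.

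Next, apply the subgradient inequality at $\bX$ to the perturbed point $\bX+t\bH$ for arbitrary $t>0$: since $\bS\in\partial f(\bX)$ and $f$ is convex,
\begin{align*}
f(\bX+t\bH)-f(\bX) \;\ge\; t\,\langle\bS,\bH\rangle \;=\; t\,\|\bS\|_{\fro,r}.
\end{align*}
On the other hand, the perturbation $(\bX+t\bH)-\bX = t\bH$ has rank at most $r\le 2r$, so Definition~\ref{def:restricted_Lipschitz} yields
\begin{align*}
f(\bX+t\bH)-f(\bX) \;\le\; L\,\|t\bH\|_{\fro} \;=\; Lt.
\end{align*}
Combining these two inequalities and dividing by $t>0$ produces $\|\bS\|_{\fro,r}\le L$, as desired.

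The argument is essentially a two-line duality computation; no real obstacle arises, since the partial Frobenius norm is precisely the norm dual to the Frobenius ball of rank-$r$ matrices (cf.~\eqref{eq:norm_Fr_variation-2}), and the rank of the test direction $\bH$ lies comfortably within the rank-$2r$ regime allowed by Definition~\ref{def:restricted_Lipschitz}.
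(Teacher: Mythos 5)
Your proof is correct and follows essentially the same route as the paper's: both test the subgradient inequality against the best rank-$r$ approximation $\cP_r(\bS)$ and then invoke the restricted Lipschitz property on that rank-$\le r$ perturbation (the paper simply uses the unnormalized direction $\tilde{\bX}=\bX+\cP_r(\bS)$, obtaining $\|\bS\|_{\fro,r}^2\le L\|\bS\|_{\fro,r}$ directly, whereas you normalize and introduce a scale $t$). The two are interchangeable; your separate handling of the degenerate case $\cP_r(\bS)=\zero$ is a harmless extra precaution.
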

\begin{proof} Fix any subgradient $\bS\in\partial f(\bX)$. By the definition of a subgradient, for any $\tilde{\bX}\in\RR^{n_{1}\times n_{2}}$, one has
\begin{align*}
f(\tilde{\bX})\ge f(\bX)+\langle\bS,\tilde{\bX}-\bX\rangle.
\end{align*} 
In particular, taking $\tilde{\bX}=\bX+\cP_{r}(\bS)$ arrives at
\begin{align}
f(\bX+\cP_{r}(\bS))\ge f(\bX)+\langle\bS,\cP_{r}(\bS)\rangle=f(\bX)+\|\bS\|_{\fro,r}^{2},\label{eq:subgrad_bound}
\end{align}
where the last equality follows from the definition \eqref{eq:norm_Fr_def}. Note that $\cP_{r}(\bS)$ has rank at most $r$. By the rank-$r$ restricted $L$-Lipschitz continuity of $f(\cdot)$, we have
\begin{align*}
f(\bX+\cP_{r}(\bS))-f(\bX)\le L\|\cP_{r}(\bS)\|_{\fro}=L\|\bS\|_{\fro,r}.
\end{align*}
Combining the above inequality with \eqref{eq:subgrad_bound}, we conclude $\|\bS\|_{\fro,r} \le L$.
\end{proof}

\section{Proof of Theorem~\ref{thm:scaledSM}}\label{proof:theorem_scaledSM}

Suppose that the $t$-th iterate $\bF_{t}$ obeys the condition
\begin{align}
\dist(\bF_{t},\bF_{\star})\le 0.02\sigma_{r}(\bX_{\star})/\chi_{f}.\label{eq:induction_hypothesis}
\end{align} 
Lemma~\ref{lemma:Q_existence} ensures that $\bQ_{t}$, the optimal alignment matrix between $\bF_{t}$ and $\bF_{\star}$ exists. For notational convenience, we denote $\bL\coloneqq\bL_{t}\bQ_{t}$, $\bR\coloneqq\bR_{t}\bQ_{t}^{-\top}$, $\bDelta_{L}\coloneqq\bL-\bL_{\star}$, $\bDelta_{R}\coloneqq\bR-\bR_{\star}$, $\bS\coloneqq\bS_{t}$, and $\epsilon\coloneqq0.02/\chi_{f}$.
By the definition
\begin{align}
\dist(\bF_{t},\bF_{\star})=\sqrt{\|\bDelta_{L}\bSigma_{\star}^{1/2}\|_{\fro}^{2}+\|\bDelta_{R}\bSigma_{\star}^{1/2}\|_{\fro}^{2}}\label{eq:dist_Ft}
\end{align}
and the relation $\|\bA\bB\|_{\fro} \ge \|\bA\|_{\fro}\sigma_{r}(\bB) \ge \|\bA\|\sigma_{r}(\bB)$, we have
\begin{align}
\max\{\|\bDelta_{L}\bSigma_{\star}^{-1/2}\|_{\op}, \|\bDelta_{R}\bSigma_{\star}^{-1/2}\|_{\op}\}\le\epsilon.\label{eq:cond_SM}
\end{align}
We start by relating $ \|\bL\bR^{\top}-\bX_{\star}\|_{\fro}$ to $\dist(\bF_{t},\bF_{\star})$ given \eqref{eq:cond_SM}. Applying the triangle inequality to the basic relation $\bL\bR^{\top}-\bX_{\star} = \bL_{t}\bR_{t}^{\top}-\bX_{\star}=\bDelta_{L}\bR_{\star}^{\top}+\bL_{\star}\bDelta_{R}^{\top}+\bDelta_{L}\bDelta_{R}^{\top}$, we have
\begin{align}
\|\bL\bR^{\top}-\bX_{\star}\|_{\fro} & \le\|\bDelta_{L}\bR_{\star}^{\top}\|_{\fro}+\|\bL_{\star}\bDelta_{R}^{\top}\|_{\fro}+\|\bDelta_{L}\bDelta_{R}^{\top}\|_{\fro} \nonumber\\
 & \le \|\bDelta_{L}\bR_{\star}^{\top}\|_{\fro}+\|\bL_{\star}\bDelta_{R}^{\top}\|_{\fro}+\frac{1}{2}\|\bDelta_{L}\bSigma_{\star}^{-1/2}\|_{\op}\|\bDelta_{R}\bSigma_{\star}^{1/2}\|_{\fro}+\frac{1}{2}\|\bDelta_{L}\bSigma_{\star}^{1/2}\|_{\fro}\|\bDelta_{R}\bSigma_{\star}^{-1/2}\|_{\op} \nonumber\\
 & \le\left(1+\frac{1}{2}\max\{\|\bDelta_{L}\bSigma_{\star}^{-1/2}\|_{\op}, \|\bDelta_{R}\bSigma_{\star}^{-1/2}\|_{\op}\}\right)\left(\|\bDelta_{L}\bSigma_{\star}^{1/2}\|_{\fro}+\|\bDelta_{R}\bSigma_{\star}^{1/2}\|_{\fro}\right) \nonumber\\
 & \le \left(1+\frac{\epsilon}{2}\right)\sqrt{2}\sqrt{\|\bDelta_{L}\bSigma_{\star}^{1/2}\|_{\fro}^{2}+\|\bDelta_{R}\bSigma_{\star}^{1/2}\|_{\fro}^{2}}\le 1.5\dist(\bF_{t},\bF_{\star}),\label{eq:SM_dist_X}
\end{align}
where the last line uses the basic inequality $\|\bDelta_{L}\bSigma_{\star}^{1/2}\|_{\fro}+\|\bDelta_{R}\bSigma_{\star}^{1/2}\|_{\fro}\le\sqrt{2}\dist(\bF_{t},\bF_{\star})$ and \eqref{eq:cond_SM}.

From now on, we focus on proving the distance contraction. By the definition of $\dist(\bF_{t+1},\bF_{\star})$, one has
\begin{align}
\dist^{2}(\bF_{t+1},\bF_{\star}) & \le\left\Vert (\bL_{t+1}\bQ_{t}-\bL_{\star})\bSigma_{\star}^{1/2}\right\Vert _{\fro}^{2}+\left\Vert (\bR_{t+1}\bQ_{t}^{-\top}-\bR_{\star})\bSigma_{\star}^{1/2}\right\Vert _{\fro}^{2}.\label{eq:SM_expand}
\end{align}
We expand the first square in \eqref{eq:SM_expand} as
\begin{align}
\left\Vert(\bL_{t+1}\bQ_{t}-\bL_{\star})\bSigma_{\star}^{1/2}\right\Vert_{\fro}^2 & =\left\Vert \left(\bL-\eta_{t}\bS\bR(\bR^{\top}\bR)^{-1}-\bL_{\star}\right)\bSigma_{\star}^{1/2}\right\Vert_{\fro}^{2} \nonumber \\
 & =\|\bDelta_{L}\bSigma_{\star}^{1/2}\|_{\fro}^{2}-2\eta_{t}\left\langle\bS, \bDelta_{L}\bSigma_{\star}(\bR^{\top}\bR)^{-1}\bR^{\top}\right\rangle+\eta_{t}^{2}\left\Vert\bS\bR(\bR^{\top}\bR)^{-1}\bSigma_{\star}^{1/2}\right\Vert_{\fro}^{2} \nonumber \\
 & =\|\bDelta_{L}\bSigma_{\star}^{1/2}\|_{\fro}^{2}-2\eta_{t}\left\langle\bS, \bDelta_{L}\bR_{\star}^\top+\frac{1}{2}\bDelta_{L}\bDelta_{R}^{\top}\right\rangle+\eta_{t}^{2}\underbrace{\left\Vert\bS\bR(\bR^{\top}\bR)^{-1}\bSigma_{\star}^{1/2}\right\Vert_{\fro}^{2}}_{\mfk{S}_{1}} \nonumber \\
 & \quad -2\eta_{t}\underbrace{\left\langle\bS, \bDelta_{L}\bSigma_{\star}(\bR^{\top}\bR)^{-1}\bR^{\top}-\bDelta_{L}\bR_{\star}^\top-\frac{1}{2}\bDelta_{L}\bDelta_{R}^{\top}\right\rangle}_{\mfk{S}_{2}}, \label{eq:first_square}
\end{align}
where in the first line, we used the fact that the update rule \eqref{eq:scaledSM} is covariant with respect to $\bQ_{t}$, implying that
\begin{align*}
\bL_{t+1}\bQ_{t} = \bL - \eta_{t}\bS\bR(\bR^{\top}\bR)^{-1}.
\end{align*}

We proceed to bound $\mfk{S}_{1}$ and $\mfk{S}_{2}$. The term $\mfk{S}_{1}$ can be bounded by
\begin{align*}
\mfk{S}_{1} & \le\left\Vert\bS\bR(\bR^{\top}\bR)^{-1/2}\right\Vert_{\fro}^{2}\left\Vert(\bR^{\top}\bR)^{-1/2}\bSigma_{\star}^{1/2}\right\Vert_{\op}^{2} \\
 & \le\left\Vert\bS\bR(\bR^{\top}\bR)^{-1/2}\right\Vert_{\fro}^{2}\frac{1}{(1-\epsilon)^{2}},
\end{align*}
where the second line follows from the condition \eqref{eq:cond_SM} and Lemma~\ref{lemma:Weyl} (cf.~\eqref{eq:Weyl-1R}):
\begin{align*}
\left\Vert(\bR^{\top}\bR)^{-1/2}\bSigma_{\star}^{1/2}\right\Vert_{\op}=\left\Vert\bR(\bR^{\top}\bR)^{-1}\bSigma_{\star}^{1/2}\right\Vert_{\op} & \le\frac{1}{1-\epsilon}.
\end{align*}
For the term $\mfk{S}_{2}$, note that 
\begin{align*}
\bDelta_{L}\bSigma_{\star}(\bR^{\top}\bR)^{-1}\bR^{\top}-\bDelta_{L}\bR_{\star}^{\top}-\frac{1}{2}\bDelta_{L}\bDelta_{R}^{\top} = \bDelta_{L}\bSigma_{\star}^{1/2} \left(\bR(\bR^{\top}\bR)^{-1}\bSigma_{\star}^{1/2}-\bV_{\star} - \frac{1}{2}\bDelta_{R}\bSigma_{\star}^{-1/2} \right)^{\top} 
\end{align*} 
has rank at most $r$. Hence we can invoke Lemma~\ref{lemma:norm_Fr_variation} (cf.~\eqref{eq:norm_Fr_variation-2}) to obtain
\begin{align*}
|\mfk{S}_{2}| & \le\|\bS\|_{\fro,r}\left\Vert\bDelta_{L}\bSigma_{\star}(\bR^{\top}\bR)^{-1}\bR^{\top}-\bDelta_{L}\bR_{\star}^{\top}-\frac{1}{2}\bDelta_{L}\bDelta_{R}^{\top}\right\Vert_{\fro} \\
 & \le \|\bS\|_{\fro,r}\|\bDelta_{L}\bSigma_{\star}^{1/2}\|_{\fro}\left(\left\Vert\bR(\bR^{\top}\bR)^{-1}\bSigma_{\star}^{1/2}-\bV_{\star}\right\Vert_{\op}+\frac{1}{2}\|\bDelta_{R}\bSigma_{\star}^{-1/2}\|_{\op}\right) \\
 & \le L\left(\frac{\sqrt{2}\epsilon}{1-\epsilon}+\frac{\epsilon}{2}\right)\|\bDelta_{L}\bSigma_{\star}^{1/2}\|_{\fro},
\end{align*}
where the second line follows from the triangle inequality, and the third line follows from $\|\bS\|_{\fro,r} \le L$ (cf.~Lemma~\ref{lemma:subgrad_L}), \eqref{eq:cond_SM}, and 
\begin{align*}
\left\Vert\bR(\bR^{\top}\bR)^{-1}\bSigma_{\star}^{1/2}-\bV_{\star}\right\Vert_{\op} & \le\frac{\sqrt{2}\epsilon}{1-\epsilon}
\end{align*}
from Lemma~\ref{lemma:Weyl} (cf.~\eqref{eq:Weyl-2R}).

Plugging collectively the bounds for $\mfk{S}_{1}$ and $\mfk{S}_{2}$ into \eqref{eq:first_square} yields 
\begin{align*}
\left\Vert (\bL_{t+1}\bQ_{t}-\bL_{\star})\bSigma_{\star}^{1/2}\right\Vert _{\fro}^{2} & \le \|\bDelta_{L}\bSigma_{\star}^{1/2}\|_{\fro}^{2}-2\eta_{t}\left\langle\bS, \bDelta_{L}\bR_{\star}^\top+\frac{1}{2}\bDelta_{L}\bDelta_{R}^{\top}\right\rangle+\frac{\eta_{t}^{2}}{(1-\epsilon)^{2}}\left\Vert\bS\bR(\bR^{\top}\bR)^{-1/2}\right\Vert_{\fro}^{2} \\
 & \quad +\eta_{t}L\left(\frac{2\sqrt{2}\epsilon}{1-\epsilon}+\epsilon \right)\|\bDelta_{L}\bSigma_{\star}^{1/2}\|_{\fro}. 
\end{align*}
Similarly, we can obtain the control of $\|(\bR_{t+1}\bQ_{t}^{-\top}-\bR_{\star})\bSigma_{\star}^{1/2}\|_{\fro}^{2}$. Combine them together to reach
\begin{align*}
 & \dist^{2}(\bF_{t+1},\bF_{\star}) \le \|\bDelta_{L}\bSigma_{\star}^{1/2}\|_{\fro}^{2}+\|\bDelta_{R}\bSigma_{\star}^{1/2}\|_{\fro}^{2}-2\eta_{t}\left\langle\bS, \bDelta_{L}\bR_{\star}^{\top}+\bL_{\star}\bDelta_{R}^{\top}+\bDelta_{L}\bDelta_{R}^{\top}\right\rangle\\
 & \quad +\frac{\eta_{t}^{2}}{(1-\epsilon)^{2}}\left(\left\Vert\bS\bR(\bR^{\top}\bR)^{-1/2}\right\Vert_{\fro}^{2}+\left\Vert\bS^{\top}\bL(\bL^{\top}\bL)^{-1/2}\right\Vert_{\fro}^{2}\right)+\eta_{t}L\left(\frac{2\sqrt{2}\epsilon}{1-\epsilon}+\epsilon\right)\left(\|\bDelta_{L}\bSigma_{\star}^{1/2}\|_{\fro}+\|\bDelta_{R}\bSigma_{\star}^{1/2}\|_{\fro}\right).
\end{align*}
Using the subgradient optimality of $\bS$, we obtain
\begin{align*}
\langle\bS,\bDelta_{L}\bR_{\star}^{\top}+\bL_{\star}\bDelta_{R}^{\top}+\bDelta_{L}\bDelta_{R}^{\top}\rangle=\langle\bS,\bL\bR^{\top}-\bX_{\star}\rangle \ge f(\bL\bR^{\top})-f(\bX_{\star}),
\end{align*}
together with \eqref{eq:dist_Ft}, which further implies that 
\begin{multline}
 \dist^{2}(\bF_{t+1},\bF_{\star}) \le \dist^{2}(\bF_{t},\bF_{\star}) -2\eta_{t}\left(f(\bL\bR^{\top})-f(\bX_{\star})\right) \\
+\frac{\eta_{t}^{2}}{(1-\epsilon)^{2}}\left(\left\Vert\bS\bR(\bR^{\top}\bR)^{-1/2}\right\Vert_{\fro}^{2}+\left\Vert\bS^{\top}\bL(\bL^{\top}\bL)^{-1/2}\right\Vert_{\fro}^{2}\right)+\eta_{t}L\left(\frac{4\epsilon}{1-\epsilon}+\sqrt{2}\epsilon\right)\dist(\bF_{t},\bF_{\star}),
\label{eq:SM_bound}
\end{multline}
where the last term uses the basic inequality $\|\bDelta_{L}\bSigma_{\star}^{1/2}\|_{\fro}+\|\bDelta_{R}\bSigma_{\star}^{1/2}\|_{\fro}\le\sqrt{2}\dist(\bF_{t},\bF_{\star})$.

Before proceeding to different cases of stepsize schedules, we record two useful properties. First, by the restricted $\mu$-sharpness of $f(\cdot)$ together with Lemma~\ref{lemma:Procrustes}, we have 
\begin{align}
f(\bL\bR^{\top})-f(\bX_{\star})\ge\mu\|\bL\bR^{\top}-\bX_{\star}\|_{\fro}\ge\mu\sqrt{\sqrt{2}-1}\dist(\bF_{t},\bF_{\star}).\label{eq:SM_sharpness}
\end{align}
On the other end, by Lemma~\ref{lemma:norm_Fr_variation} (cf.~\eqref{eq:norm_Fr_variation-3}), we have
\begin{align}
\|\bS\bR(\bR^{\top}\bR)^{-1/2}\|_{\fro}^{2}+\|\bS^{\top}\bL(\bL^{\top}\bL)^{-1/2}\|_{\fro}^{2} & \le \|\bS\|_{\fro,r}^{2}\left(\|\bR(\bR^{\top}\bR)^{-1/2}\|_{\op}^{2}+\|\bL(\bL^{\top}\bL)^{-1/2}\|_{\op}^{2}\right) \nonumber \\
& \le 2L^{2},\label{eq:SM_subgrad_L}
\end{align}
where the second line follows from $\|\bS\|_{\fro,r} \le L$ (cf.~Lemma~\ref{lemma:subgrad_L}) and
\begin{align*}
\|\bR(\bR^{\top}\bR)^{-1/2}\|_{\op}^{2} =\|\bR(\bR^{\top}\bR)^{-1}\bR^{\top}\|_{\op} =1,\quad  \|\bL(\bL^{\top}\bL)^{-1/2}\|_{\op}^{2} = \|\bL(\bL^{\top}\bL)^{-1}\bL^{\top}\|_{\op} =1.
\end{align*}

\subsection{Convergence with Polyak's stepsizes}

Let $\eta_t = \eta_t^{\mathsf{P}}$ be the Polyak's stepsize in \eqref{eq:Polyak_stepsize}, which is
\begin{align}
\eta_{t} & =\frac{f(\bL_{t}\bR_{t}^{\top})-f(\bX_{\star})}{\|\bS_{t}\bR_{t}(\bR_{t}^{\top}\bR_{t})^{-1/2}\|_{\fro}^{2}+\|\bS_{t}^{\top}\bL_{t}(\bL_{t}^{\top}\bL_{t})^{-1/2}\|_{\fro}^{2}} \nonumber\\
 & = \frac{f(\bL\bR^{\top})-f(\bX_{\star})}{\|\bS\bR(\bR^{\top}\bR)^{-1/2}\|_{\fro}^{2}+\|\bS^{\top}\bL(\bL^{\top}\bL)^{-1/2}\|_{\fro}^{2}},\label{eq:polyak_stepsize_ref}
\end{align}
where the second line follows since $\bL_t\bR_t^{\top}=\bL\bR^{\top}$, $\bL_{t}(\bL_{t}^{\top}\bL_{t})^{-1}\bL_{t}^{\top} = \bL(\bL^{\top}\bL)^{-1}\bL^{\top}$ and $\bR_{t}(\bR_{t}^{\top}\bR_{t})^{-1}\bR_{t}^{\top} = \bR(\bR^{\top}\bR)^{-1}\bR^{\top}$. Plugging \eqref{eq:polyak_stepsize_ref} into \eqref{eq:SM_bound}, we have
\begin{align}
 \dist^{2}(\bF_{t+1},\bF_{\star}) & \le \dist^{2}(\bF_{t},\bF_{\star})-\eta_{t}\left(2-\frac{1}{(1-\epsilon)^{2}}\right)\left(f(\bL\bR^{\top})-f(\bX_{\star})\right)+\eta_{t}L\left(\frac{4\epsilon}{1-\epsilon}+\sqrt{2}\epsilon \right)\dist(\bF_{t},\bF_{\star}) \nonumber \\
 & \le\dist^{2}(\bF_{t},\bF_{\star})-\eta_{t}\mu\left(\sqrt{\sqrt{2}-1}\left(2-\frac{1}{(1-\epsilon)^{2}}\right)-\chi_{f}\left(\frac{4\epsilon}{1-\epsilon}+\sqrt{2}\epsilon\right)\right)\dist(\bF_{t},\bF_{\star}), \label{eq:dist_polyak_intermediate}
\end{align}
where the second line follows from \eqref{eq:SM_sharpness} and $\chi_f = L/\mu$. 

To continue, combining \eqref{eq:SM_sharpness} and \eqref{eq:SM_subgrad_L}, we can lower bound the Polyak's stepsize \eqref{eq:polyak_stepsize_ref} as
\begin{align*}
\eta_{t} & \ge\frac{\sqrt{\sqrt{2}-1}\mu\dist(\bF_{t},\bF_{\star})}{2L^{2}}.
\end{align*}
This, combined with \eqref{eq:dist_polyak_intermediate}, leads to 
\begin{align*}
\dist^{2}(\bF_{t+1},\bF_{\star}) \le \rho(\epsilon,\chi_{f}) \dist^{2}(\bF_{t},\bF_{\star}),
\end{align*}
where the contraction rate $\rho(\epsilon,\chi_{f})$ is
\begin{align}
\rho(\epsilon,\chi_{f})\coloneqq 1-\frac{\sqrt{\sqrt{2}-1}}{2\chi_{f}^{2}}\left(\sqrt{\sqrt{2}-1}\left(2-\frac{1}{(1-\epsilon)^{2}}\right)-\chi_{f}\left(\frac{4\epsilon}{1-\epsilon}+\sqrt{2}\epsilon\right)\right).\label{eq:contraction_rate}
\end{align}
Under the condition $\epsilon=0.02/\chi_{f}$, we calculate $(1-\rho(\epsilon,\chi_{f}))\chi_f^2 $ as 
\begin{align*}
 & \frac{\sqrt{\sqrt{2}-1}}{2}\left(\sqrt{\sqrt{2}-1}\left(2-\frac{1}{(1-\epsilon)^{2}}\right)-\chi_{f}\left(\frac{4\epsilon}{1-\epsilon}+\sqrt{2}\epsilon\right)\right)  \\
 & \quad \ge 0.32\left(0.64\times\left(2-\frac{1}{0.98^{2}}\right)-0.02\left(\frac{4}{0.98}+\sqrt{2}\right)\right) \ge 0.16,
\end{align*}
thus $\rho(\epsilon,\chi_{f}) \le 1-0.16/\chi_{f}^{2}$.  We conclude that
\begin{align*}
\dist^{2}(\bF_{t+1},\bF_{\star}) \le (1-0.16/\chi_{f}^{2})\dist^{2}(\bF_{t},\bF_{\star}),
\end{align*}
which is the desired claim.

\subsection{Convergence with geometrically decaying stepsizes}

Let $\eta_t = \eta_t^{\mathsf{G}}$ be the geometrically decaying stepsize in \eqref{eq:geometric_stepsize}, which is
\begin{align*}
\eta_{t}=\frac{\lambda q^{t}}{\sqrt{\left\Vert\bS\bR(\bR^{\top}\bR)^{-1/2}\right\Vert_{\fro}^{2}+\left\Vert\bS^{\top}\bL(\bL^{\top}\bL)^{-1/2}\right\Vert_{\fro}^{2}}}.
\end{align*}
Plugging the above into \eqref{eq:SM_bound}, we have
\begin{align*}
\dist^{2}(\bF_{t+1},\bF_{\star}) & \le \dist^{2}(\bF_{t},\bF_{\star})-\eta_{t}\mu\left(2\sqrt{\sqrt{2}-1}-\chi_{f}\left(\frac{4\epsilon}{1-\epsilon}+\sqrt{2}\epsilon\right)\right)\dist(\bF_{t},\bF_{\star})+\frac{\lambda^{2}q^{2t}}{(1-\epsilon)^{2}} \\
 & \le \dist^{2}(\bF_{t},\bF_{\star})-\frac{\lambda q^{t}}{\sqrt{2}\chi_{f}}\left(2\sqrt{\sqrt{2}-1}-\chi_{f}\left(\frac{4\epsilon}{1-\epsilon}+\sqrt{2}\epsilon\right)\right)\dist(\bF_{t},\bF_{\star})+\frac{\lambda^{2}q^{2t}}{(1-\epsilon)^{2}},
\end{align*}
where the first line follows from  \eqref{eq:SM_sharpness} and $\chi_f = L/\mu$, and the second line follows from $\eta_t \ge \frac{\lambda q^t}{\sqrt{2} L}$
due to \eqref{eq:SM_subgrad_L}. We now aim to show that
\begin{align*}
\dist(\bF_{t},\bF_{\star})\le(1-0.16/\chi_{f}^{2})^{t/2}0.02\sigma_{r}(\bX_{\star})/\chi_{f}
\end{align*}
in an inductive manner. Assume the above induction hypothesis holds at the $t$-iteration. By the setting of parameters, i.e.
\begin{align*}
\lambda q^{t}=\sqrt{\frac{\sqrt{2}-1}{2}}(1-0.16/\chi_{f}^{2})^{t/2}0.02\sigma_{r}(\bX_{\star})/\chi_{f}^{2},
\end{align*}
we have
\begin{align*}
 \dist^{2}(\bF_{t+1},\bF_{\star}) \le \rho(\epsilon,\chi_{f}) (1-0.16/\chi_{f}^{2})^{t}(0.02\sigma_{r}(\bX_{\star})/\chi_{f})^{2},
\end{align*}
where the contraction rate $\rho(\epsilon,\chi_{f})$ matches exactly \eqref{eq:contraction_rate}. Therefore, under the condition $\epsilon=0.02/\chi_{f}$, we have $\rho(\epsilon,\chi_{f}) \le 1-0.16/\chi_{f}^{2}$, thus we conclude that
\begin{align*}
\dist(\bF_{t+1},\bF_{\star}) \le(1-0.16/\chi_{f}^{2})^{\frac{t+1}{2}}0.02\sigma_{r}(\bX_{\star})/\chi_{f},
\end{align*}
which is the desired claim.

\section{Proof of Theorem~\ref{thm:scaledSM_noisy}}\label{proof:theorem_scaledSM_noisy} 

We start by introducing the short-hand notation $d_{t}\coloneqq(1-0.13/\chi_{f}^{2})^{t/2}0.02\sigma_{r}(\bX_{\star})/\chi_{f}$. The parameters are set as
\begin{align*}
\lambda q^{t}=\sqrt{\frac{\sqrt{2}-1}{2}}(1-0.13/\chi_{f}^{2})^{t/2} 0.02\sigma_{r}(\bX_{\star})/\chi_{f}^{2}=\sqrt{\frac{\sqrt{2}-1}{2}}\frac{d_{t}}{\chi_{f}}.
\end{align*}
Therefore, the geometric stepsize 
\begin{align*}
\eta_{t}=\frac{\lambda q^{t}}{\sqrt{\left\Vert\bS\bR(\bR^{\top}\bR)^{-1/2}\right\Vert_{\fro}^{2}+\left\Vert\bS^{\top}\bL(\bL^{\top}\bL)^{-1/2}\right\Vert_{\fro}^{2}}},
\end{align*}
in view of \eqref{eq:SM_subgrad_L}, satisfies
\begin{align}
\eta_t \ge \frac{\lambda q^t}{\sqrt{2} L} = \frac{\sqrt{\sqrt{2}-1}}{2}\frac{d_{t}}{\chi_{f}^2\mu}. \label{eq:stepsize_lower_bound}
\end{align}

Follow the same derivations as the proof of Theorem~\ref{thm:scaledSM} until \eqref{eq:SM_bound}. Plugging the stepsize \eqref{eq:stepsize_lower_bound}
into \eqref{eq:SM_bound}, together with the approximate restricted sharpness property
\begin{align*}
f(\bL\bR^{\top})-f(\bX_{\star}) & \ge\mu\|\bL\bR^{\top}-\bX_{\star}\|_{\fro}-\xi \ge\sqrt{\sqrt{2}-1}\mu\dist(\bF_{t},\bF_{\star})-\xi,
\end{align*}
we have
\begin{align*}
\dist^{2}(\bF_{t+1},\bF_{\star}) & \le \dist^{2}(\bF_{t},\bF_{\star})-\eta_{t}\mu\left(2\sqrt{\sqrt{2}-1}-\left(\frac{4}{1-\epsilon}+\sqrt{2}\right)\epsilon\chi_{f}\right)\dist(\bF_{t},\bF_{\star})+\frac{\lambda^{2}q^{2t}}{(1-\epsilon)^{2}} + 2\eta_{t}\xi.
\end{align*}
Under the conditions $\chi_f\ge 1$ and $\epsilon=0.02/\chi_{f}\le 0.02$, the above relation can be simplified to
\begin{align}
\dist^{2}(\bF_{t+1},\bF_{\star}) & \le \dist^{2}(\bF_{t},\bF_{\star})-1.177\eta_{t}\mu \dist(\bF_{t},\bF_{\star})+ \frac{0.216}{\chi_f^2}  d_t^2 + 2\eta_{t}\xi.\label{eq:dist_induction}
\end{align}
We next prove the theorem by induction, where the base case is established trivially by the initial condition. By the induction hypothesis, the distance at the $t$-th iterate is bounded by
\begin{align*}
\dist(\bF_{t},\bF_{\star})\le \max\left\{d_{t},20\xi/\mu\right\}.
\end{align*}
To obtain the control of $\dist(\bF_{t+1},\bF_{\star})$, we split the discussion in two cases.
\begin{enumerate}
\item If $d_{t}\ge 20\xi/\mu$, or equivalently, $\xi\le0.05\mu d_{t}$, in view of \eqref{eq:dist_induction}, we have
\begin{align*}
\dist^{2}(\bF_{t+1},\bF_{\star}) & \overset{\mathrm{(i)}}{\le} d_{t}^{2} - 1.177\eta_{t}\mu d_{t} + \frac{0.216}{\chi_{f}^{2}} d_t^2+ 0.1\eta_{t}\mu d_{t} \\
\quad & = d_{t}^{2} -1.077 \eta_{t}\mu d_{t} +\frac{0.216}{\chi_{f}^{2}} d_t^2 \\
\quad & \overset{\mathrm{(ii)}}{\le} d_{t}^{2}- \frac{0.346}{\chi_{f}^2}d_{t}^{2}+\frac{0.216}{\chi_{f}^{2}} d_t^2 \\
\quad & = (1-0.13/\chi_f^2)d_{t}^{2},
\end{align*}
where $\mathrm{(i)}$ uses $\xi\le0.05\mu d_{t}$, and $\mathrm{(ii)}$ uses the condition \eqref{eq:stepsize_lower_bound}. We conclude that $\dist(\bF_{t+1},\bF_{\star})\le(1-0.13/\chi_{f}^{2})^{1/2}d_{t}$. 

\item If $0\le d_{t}<20\xi/\mu$, we have
\begin{align*}
\dist^{2}(\bF_{t+1},\bF_{\star}) & \le \left(\frac{20\xi}{\mu}\right)^{2}-1.177 \eta_{t}\mu \frac{20\xi}{\mu} +\frac{0.216}{\chi_f^2}  d_t^2  + 2\eta_{t}\xi \\
 & = \left(\frac{20\xi}{\mu}\right)^{2}-1.077 \eta_{t}\mu\frac{20\xi}{\mu}+\frac{0.216}{\chi_f^2}  d_t^2   \\
 & \le \left(\frac{20\xi}{\mu}\right)^{2}-\frac{1.077\sqrt{\sqrt{2}-1}}{2\chi_{f}^{2}} d_{t}\frac{20\xi}{\mu} + \frac{0.216}{\chi_f^2}  d_t^2 \\
 & \le  \left(\frac{20\xi}{\mu}\right)^{2} - 0.13 d_t\frac{20\xi}{\mu} \\
 & \le \left(\frac{20\xi}{\mu}\right)^{2},
\end{align*}
where the third line uses the condition \eqref{eq:stepsize_lower_bound}, and the last line holds since $d_t\ge 0$.
\end{enumerate}
In sum, we conclude 
\begin{align*}
\dist(\bF_{t+1},\bF_{\star}) \le \max\left\{(1-0.13/\chi_{f}^{2})^{\frac{t+1}{2}}0.02\sigma_{r}(\bX_{\star})/\chi_{f}, 20\xi/\mu\right\},
\end{align*}
which is the desired claim.

\section{Proof of Proposition~\ref{prop:mixedRIP_clean}} \label{proof:prop_mixedRIP_clean}
For $\bX_1$ and $\bX_2$ where $\bX_1 - \bX_2$ has rank at most $2r$, we have
\begin{align*}
|f(\bX_{1})-f(\bX_{2})| &= \Big| \| \cA(\bX_{1}-\bX_{\star})\|_1 -\| \cA(\bX_{2}-\bX_{\star})\|_1 \Big| \\
& \le \|\cA(\bX_{1}-\bX_{2})\|_1 \le \delta_{2}\|\bX_{1}-\bX_{2}\|_{\fro},
\end{align*}
where the second line follows from the inverse triangle inequality and the assumed rank-$2r$ mixed-norm RIP (cf.~Definition~\ref{def:mixed_rip}) of $\cA(\cdot)$. As a result, we have $L=\delta_2$. On the other end,
we note
\begin{align*}
f(\bX)-f(\bX_{\star}) = \| \cA(\bX-\bX_{\star})\|_1 \ge \delta_{1}\|\bX-\bX_{\star}\|_{\fro},
\end{align*}
where the first equality uses $f(\bX_\star)=0$ and the second inequality follows from the rank-$2r$ mixed-norm RIP; thus $\mu = \delta_1$.

\section{Proof of Proposition~\ref{prop:mixedRIP_corrupted}}\label{proof:prop_mixedRIP_corrupted}

For $\bX_1$ and $\bX_2$ with $\rank(\bX_1 - \bX_2) \le 2r$, we have
\begin{align*}
|f(\bX_{1})-f(\bX_{2})| &= \Big| \| \cA(\bX_{1}-\bX_{\star}) -\bw -\bs \|_1 -\| \cA(\bX_{2}-\bX_{\star})  -\bw -\bs \|_1 \Big| \\
& \le \|\cA(\bX_{1}-\bX_{2})\|_1 \le \delta_{2}\|\bX_{1}-\bX_{2}\|_{\fro},
\end{align*}
where the second line follows from the inverse triangle inequality and the rank-$2r$ mixed-norm RIP; hence $L=\delta_2$. 
For approximate restricted sharpness, note that
\begin{align*}
f(\bX)-f(\bX_{\star}) & = \|\cA(\bX-\bX_{\star})-\bw - \bs \|_{1} - \|\bw +\bs \|_{1} \\ 
 & \ge \|\cA(\bX-\bX_{\star}) - \bs \|_{1} -\|\bw\|_1 - \| \bs \|_{1} - \|\bw\|_1 \\ 
 & = \|\cA_{\cS^{c}}(\bX-\bX_{\star})\|_{1}+\|\cA_{\cS}(\bX-\bX_{\star})-\bs \|_{1} - \|\bs \|_{1} -2\|\bw\|_1 \\
 & \ge \|\cA_{\cS^{c}}(\bX-\bX_{\star})\|_{1} - \|\cA_{\cS}(\bX-\bX_{\star})\|_{1}-2\|\bw\|_1 \\
 & \ge \delta_3 \|\bX-\bX_{\star}\|_{\fro} -2\|\bw\|_1\\
 & \ge \delta_3 \|\bX-\bX_{\star}\|_{\fro} -2\sigma_w,
\end{align*}
where the second and the fourth lines follow from the triangle inequality, the third line follows from the definition of $\cS$, and the last line follows from the definition of the $\cS$-outlier bound and the noise upper bound $\|\bw\|_1 \le \sigma_w$. Therefore, we have $\mu=\delta_3$ and $\xi = 2\sigma_w$.

\end{document}